\pdfoutput=1
\documentclass[10pt]{scrartcl}

\usepackage[english]{babel} 
\usepackage[protrusion=true,expansion=true]{microtype} 
\usepackage{amsmath,amsfonts,amsthm,amssymb,color,epsfig,fullpage} 
\usepackage{verbatim,fancyvrb}
\usepackage{alltt}
\usepackage{tikz}
\usepackage{enumerate}
\usepackage{caption}
\usepackage{subfigure}
\usepackage[]{algorithm}
\usepackage{algpseudocode}
\usepackage{stmaryrd}
\usepackage{afterpage}
\usepackage{url}


\newtheorem{definition}{Definition}[section]
\newtheorem{theorem}[definition]{Theorem}
\newtheorem{corollary}[definition]{Corollary}
\newtheorem{proposition}[definition]{Proposition}
\newtheorem{lemma}[definition]{Lemma}
\newtheorem{example}[definition]{Example}
\newtheorem{remark}[definition]{Remark}


\def\N{{\mathbb N}}

\def\R{{\mathbb R}}


\DeclareMathOperator{\dtw}{\delta}

\newcommand{\commentout}[1]{}


\newcommand{\abs}[1]{\mathop{\left\lvert #1 \right\rvert}} 
\newcommand{\args}[1]{\mathop{\left( #1 \right)}} 

\newcommand{\norm}[1]{\mathop{\left\lVert #1 \right\rVert}}
\newcommand{\cbrace}[1]{\mathop{\left\{ #1 \right\}}}
\newcommand{\bracket}[1]{\mathop{\left[ #1 \right]}}

\newcommand{\argsS}[2]{\mathop{\left( #1 \right)#2}} 

\newcommand{\normS}[2]{\mathop{\left\lVert #1 \right\rVert#2}}

\DeclareMathOperator{\bd}{bd}         			


\renewcommand{\S}[1]{{\mathcal{#1}}}           	

\renewenvironment{cases}{%
\left\{\begin{array}{c@{\quad : \quad}l}}%
{%
\end{array}\right.}

\usepackage{framed}

\begin{document}
\title{On the Existence of a Sample Mean in Dynamic Time Warping Spaces} 

\author{Brijnesh Jain and David Schultz \\ Technische Universit\"at Berlin, Germany \\ e-mail: brijnesh.jain@gmail.com} 

\date{} 

\maketitle 

\paragraph*{Abstract.}
The concept of sample mean in dynamic time warping (DTW) spaces has been successfully applied to improve pattern recognition systems and generalize centroid-based clustering algorithms. Its existence has neither been proved nor challenged. This article presents sufficient conditions for existence of a sample mean in DTW spaces. The proposed result justifies prior work on approximate mean algorithms, sets the stage for constructing exact mean algorithms, and is a first step towards a statistical theory of DTW spaces. 

\begin{framed}
\begin{small}
\tableofcontents
\end{small}
\end{framed}

\section{Introduction}\label{sec:intro} 

Time series are time-dependent observations that vary in length and temporal dynamics (speed). Examples of time series data include acoustic signals, electroencephalogram recordings, electrocardiograms, financial indices, and internet traffic data.

Time series averaging aims at finding a typical time series that ``best'' represents a given sample of time series. First works on time series averaging started in the 1970ies with speech recognition as the prime application \cite{Rabiner1979}. Since then, research predominantly focused on devising averaging algorithms for improving pattern recognition systems and  generalizing centroid-based clustering algorithms  \cite{Abdulla2003,Gupta1996,Hautamaki2008,Kruskal1983,Niennattrakul2009,Petitjean2011,Petitjean2014,Petitjean2016,Sathianwiriyakhun2016,Soheily-Khah2016}. In contrast to averaging points in a Euclidean space, averaging time series is a non-trivial task, because the sample time series can vary in length and speed. To filter out these variations, the above cited averaging algorithms apply dynamic time warping (DTW). 

The most promising direction poses time series averaging as an optimization problem \cite{Cuturi2017,Hautamaki2008,Petitjean2011,Schultz2017,Soheily-Khah2016}: Suppose that $\S{T}$ is the set of all time series of finite length and $\S{X} = \args{x^{(1)}, \ldots, x^{(N)}}$ is a sample of $N$ time series $x^{(k)} \in \S{T}$. Then time series averaging amounts in minimizing the Fr\'echet function \cite{Frechet1948}
\begin{align}\label{eq:Frechet-Function}
F: \S{T}_m \rightarrow \R, \quad x \, \mapsto \; \frac{1}{N}\sum_{k=1}^N \delta^2\!\args{x, x^{(k)}},
\end{align}
where the solution space $\S{T}_m\subset \S{T}$ is the subset of all time series of length $m$ and $\delta(x,y)$ denotes the DTW distance between time series $x$ and $y$ \cite{Sakoe1978}. The global minimizers $z \in \S{T}_m$ of the Fr\'echet function $F(x)$ are the restricted sample means of sample $\S{X}$. The restriction refers to confining the length $m$ of the candidate solutions. 

Using Fr\'echet functions, the notion of a ``typical time series that best represents a sample'' has a precise meaning. A typical time series is any global minimizer of the Fr\'echet function. If a global minimum exists, it best represents a sample in the sense that it deviates least from all sample time series. The Fr\'echet function is motivated by the property that an arithmetic mean of real numbers minimizes the mean squared error from the sample numbers. Following Fr\'echet \cite{Frechet1948}, we can use this property to generalize the concept of sample mean to arbitrary distance spaces for which a well-defined addition is unknown and therefore an arithmetic mean can not be computed in closed form by a weighted sum.

Research on the Fr\'echet function as defined in Eq.~\eqref{eq:Frechet-Function} has the following shortcomings:
\begin{enumerate}
\itemsep0em
\item Existence of a global minimum of the Fr\'echet function has neither been proved nor challenged. Existence of an optimal solution depends on the particular choice of DTW distance and loss function. A restricted sample mean trivially exists if the DTW distance between two time series is always zero. Conversely, there are DTW spaces for which a sample mean does not always exist (cf.~Example \ref{example:non-existence}). Thus it is unclear whether existing heuristics indeed approximate a typical time series or unknowingly search for a phantom.

\item In experiments, the solution space $\S{T}_m$ is heuristically chosen. For example, if all sample time series are of length $m$ then a common choice is $\S{T}_m$. The intuition behind this choice is that the length of a restricted sample mean should ``best'' represent the lengths of the sample time series. This intuition may lead to solutions that fail to capture the characteristic properties of the sample time series as illustrated by Figure \ref{fig:ex_unrestricted_mean}. 

\item The Fr\'echet function only generalizes the sample mean. Neither weighted means nor other important measures of central location such as the sample median are captured by Eq.~\eqref{eq:Frechet-Function}. 
\end{enumerate} 

\begin{figure}[t]
\centering
\begin{tabular}{ccc}
restricted sample mean $z_3$ & restricted sample mean $z_4$ & variance \\
\includegraphics[width=0.3\textwidth]{./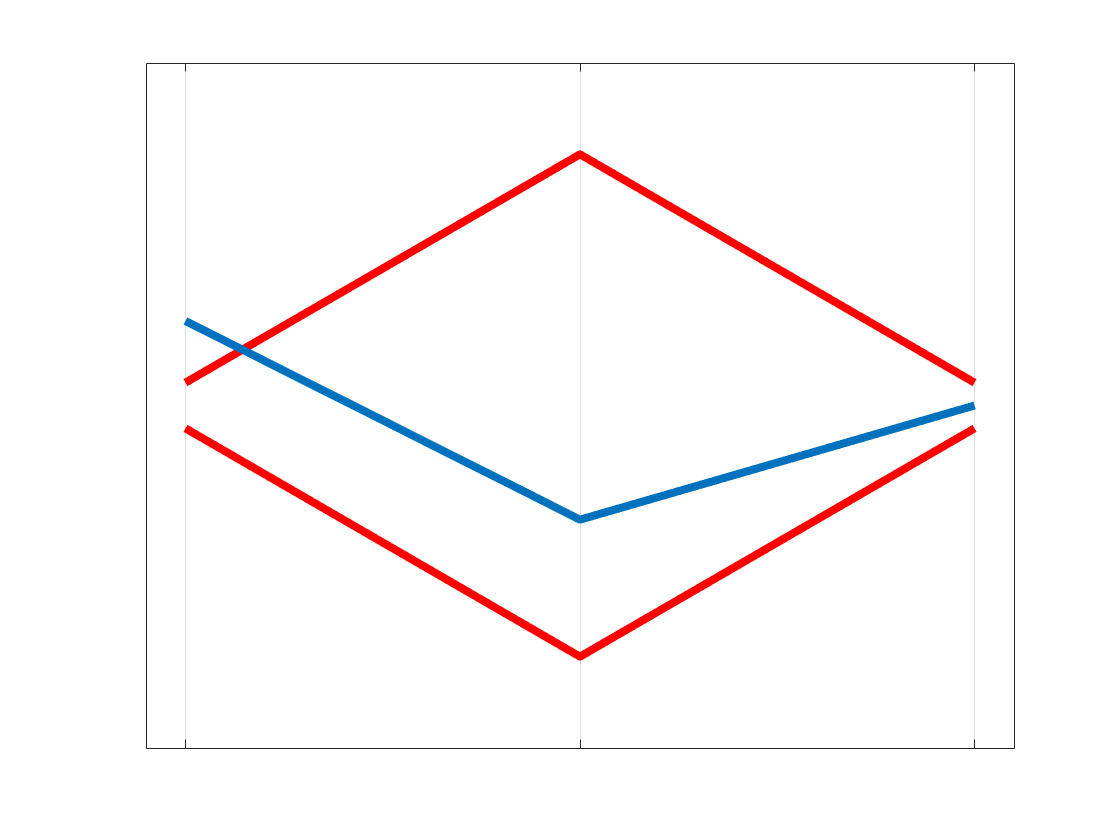}&
\includegraphics[width=0.3\textwidth]{./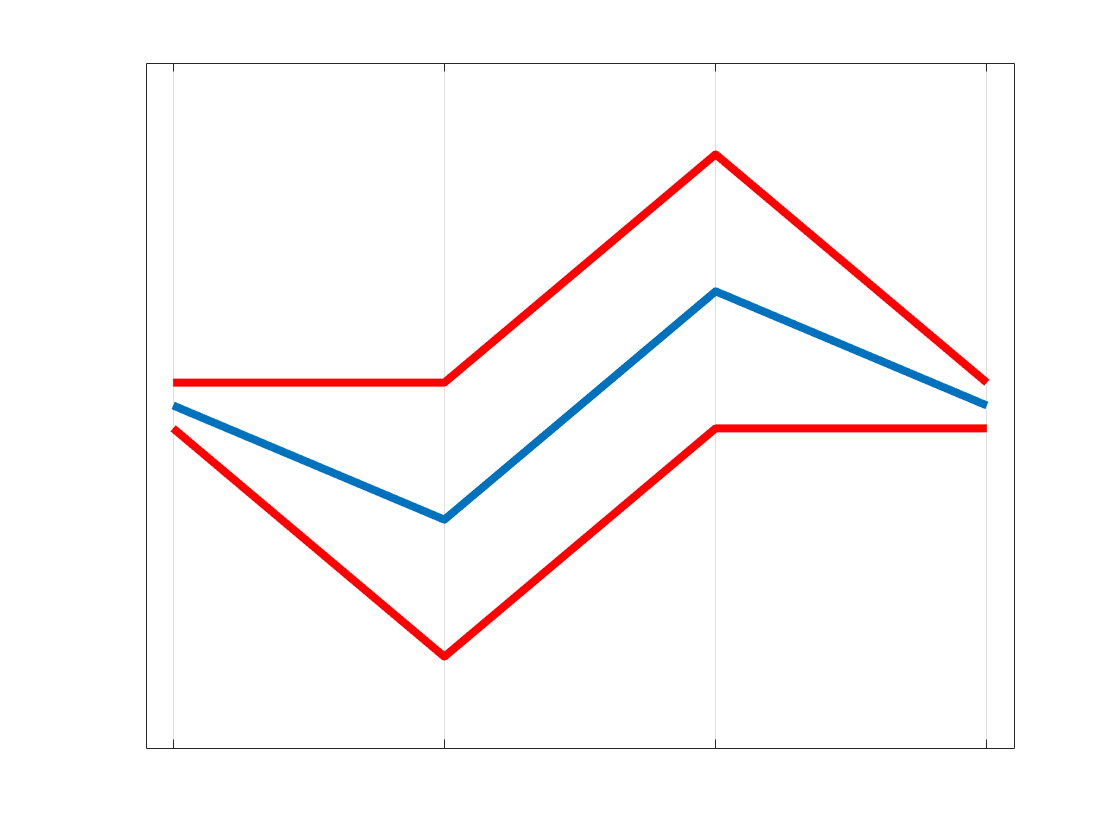}&
\includegraphics[width=0.3\textwidth]{./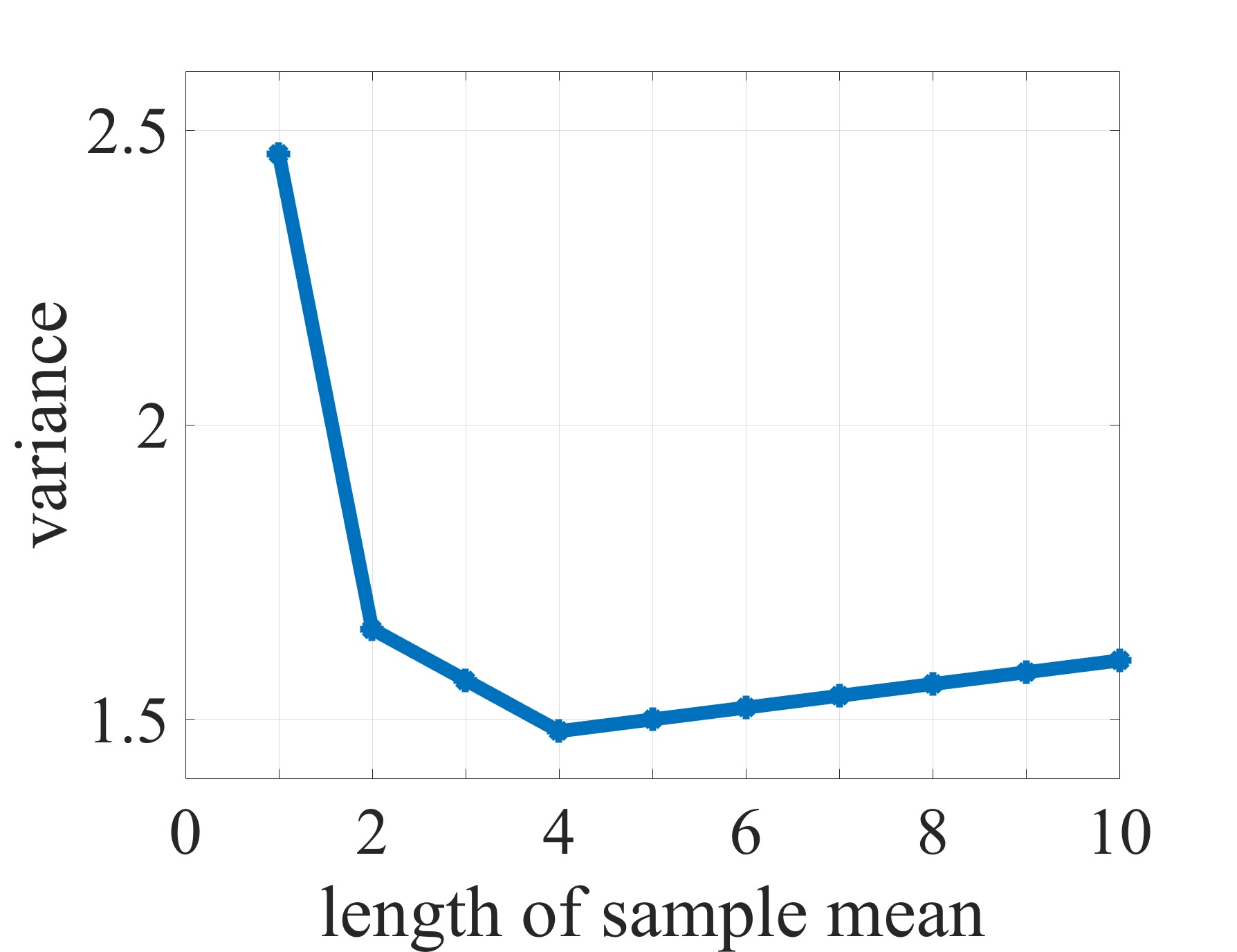}\\
(a) & (b) & (c)
\end{tabular}
\caption{Dependence of the variance (global minimum of the Fr\'echet function) on the parameter $m$ of the solutions space $\S{T}_m$. Plot (a) shows two time series (red) of length $3$ together with a restricted sample mean $z_3$ of length $3$ (blue). The restricted sample mean $z_3$ fails to properly capture the peak of one and the valley of the other sample time series. Plot (b) shows the same time series (red) warped onto the time axis of a restricted sample mean $z_4$ of length $4$ (blue). Warping increases the length of the red sample time series to length $4$. For the upper (lower) sample time series the first (third) element is copied. In contrast to $z_3$, the restricted sample mean $z_4$ captures the peak of one and the valley of the other sample time series. Plot (c) shows the variance $F(z_m)$ depending on the parameter $m \in \cbrace{1, \ldots, m}$ of the solution space $\S{T}_m$. We see that $F(z_4) \leq F(z_m)$ for all $m$ and in particular $F(z_4) < F(z_3)$. This shows that the better representational properties of $z_4$ are reflected by a lower variance.}
\label{fig:ex_unrestricted_mean}
\end{figure}

To address all three issues, we consider a more general formulation of the Fr\'echet function as given by
\begin{align*}
F: \S{U} \rightarrow \R, \quad x \mapsto \sum_{k=1}^N h_k\Big(\delta\args{x, x^{(k)}}\Big),
\end{align*}
where $\S{U} \subseteq \S{T}$ is the solution space and the $h_k:\R \rightarrow \R$ are loss functions. We recover the standard Fr\'echet function given in Eq.~\eqref{eq:Frechet-Function} by setting $h_k(u) = u^2$ for all $k$. To average the sum of squared DTW distances, we define $h_k(u) = u^2/N$, where $1/N$ is the uniform weight. To obtain a weighted version of Eq.~\eqref{eq:Frechet-Function}, we demand that $h_k(u) = w_k\cdot u^2$, where $w_k \in \R_+$ is a positive weight. The sample median is obtained by setting $h_k(u) = u$ for all $k$. Regardless of the choice of loss function, we refer to global minimizers of the general Fr\'echet function as sample means as umbrella term. 

We focus on two forms of solution sets $\S{U}$: (i) $\S{U} = \S{T}$ is the set of all time series of finite length and (ii) $\S{U} = \S{T}_m$ is the subset of time series of length $m$. We call $(i)$ the unrestricted and $(ii)$ the restricted form. Note that restrictions refer to the solution space $\S{U}$ only. Sample time series $x^{(k)}$ are always from $\S{T}$ and therefore may have arbitrary length. We assume no restrictions on the elements of the time series. The elements can be real values, feature vectors, symbols, trees, graphs, and mixtures thereof. 

This contribution presents sufficient conditions for existence of a sample mean in restricted and unrestricted form. We show that common DTW distances mentioned in the literature satisfy the proposed sufficient conditions. A key result is the Reduction Theorem stating that there is a sample-dependent bound $\rho$ on the length beyond which the Fr\'echet function can not be further decreased. For the two sample time series in Figure \ref{fig:ex_unrestricted_mean} the bound is $\rho = 4$. Hence, the restricted sample mean $z_4$ is also an unrestricted sample mean. 

This contribution has the following implications: Existence of a sample mean together with the necessary conditions of optimality proposed in \cite{Schultz2017} enable the formulation of exact mean algorithms \cite{Brill2017}. Existence of restricted sample means theoretically justify prior work \cite{Cuturi2017,Hautamaki2008,Petitjean2011,Rabiner1979,Schultz2017,Soheily-Khah2016} in the sense that the concept of a restricted sample mean is not a phantom but does in fact exist. Existence of the weighted mean justfies the soft-DTW approach proposed by \cite{Cuturi2017}. Finally, this contribution is a first step towards a statistical theory of DTW spaces in the spirit of a statistical theory of shape, tree, and graph spaces \cite{Dryden1998,Feragen2013,Ginestet2012,Huckemann2010,Jain2016,Kendall1984,Marron2014}.

The rest of this paper is structured as follows: Section 2 states the main results of this contribution and Section 3 concludes with a summary of the main findings and an outlook to further research. All proofs are delegated to the appendix.

\section{Existence of a Sample Mean via the Reduction Theorem}\label{sec:reduction-theorem+implications}

This section first introduces the DTW-distance and Fr\'echet functions. Then the Reduction Theorem is stated and its implications are presented. Finally, sufficient conditions of existence of a sample mean are proposed.

\paragraph*{Notations.} We write $\R_{\geq 0}$ for the set of non-negative reals. By $\N$ we denote the set of positive integers. We write $[n]$ to denote the set $\cbrace{1, \ldots, n}$ for a given $n \in \N$. Finally, $\S{S}^N = \S{S} \times \cdots \times \S{S}$ is the $N$-fold Cartesian product of the set $\S{S}$, where $N \in \N$.

\subsection{The Dynamic Time Warping Distance}

Suppose that $\S{A}$ is an attribute set. A \emph{time series} $x$ of \emph{length} $\ell(x) = m$ is a sequence $x = (x_1, \ldots, x_m)$ consisting of \emph{elements} $x_i \in \S{A}$ for every \emph{time point} $i \in [m]$. By $\S{T}_n$ we denote the set of all time series of length $n \in \N$ with elements from $\S{A}$. Then 
\[
\S{T} = \bigcup_{n \in \N} \S{T}_n
\] 
is the set of all time series of finite length with elements from $\S{A}$. 

Without further mention, we assume that the attribute set $\S{A}$ is given. Since we do not impose restrictions on the attribute set $\S{A}$, the above definition of time series covers a broad range of sequential data structures. For example, to represent real-valued univariate and multivariate time series, we use $\S{A} = \R$ and $\S{A} = \R^d$, resp., as attribute set. For text strings and biological sequences, the set $\S{A}$ is an alphabet consisting of a finite set of symbols. Further examples are time series of satellite images and time series of graphs as studied in anomaly detection. 

\medskip

Time series vary in length and speed. To filter out these variations, we introduce the technique of dynamic time warping.
\begin{definition}\label{definition:warping-path}
Let $m, n \in \N$. A \emph{warping path} of order $m \times n$ is a sequence $p = (p_1 , \dots, p_L)$ of $L$ points $p_l = (i_l,j_l) \in [m] \times [n]$ such that
\begin{enumerate}
\item $p_1 = (1,1)$ and $p_L = (m,n)$ \hfill\emph{(\emph{boundary conditions})}
\item $p_{l+1} - p_{l} \in \cbrace{(1,0), (0,1), (1,1)}$ for all $l \in [L-1]$ \hfill\emph{(\emph{step condition})} 
\end{enumerate}
\end{definition}
The set of all warping paths of order $m \times n$ is denoted by $\S{P}_{m,n}$. A warping path of order $m \times n$ can be thought of as a path in a $[m] \times [n]$ grid, where rows are ordered top-down and columns are ordered left-right. The boundary condition demands that the path starts at the upper left corner and ends in the lower right corner of the grid. The step condition demands that a transition from on point to the next point moves a unit in exactly one of the following directions: down, diagonal, and right. 

A warping path $p = (p_1, \ldots, p_L)\in \S{P}_{m,n}$ defines an alignment (or warping) between time series $x = (x_1, \ldots, x_m)$ and $y = (y_1, \ldots, y_n)$. Every point $p_l = (i_l,j_l)$ of warping path $p$ aligns element $x_{i_l}$ to element $y_{j_l}$. The \emph{cost} of aligning time series $x$ and $y$ along warping path $p$ is defined by
\begin{equation*}
c_p(x,y) = \sum_{l=1}^L d\args{x_{i_l}, y_{j_l}},
\end{equation*}
where $d: \S{A} \times \S{A} \rightarrow \R$ is a \emph{local distance function} on $\S{A}$. We demand that the local distance $d$ satisfies the following properties:
\begin{enumerate}
\itemsep0em
\item $d(a, a') \geq 0$
\item $d(a, a) = 0$
\item $d(a, a') = d(a', a)$
\end{enumerate}
for all $a, a'\in \S{A}$. As with the attribute set $\S{A}$ we tacitly assume that the local distance $d$ is given without further mention.

Now we are in the position to define the DTW-distance. We obtain the DTW-distance between two time series $x$ and $y$ by minimizing the cost $c_p(x,y)$ over all possible warping paths. 
\begin{definition}\label{def:DTW-distance}
Let $f:\R_{\geq 0} \rightarrow \R$ be a monotonous function. Let $x$ and $y$ be two time series of length $m$ and $n$, respectively. The \emph{DTW-distance} between $x$ and $y$ is defined by
\begin{equation*}
\dtw(x,y) = \min \cbrace{f\args{c_p(x,y)} \,:\, p \in \S{P}_{m,n}}.
\end{equation*}
An \emph{optimal warping path} is any warping path $p \in \S{P}_{m,n}$ satisfying $\dtw(x, y) = f\args{c_p(x,y)}$.
\end{definition}

The next example presents a common and widely applied DTW-distance in order to illustrates all components of Definition \ref{def:DTW-distance}.

\begin{example}\label{ex:Euclidean-DTW} The \emph{Euclidean DTW-distance} is specified by the attribute set $\S{A} = \R^d$, the squared Euclidean distance $d(x, y) = \normS{x-y}{^2}$ for all $x, y \in \S{A}$, and the square root function $f(x) = \sqrt{x}$ for all $x \in \R_{\geq 0}$.
\end{example}

Even if the underlying local distance function $d$ is a metric, the induced DTW-distance is generally only a pseudo-semi-metric satisfying 
\begin{enumerate}
\item $\dtw(x, y) \geq 0$
\item $\dtw(x, x) = 0$
\end{enumerate}
for all $x, y \in \S{T}$. Computing the DTW-distance and deriving an optimal warping path is usually solved by applying techniques from dynamic programming \cite{Sakoe1978}.

A \emph{DTW-space} is a pair $\args{\S{T}, \dtw}$ consisting of a set of time series of finite length and a DTW-distance $\delta$ defined on $\S{T}$. For the sake of convenience, we occasionally write $\S{T}$ to denote a DTW-space and tacitly assume that $\dtw$ is the underlying DTW-distance.

\subsection{Fr\'echet Functions}

Let $\args{\S{T}, \dtw}$ be a DTW-space. A \emph{loss function} is a monotonously increasing function of the form $h: \R_{\geq 0} \rightarrow \R$. A typical example of a loss function is the squared loss $h(u) = u^2$ for all $u \geq 0$.

\begin{definition}\label{definition:Frechet-Function}
Let $\S{X} = \args{x^{(1)}, \ldots, x^{(N)}} \in \S{T}^N$ be a sample of $N$ time series $x^{(k)}$ with corresponding loss function $h_k: \R_{\geq 0} \rightarrow \R$ for all $k \in [N]$. Then the function 
\begin{align*}
F: \S{T} \rightarrow \R, \quad x \mapsto \frac{1}{N}\sum_{i = 1}^N h_k\args{\dtw\args{x,x^{(k)}}}
\end{align*}
is the \emph{Fr\'echet function} of sample $\S{X}$ corresponding to the loss functions $h1, \ldots, h_N$. 
\end{definition}

We omit explicitly mentioning the corresponding loss functions of a Fr\'echet function if no confusion can arise. Note that Definition \ref{definition:Frechet-Function} refers to the unrestricted form as described in Section \ref{sec:intro}. We present some examples and assume that $\S{X} = \args{x^{(1)}, \ldots, x^{(N)}} \in \S{T}^N$ is a sample of $N$ time series.

\begin{example}\label{ex:Fp}
Let $p \geq 1$. The Fr\'echet function of $\S{X}$ corresponding to the loss functions $h_k(u) = u^p$ takes the form
\begin{align*}
F^p(x) = \sum_{k = 1}^N \dtw^p\args{x,x^{(k)}}.
\end{align*}
For $p = 1$ \emph{($p=2$)} the Fr\'echet function $F^p$ generalizes the concept of sample median (sample mean) in Euclidean spaces. 
\end{example}

\begin{example}\label{ex:wFp}
Let $w_k > 0$. The Fr\'echet function of $\S{X}$ corresponding to the loss functions $h_k(u) = w_k \cdot u^p$ is of the form
\begin{align*}
F^p(x) = \sum_{k = 1}^N w_k\dtw^p\args{x,x^{(k)}}.
\end{align*}
The function $F^p(x)$ is a weighted average of the sum of $p$-distances $\delta^p$. In the special case of $w_k = 1/N$, the function $F^p(x)$ averages the sum of $p$-distances.
\end{example}

Next, we consider the global minimizers -- if exist -- of a Fr\'echet function.

\begin{definition}
The \emph{sample mean set} of $\S{X}$ is the (possibly empty) set defined by 
\[
\S{F} = \cbrace{z \in \S{T} \,:\, F(z) \leq F(x) \text{ for all } x \in \S{T}}.
\]
The elements of $\S{F}$ are the \emph{sample means} of $\S{X}$. 
\end{definition}
A sample mean is a time series that minimizes the Fr\'echet function $F$. It can happen that the corresponding set $\S{F}$ is empty. In this case, a sample mean does not exist. Existence of a sample mean depends on the choice of DTW-distance and loss function. Moreover, if a sample mean exists, it may not be uniquely determined. In contrast, the \emph{sample variance}\footnote{The sample variance $F^*$ corresponding to loss $h(u) = u^2$ generalizes the sample variance in Euclidean spaces.} 
\[
F^* = \inf_{x \in \S{T}} F(x)
\]
exists and is uniquely determined, because the DTW-distance is bounded from below and the loss is monotonously increasing. Thus, existence of a sample mean means that the Fr\'echet function $F$ attains its infimum. 

The next example presents a DTW-space for which a sample mean may not always exist. This example is inspired by the edit distance for sequences and drastically simplified to directly convey the main idea.
\begin{example}\label{example:non-existence}
Let $\S{A} = \R$ be the attribute set with local cost function $d$ of the form
\[
d(a, a') = 
\begin{cases}
\argsS{a-a'}{^2} & a \neq 0 \text{ \emph{and} } a' \neq 0\\
1		   & a = 0 \text{ \emph{xor} } a' = 0\\
0		   & a = 0 \text{ \emph{and} } a' = 0  
\end{cases}
\] 
for all $a, a' \in \S{A}$. We assume that the function $f$ corresponding to the DTW distance is the identity such that
\[
\dtw(x, y) = \min \cbrace{c_p(x,y) \,:\, p \in \S{P}_{m,n}},
\]
for all time series $x$ and $y$ of length $m$ and $n$, respectively. Consider the sample $\S{X} = (x, y)$ consisting of two time series $x = (1, 1)$ and $y = (1, -1)$. As indicated by Figure \ref{fig:ex_nonexistence}, the Fr\'echet function 
\[
F(z) = \frac{1}{2}\delta(x, z) + \frac{1}{2}\delta(y,z) 
\]
never attains its infimum $0.5$. Thus $F(x)$ has no global minimum and therefore $\S{X}$ has no sample mean. 
\end{example}

\begin{figure}[t]
\centering
\includegraphics[width=0.99\textwidth]{./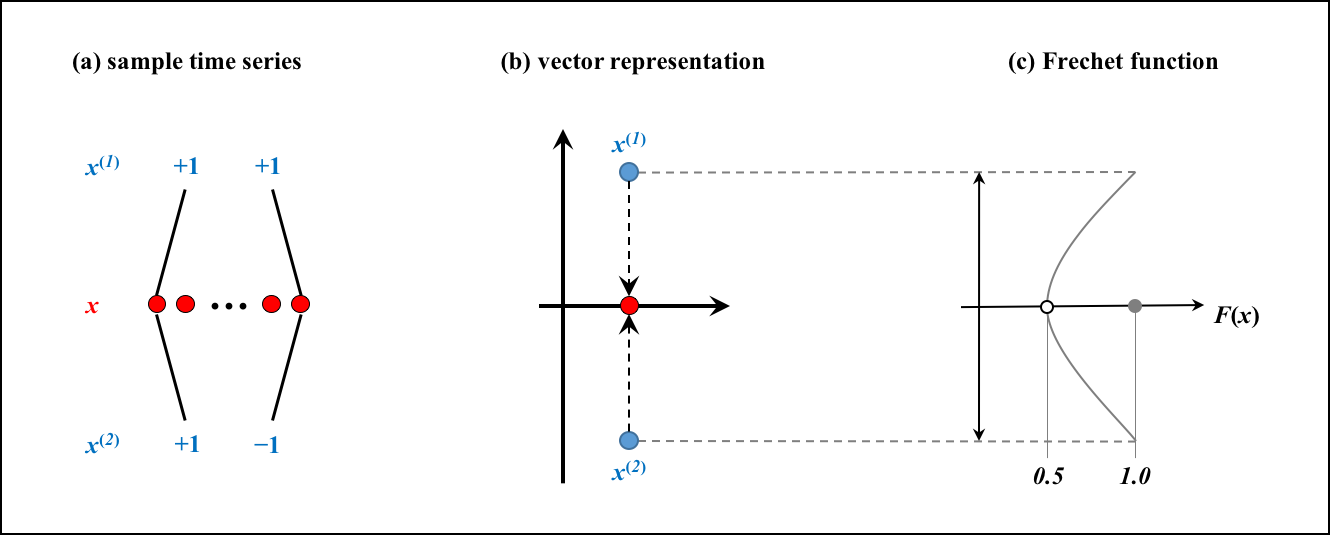}
\caption{Example of non-existence of a sample mean. Plot (a) shows a sample $\S{X} = \args{x^{(1)}, x^{(2)}}$ of two time series and a candidate solution $x$ of the Fr\'echet function $F$. The black lines indicate optimal warping paths between the sample time series and $x$. Any warping path must satisfy the boundary condition. Therefore, it is sufficient to consider the case that $x$ is of length $2$. Plot (b) depicts the sample time series and candidate solution $x$ of length $2$ as points in the vector space $\R^2$. Plot (c) shows the Fr\'echet function $F(x)$ of $\S{X}$ corresponding to the DTW distance of Example \ref{example:non-existence}. Suppose $(z_i)$ is a sequence of candidate solutions starting at $z_1 = x$ and converging to $x$ on a straight line as indicated by the dashed arrow in plot (b). Then the induced sequence $(F(z_i))$  is strictly monotonously decreasing and converges to but never attains its infimum $0.5$. Hence, the Fr\'echet function $F$ has not a global minimum, which implies that $\S{X}$ has not a sample mean.}
\label{fig:ex_nonexistence}
\end{figure}

\commentout{
Figure \ref{fig:ex_infimum} schematically depicts an example of a bounded function that does not attain its infimum and therefore does not have a global minimum. 
\begin{figure}[t]
\centering
\includegraphics[width=0.5\textwidth]{./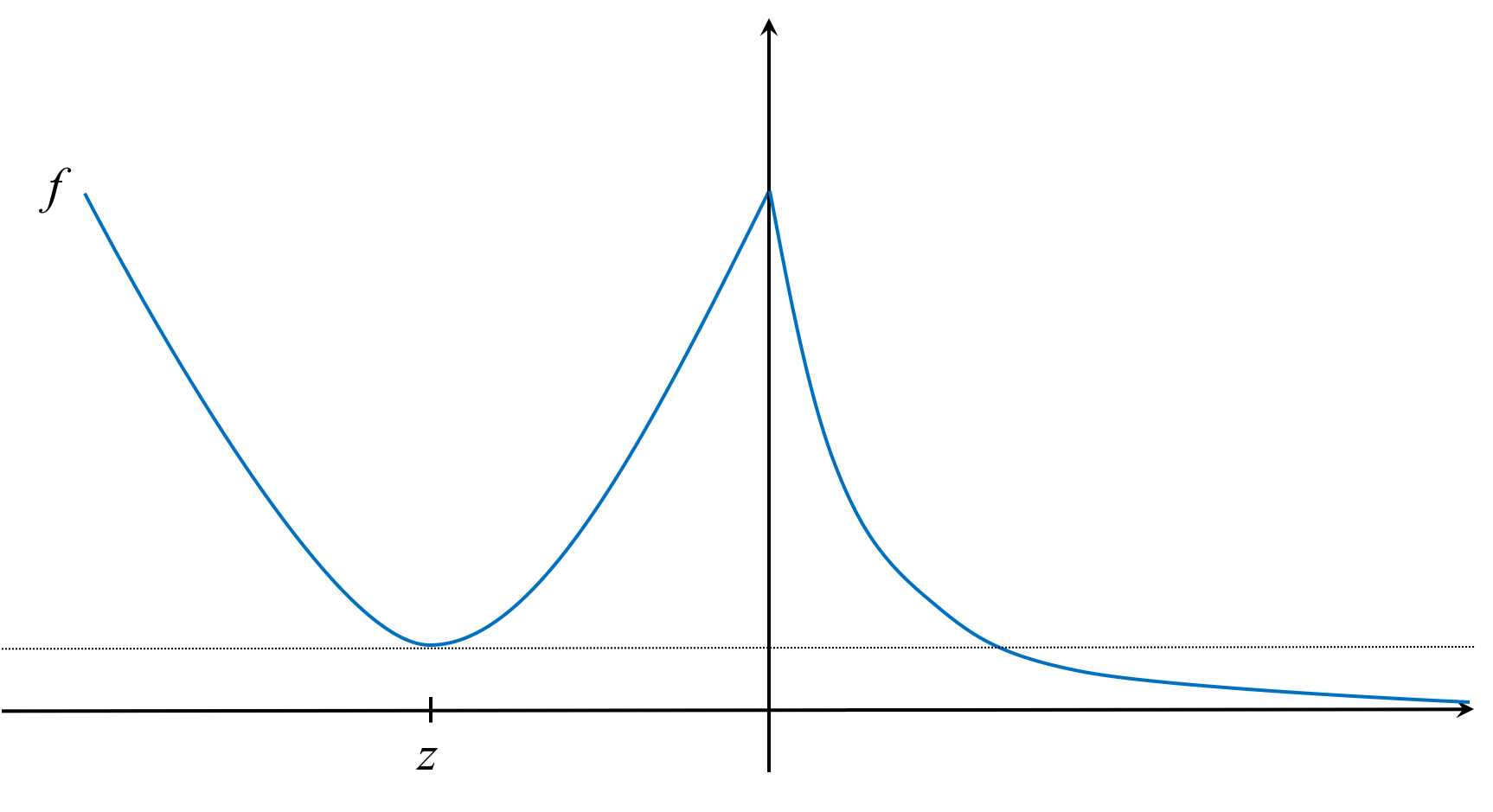}
\caption{Example of a function $f$ that does not attain its infimum and therefore does not have a minimum. The function has a local minimum at $z$ with $f(z) > 0$. The function is strictly monotonously decreasing for $x \geq 0$ and asymptotically converges but never attains its infimum zero. Hence, $f$ does not have a global minimum.}
\label{fig:ex_infimum}
\end{figure}
}

\subsection{Restricted and Unrestricted Fr\'echet Functions}

The Fr\'echet function $F:\S{T} \rightarrow \R$ of Definition \ref{definition:Frechet-Function} is in unrestricted form, because it is defined on the entire set $\S{T}$ and imposes no restrictions on the length of the sample mean. In restricted form, the function 
\[
F_m: \S{T}_m \rightarrow \R, \quad x \mapsto F(x).
\]
is the Fr\'echet function of $\S{X}$ restricted to the subset $\S{T}_m \subset \S{T}$ of all time series of length $m$. It is important to note that the lengths of the sample time series in $\S{X}$ may vary, but the length of the independent variable $x$ of $F_m(x)$ is fixed beforehand to value $m$. In line with the unrestricted form, the set 
\[
\S{F}_m = \cbrace{z \in \S{T}_m \,:\, F_m(z) \leq F_m(x) \text{ for all } x \in \S{T}_m}
\]
is the \emph{restricted sample mean set} of $\S{X}$ restricted to the subset $\S{T}_m$. Occasionally, we call the elements of $\S{F}_m$ restricted sample means and the elements of $\S{F}$ the (unrestricted) sample means. As for the unrestricted form, existence of a restricted sample mean depends on the choice of DTW-distance and loss function.

\subsection{The Reduction Theorem}

This section presents sufficient conditions for existence of a sample mean in restricted and unrestricted form. The approach is as follows: First, we present sufficient conditions for existence of restricted sample means. Second, under these assumptions we infer that an unrestricted sample mean also exists. The main tool for the second step is the Reduction Theorem. Proofs are delegated to the appendix.

\medskip

Suppose that $\S{X} = \args{x^{(1)}, \ldots, x^{(N)}} \in \S{T}^N$ is a sample of $N$ time series. The Reduction Theorem is based on the notion of \emph{reduction bound} $\rho(\S{X})$ of sample $\S{X}$. The exact definition of $\rho(\S{X})$ requires some technicalities and is fully spelled out in Section \ref{subsec:glued-graphs}. Here, we present a simpler definition that conveys the main idea and covers the relevant use cases in pattern recognition. For this, we assume that every sample time series $x^{(k)}$ of sample $\S{X}$ has length $\ell(x^{(k)}) \geq 2$. Then the \emph{reduction bound} of $\S{X}$ is defined by
\begin{align}\label{eq:reduction-bound-simple}
\rho(\S{X}) = \sum_{k=1}^N \ell\args{x^{(k)}} - \; 2(N-1).
\end{align}
In contrast to Eq.~\eqref{eq:reduction-bound-simple}, the exact definition of $\rho(\S{X})$ admits samples that contain trivial time series of length one. Equation \eqref{eq:reduction-bound-simple} shows that the reduction bound of a sample increases linearly with the sum of the lengths of the sample time series. The following results hold for arbitrary samples and assume the exact definition of a reduction bound as provided in Section \ref{subsec:glued-graphs}.

\begin{theorem}[Reduction Theorem]\label{theorem:reduction}
Let $F$ be the Fr\'echet function of a sample $\S{X}\in \S{T}^N$. Then for every time series $x \in \S{T}$ of length $\ell(x) > \rho(\S{X})$ there is a time series $x' \in \S{T}$ of length $\ell(x') = \ell(x) -1$ such that $F(x') \leq F(x)$. 
\end{theorem}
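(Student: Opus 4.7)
The plan is to take any $x \in \S{T}$ with $\ell(x) = m > \rho(\S{X})$ and construct $x' \in \S{T}_{m-1}$ by deleting a single element $x_{i^*}$, with $i^* \in \{2, \ldots, m-1\}$, so that $F(x') \leq F(x)$. The index $i^*$ will be chosen so that, for every $k \in [N]$, some optimal warping path $p^{(k)}$ realizing $\dtw(x, x^{(k)})$ can be transformed into a warping path $\tilde{p}^{(k)}$ in the shortened grid $[m-1] \times [\ell(x^{(k)})]$ of no larger cost. Monotonicity of $f$ in the DTW definition will then give $\dtw(x', x^{(k)}) \leq \dtw(x, x^{(k)})$ for every $k$, and monotonicity of each loss $h_k$ will upgrade these pointwise inequalities to $F(x') \leq F(x)$.

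Fix $k$ and set $n = \ell(x^{(k)})$. For the chosen optimal path $p^{(k)}$, let $r_i$ be the number of its points in row $i$ of the grid $[m] \times [n]$ and let $b_i \in \{0, 1\}$ indicate whether the step from row $i$ to row $i+1$ is diagonal. I will call an interior row $i \in \{2, \ldots, m-1\}$ \emph{removable} from $p^{(k)}$ if $r_i + b_{i-1} + b_i \leq 2$; equivalently, letting $j'$ and $j''$ denote the columns of the last point of $p^{(k)}$ in row $i-1$ and of the first point in row $i+1$, one has $j'' - j' \in \{0, 1\}$. Under this condition, deleting all points of $p^{(k)}$ in row $i$ and decrementing the first coordinate of every subsequent point by $1$ produces a sequence that still satisfies the boundary and step conditions, i.e., a valid warping path $\tilde{p}^{(k)}$ for $x'$ and $x^{(k)}$; its cost equals that of $p^{(k)}$ reduced by the non-negative contribution $\sum_l d(x_{i^*}, x^{(k)}_{j_l})$ of the deleted points. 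Hence, if a single $i^*$ is removable from every $p^{(k)}$ simultaneously, the $x'$ obtained by deleting $x_{i^*}$ satisfies $F(x') \leq F(x)$.

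Producing such an $i^*$ is a matter of pigeonhole once I prove that for each $k$ the number of interior rows non-removable from $p^{(k)}$ is at most $n - 2$. Granted this per-path bound, the total number of interior rows of $x$ that fail to be removable from at least one $p^{(k)}$ is at most $\sum_k (\ell(x^{(k)}) - 2) = \rho(\S{X}) - 2$, and the hypothesis $m - 2 > \rho(\S{X}) - 2$ then leaves at least one row in $\{2, \ldots, m-1\}$ simultaneously removable from all paths. The per-path bound is the main technical obstacle of the argument. I would handle it via a charging scheme that assigns each non-removable row a distinct column in $\{2, \ldots, n-1\}$: writing $[c_i, d_i]$ for the interval of columns visited in row $i$, the middle columns $c_i + 1, \ldots, d_i - 1$ are visited by no row other than $i$ whenever $r_i \geq 3$; when $r_i = 2$ and at least one adjacent step is diagonal, either $c_i$ or $d_i$ is exclusive to row $i$; and when $r_i = 1$ is non-removable (so both adjacent steps are diagonal), the single visited column is exclusive to row $i$. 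The exclusivity follows from the non-decreasing nature of the warping path in both coordinates, and the boundary conditions force column $1$ and column $n$ to be visited by rows $1$ and $m$ respectively, so every charged column lies in $\{2, \ldots, n-1\}$, whose cardinality is $n - 2$.
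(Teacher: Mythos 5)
Your proof is correct, and it reaches the Reduction Theorem by a genuinely more elementary route than the paper. The paper builds an entire graph-theoretic apparatus: it compactifies each optimal warping path, shows that compact warping graphs decompose into star forests, glues the $N$ graphs along the partition corresponding to $x$, and then proves existence of a \emph{redundant} splice node (one all of whose neighbours have degree at least two in every particle) by bounding the number of ``supported'' nodes per particle by $n_k-1$ and running a three-way case analysis on the number of non-trivial particles (Theorem~\ref{prop:existence-redundant-node}). Your removability condition $r_i+b_{i-1}+b_i\le 2$ is precisely the negation of the paper's supportedness, and your charging map is essentially the paper's injection $\phi_k$ from degree-one columns to the rows they support; but by restricting attention to interior rows and noting that the boundary conditions reserve columns $1$ and $n_k$ for rows $1$ and $m$, you obtain the sharper per-path bound $n_k-2$ directly, so the single union bound $\sum_k(n_k-2)=\rho(\S{X})-2<m-2$ replaces the paper's case distinction, and neither compactification nor the star-forest structure is needed. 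What the paper's machinery buys is reusability (the same formalism carries the remaining appendix results) and coverage of the degenerate case of length-one sample time series via its ``exact'' reduction bound, whereas your count tacitly assumes every $\ell\args{x^{(k)}}\ge 2$ so that $\rho(\S{X})=\sum_k\ell\args{x^{(k)}}-2(N-1)$ applies; this gap is cosmetic, since for $\ell\args{x^{(k)}}=1$ the unique warping path makes every interior row removable, so such a $k$ contributes nothing to the count and the argument survives against the exact bound. A small point in your favour: you thread the outer function $f$ through the inequality via its monotonicity, a step the paper's own proof glosses over by identifying the graph weight $\omega(H_k)$ with $\dtw\args{x,x^{(k)}}$.
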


The Reduction Theorem deserves some explanations. To illustrate the following comments we refer to Figures \ref{fig:ex_reduce01} -- \ref{fig:ex_longmean} with the following specifications: In these figures, we assume univariate time series with real values. The underlying distance is the Euclidean DTW-distance of Example \ref{ex:Euclidean-DTW}. The Fr\'echet functions of the different samples $\S{X} = \args{x^{(1)}, x^{(2)}}$ are given by 
\[
F(z) = \frac{1}{2}\delta\args{x^{(1)}, z} + \,\frac{1}{2}\delta\args{x^{(1)}, z}.
\]
The figures show warping paths by black lines connecting aligned elements of the time series to be compared. We make the following observations:

\paragraph*{\textmd{1.}} From the proof of the Reduction Theorem follows that every candidate solution $x$ whose length exceeds the reduction bound has an element that can be  removed without increasing the value $F(x)$. Such elements are said to be \emph{redundant}. Figure \ref{fig:ex_reduce01} schematically characterizes redundant elements of a time series. 

\paragraph*{\textmd{2.}} In general, removing a redundant element does not increase the Fr\'echet function. Figure \ref{fig:ex_reduce01} shows that removing a redundant element can even decrease the value of the Fr\'echet function.

\paragraph*{\textmd{3.}} The reduction bound of the sample in Figure \ref{fig:ex_reduce01} is given by
\[
\rho(\S{X}) = \ell\args{x^{(1)}}+\;\ell\args{x^{(2)}} - \;2(N-1) = 4 + 4 - 2 = 6.
\]
The length of time series $x$ is only $\ell(x) = 5 < \rho(\S{X})$. This shows that short candidate solutions whose lengths are bounded by the reduction bound may also have redundant elements that can be removed without increasing the value of the Fr\'echet function. Existence of a redundant element depends on the choice of warping path between $x$ and the sample time series. For short time series $x$, we can always find warping paths such that $x$ has no redundant elements. In contrast, long time series whose lengths exceed the reduction bound always have a redundant element, regardless which warping paths we consider.

\paragraph*{\textmd{4.}} Removing a non-redundant element of a candidate solution can increase the value of the Fr\'echet function. Figure \ref{fig:ex_reduce02} presents an example. 

\paragraph*{\textmd{5.}} The Reduction Theorem does not exclude existence of sample means whose lengths exceed the reduction bound of a sample. Figure \ref{fig:ex_longmean} presents an example for which a sample mean can have almost any length.

\bigskip

The Reduction Theorem and observations 1--4 form the basis for existence proofs in unrestricted form and point to a technique to improve algorithms for approximating a sample mean (if exists). Statements on the existence of a sample mean are presented in the next section. From observations 1--3 follows that a candidate solution $x$ of any length could be improved or at least shortened by detecting and removing redundant elements of $x$. This observation is not further explored in this article and left for further research.

\begin{samepage}
\begin{figure}[t]
\centering
\includegraphics[width=0.8\textwidth]{./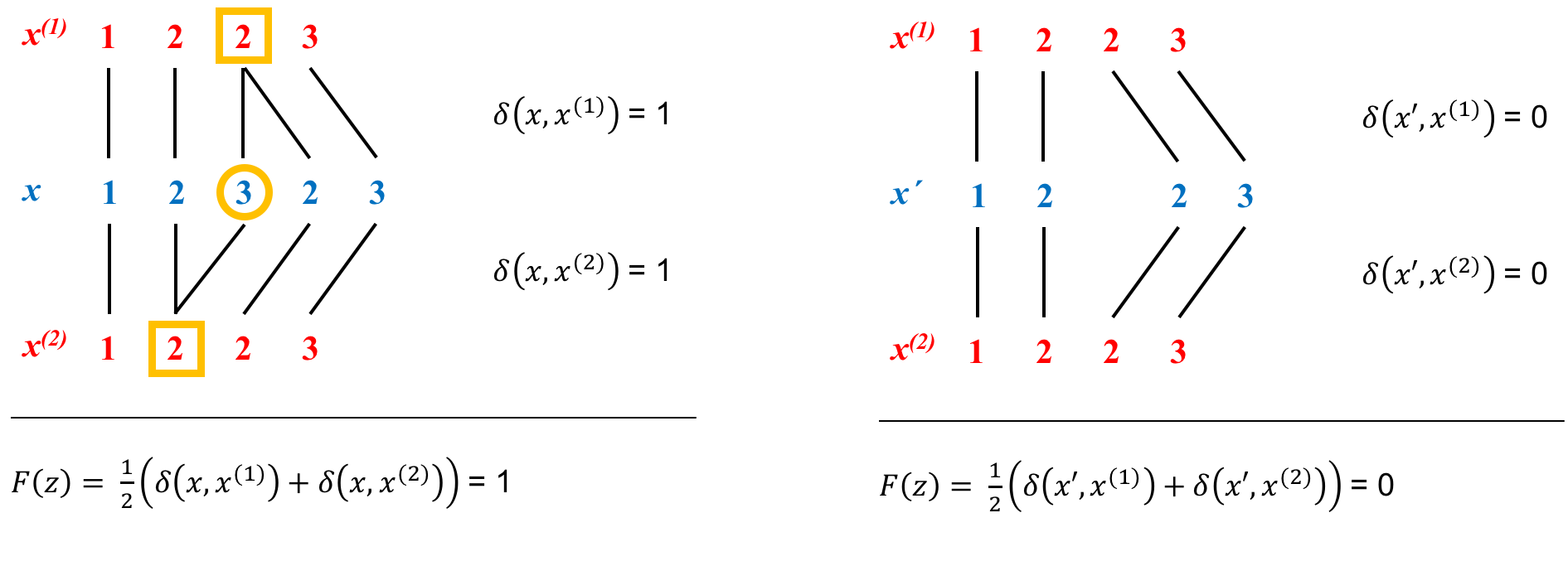}
\vspace{-0.5cm}
\caption{Schematic depiction of redundant elements. Shown are two sample time series $x^{(1)}$ and $x^{(2)}$ (red) and two further time series $x$ and $x'$ (blue). The third element of $x$ is redundant (enclosed by a circle). Redundant elements are characterized by the following property: An element $x_i$ of time series $x$ is redundant if every element of the sample time series connected to $x_i$ is also connected to another element of $x$. The elements of the sample time series connected to the third element of $x$ are enclosed by a square. Both squared elements are connected to two elements of $x$. The time series $x'$ is obtained from $x$ by removing its redundant element. The DTW-distances of the sample time series from $x$ are both one and from $x'$ are both zero (see main text for a specification of the DTW distance). Then we obtain $F(x) > F(x')$. This shows that removing the redundant element in $x$ does not increase the value of the Fr\'echet function. In this particular case, the value of the Fr\'echet function is even decreased.}
\label{fig:ex_reduce01}
\vspace{-0.2cm}
\rule{\textwidth}{0.5pt}
\end{figure}
\begin{figure}[h!]
\centering
\includegraphics[width=0.75\textwidth]{./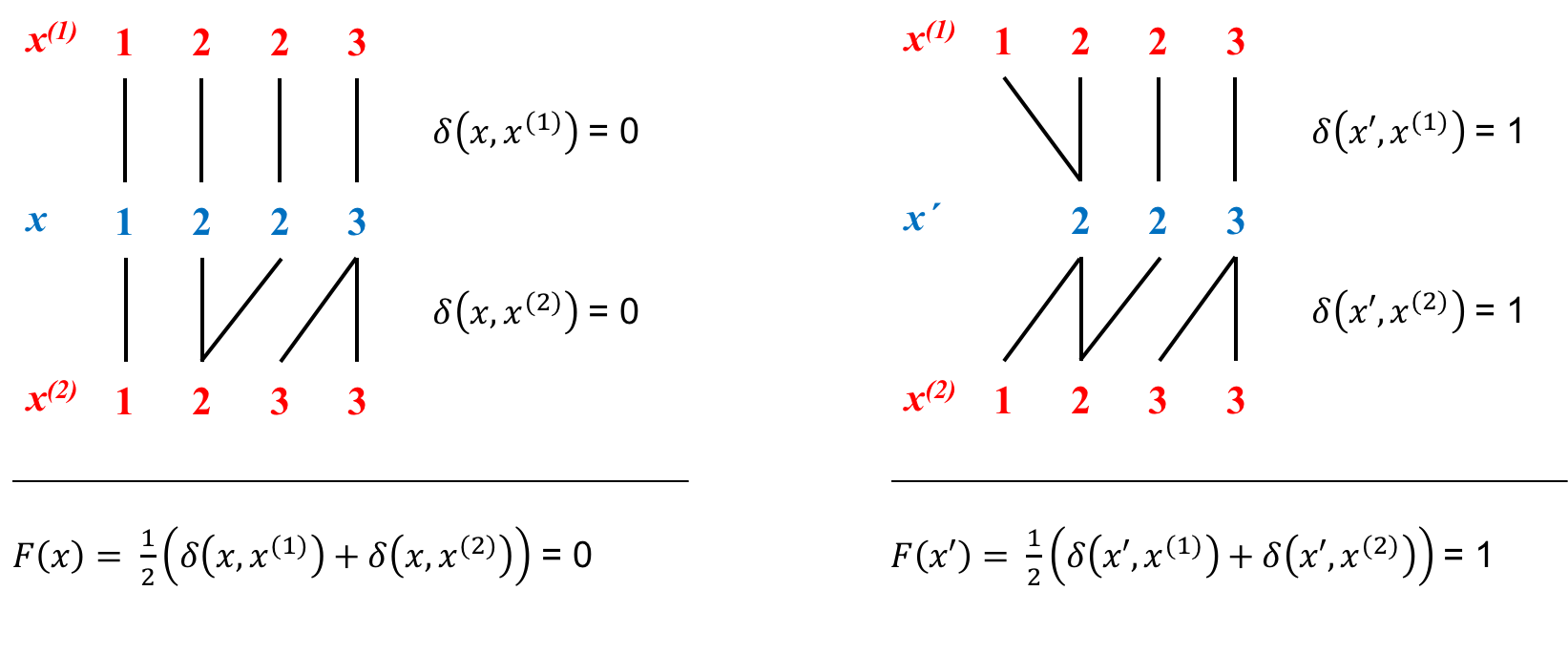}
\vspace{-0.5cm}
\caption{Removing a non-redundant element can increase the Fr\'echet function. The time series $x'$ is obtained from $x$ by removing the first element. The first element of $x$ is not redundant according to the characterization of redundant elements given in Figure \ref{fig:ex_reduce01}. The DTW-distances of the sample time series from $x$ are both zero and from $x'$ are both one (see main text for a specification of the DTW distance). This shows that $F(x) < F(x')$.}
\label{fig:ex_reduce02}
\vspace{-0.2cm}
\rule{\textwidth}{0.5pt}
\end{figure}
\begin{figure}[h!]
\centering
\includegraphics[width=0.65\textwidth]{./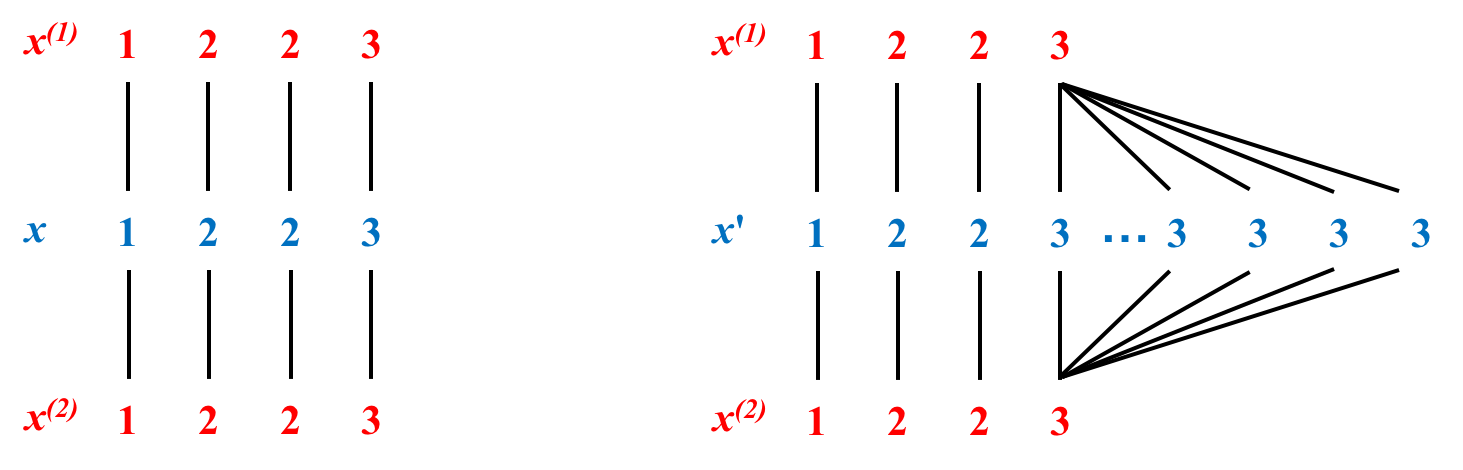}
\vspace{-0.2cm}
\caption{Restricted sample means of almost any length (see main text for a specification of the DTW distance). The time series $x$ is a sample mean of $\S{X}$, because $F(x) = 0$ is the lowest possible value. The time series $x'$ is obtained from the sample mean $x$ by appending arbitrarily many time points with element $3$. From $F(x')=0$ follows that $x'$ is also a sample mean of $\S{X}$.}
\label{fig:ex_longmean}
\end{figure}
\end{samepage}


\subsection{Sufficient Conditions of Existence}

In this section, we derive sufficient conditions of existence of a sample mean in restricted and unrestricted form. 

\medskip

The Reduction Theorem guarantees existence of a sample mean in unrestricted form if sample means exist in restricted forms. Thus, existence proofs in the general unrestricted form reduce to existence proofs in the simpler restricted form. This statement is proved in Corollary \ref{cor:existence:restricted-to-unrestricted}.

\begin{corollary}\label{cor:existence:restricted-to-unrestricted}
Let $\S{X}\in \S{T}^N$ be a sample and let $\rho \in \N$ be the reduction bound of $\S{X}$. Suppose that $\S{F}_m \neq \emptyset$ for every $m \in \bracket{\rho}$. Then $\S{X}$ has a sample mean.
\end{corollary}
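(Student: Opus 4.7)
The plan is to exhibit a concrete sample mean by selecting the best restricted sample mean across lengths $1,\ldots,\rho$ and then using the Reduction Theorem to show nothing longer can beat it.

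First I would use the hypothesis to pick, for every $m \in [\rho]$, some $z_m \in \S{F}_m$. Since $\bigl\{F(z_1), F(z_2), \ldots, F(z_\rho)\bigr\}$ is a finite set of real numbers, it attains its minimum at some index $m^\star \in [\rho]$; set $z^\star = z_{m^\star}$. The goal is to show that $F(z^\star) \leq F(x)$ for every $x \in \S{T}$, which would establish $z^\star \in \S{F}$ and finish the proof.

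Next I would split into two cases according to $\ell(x)$. In the easy case $\ell(x) \leq \rho$, the time series $x$ lies in $\S{T}_{\ell(x)}$, so by the restricted-minimality of $z_{\ell(x)}$ we have $F(z_{\ell(x)}) = F_{\ell(x)}(z_{\ell(x)}) \leq F_{\ell(x)}(x) = F(x)$, and by the choice of $m^\star$ we have $F(z^\star) \leq F(z_{\ell(x)})$; chaining gives $F(z^\star) \leq F(x)$. In the remaining case $\ell(x) > \rho$, I would apply the Reduction Theorem iteratively: given $x \in \S{T}$ with $\ell(x) > \rho$, it produces $x' \in \S{T}$ with $\ell(x') = \ell(x)-1$ and $F(x') \leq F(x)$. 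Repeating this $\ell(x)-\rho$ times yields a time series $y \in \S{T}_\rho$ satisfying $F(y) \leq F(x)$, and then the already-handled case gives $F(z^\star) \leq F(y) \leq F(x)$.

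The only subtle point, and arguably the main obstacle, is justifying the iterated application of the Reduction Theorem: after each reduction step the new time series $x'$ is a completely different object, but the reduction bound $\rho(\S{X})$ depends only on the sample $\S{X}$ and not on the candidate, so the hypothesis $\ell(x') > \rho$ remains available for each step with $\ell(x') > \rho$. This makes the induction on $\ell(x)-\rho$ clean, and no further technicalities are needed beyond the Reduction Theorem and the finiteness of $[\rho]$.
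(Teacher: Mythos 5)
Your proof is correct and uses the same essential ingredients as the paper's: iterate the Reduction Theorem to bring any candidate of length exceeding $\rho$ down to length $\rho$ without increasing $F$, then invoke the finitely many restricted means $z_1,\ldots,z_\rho$ to cover all shorter lengths. The paper phrases this as a proof by contradiction (assuming $\S{F}=\emptyset$ and deriving $F(x)<F_\rho^*\leq F(x)$), whereas you give the direct constructive version exhibiting $z^\star$; the substance is identical, and your observation that $\rho(\S{X})$ depends only on the sample (so the iteration is legitimate) is exactly the point that makes the paper's ``apply Theorem \ref{theorem:reduction} exactly $q$ times'' step valid.
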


It is not self-evident that existence of a class of restricted sample mean implies existence of a sample mean. To see this, we define the restricted (sample) variance by
\[
F_m^* = \inf_{x \in \S{T}_m} F_m(x).
\]
If $F_m$ attains its infimum, then $\S{X}$ has a restricted sample mean. Suppose that $\S{X}$ has a restricted sample mean for every $m \in \N$. Then 
\[
v_m = \min_{l \leq m} F_m^*.
\]
is the smallest restricted variance over all lengths $l \in [m]$. The sequence $(v_m)_{m \in\N}$ is bounded from below and monotonously decreasing. Therefore, the sequence $(v_m)_{m \in\N}$ converges to the unrestricted sample variance $F_*$. Then $\S{X}$ has a sample mean only if the sequence $(v_m)_{m \in\N}$ attains its infimum $F_*$. Corollary \ref{cor:existence:restricted-to-unrestricted} guarantees that the sequence $(v_m)_{m \in\N}$ indeed attains its infimum $F^*$ latest at $m = \rho(\S{X})$.

\medskip

Next, we present sufficient conditions of existence. The first result proposes sufficient conditions of existence of a restricted and unrestricted sample mean for time series (sequences) with discrete attribute values.
\begin{proposition}\label{prop:restricted-existence-discrete-case}
Let $\S{X} \in \S{T}^N$ be a sample. Suppose that $\S{A}$ is a finite attribute set. Then the following statements hold:
\begin{enumerate}
\itemsep0em
\item $\S{F}_m \neq \emptyset$ for every $m\in \N$. 
\item $\S{F}\neq \emptyset$.
\end{enumerate}
\end{proposition}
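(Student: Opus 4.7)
The plan is to reduce both statements to two simple observations: finiteness of $\S{T}_m$ when $\S{A}$ is finite, and the already-established Corollary~\ref{cor:existence:restricted-to-unrestricted}.

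First I would handle the restricted statement (1). If $\abs{\S{A}} < \infty$, then $\S{T}_m$ is in bijection with $\S{A}^m$ and hence $\abs{\S{T}_m} = \abs{\S{A}}^m < \infty$. For any sample $\S{X} \in \S{T}^N$ the Fr\'echet function $F_m: \S{T}_m \to \R$ is well-defined and real-valued (the DTW-distance is the minimum of finitely many costs $c_p(x, y)$, hence exists and is finite, and each $h_k$ is real-valued on $\R_{\geq 0}$). A real-valued function on a nonempty finite set attains its minimum, so $\S{F}_m \neq \emptyset$.

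For (2), I would simply invoke Corollary~\ref{cor:existence:restricted-to-unrestricted}. The reduction bound $\rho(\S{X})$ is a positive integer (since each $\ell(x^{(k)})$ is finite), and part (1) already guarantees $\S{F}_m \neq \emptyset$ for every $m \in \N$, in particular for every $m \in [\rho(\S{X})]$. Hence the hypotheses of the corollary are met and $\S{F} \neq \emptyset$.

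The main obstacle is essentially nonexistent: all the substantial work is packaged into the Reduction Theorem and its corollary, and into the definition of the DTW-distance as a minimum over the finite set $\S{P}_{m,n}$. The only care required is to notice that no further structure on $\S{A}$ (such as a metric or vector space) is needed — discreteness of $\S{A}$ alone forces $\S{T}_m$ to be finite, which suffices. If one wanted to avoid citing the simplified form of $\rho(\S{X})$ from Eq.~\eqref{eq:reduction-bound-simple}, one could note that Corollary~\ref{cor:existence:restricted-to-unrestricted} only requires the existence of the finite bound $\rho \in \N$ supplied by the exact definition in Section~\ref{subsec:glued-graphs}, which covers the case of samples containing trivial time series of length one as well.
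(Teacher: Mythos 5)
Your proof is correct and follows essentially the same route as the paper: finiteness of $\S{T}_m$ forces $F_m$ to attain its minimum, and part (2) is then Corollary~\ref{cor:existence:restricted-to-unrestricted}. (Your count $\abs{\S{A}}^m$ is in fact the right one; the paper's proof misstates it as $m^{\abs{\S{A}}}$, which is immaterial since only finiteness is used.)
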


The second result proposes sufficient conditions of existence of a restricted and unrestricted sample mean of uni- and multivariate time series with elements from $\S{A} = \R^d$. 
\begin{proposition}\label{prop:sufficient-conditions-of-existence}
Let $\S{X} \in \S{T}^N$ be a sample. Suppose that the following assumptions hold:
\begin{enumerate}
\item $\args{\S{A}, d}$ is a metric space of the form $\args{\R^d, \norm{\cdot}}$, where $\norm{\cdot}$ is a norm on $\R^d$. 
\item The loss functions $h_1, \ldots, h_N$ are continuous and strictly monotonously increasing.
\end{enumerate}
Then the following statements hold:
\begin{enumerate}
\itemsep0em
\item $\S{F}_m \neq \emptyset$ for every $m\in \N$. 
\item $\S{F}\neq \emptyset$.
\end{enumerate}
\end{proposition}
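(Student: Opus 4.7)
The plan is to reduce both parts to a single application of Weierstrass' extreme value theorem. For part (1), I identify $\S{T}_m$ with $\R^{dm}$ and aim to show that the restricted Fr\'echet function $F_m$ is continuous and coercive; the extreme value theorem then yields $\S{F}_m \neq \emptyset$. Part (2) follows immediately from Corollary~\ref{cor:existence:restricted-to-unrestricted}: part (1) delivers $\S{F}_m \neq \emptyset$ for every $m \in \bracket{\rho(\S{X})}$, which is exactly its hypothesis.

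Continuity is the easier half. For each sample element $x^{(k)}$ of length $n_k$, the set $\S{P}_{m,n_k}$ is finite, and each cost $c_p(\cdot,x^{(k)})$ is a finite sum of the continuous functions $x \mapsto \norm{x_{i_l} - x^{(k)}_{j_l}}$. Composing with the outer function $f$, taking a minimum over finitely many warping paths, composing with the continuous loss $h_k$, and summing over $k$ all preserve continuity, so $F_m$ is continuous on $\R^{dm}$.

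Coercivity is the main step. The boundary and step conditions force every warping path in $\S{P}_{m,n_k}$ to visit each row $i \in [m]$ at least once; hence, if $i^*$ is an index that maximizes $\norm{x_{i^*}}$ among the components of $x$, then for every $p$ and every $k$ the cost $c_p(x,x^{(k)})$ contains a term $\norm{x_{i^*} - x^{(k)}_{j_l}}$, which the reverse triangle inequality bounds below by $\norm{x_{i^*}} - \max_{j'} \norm{x^{(k)}_{j'}}$. This cost-level lower bound, passed through the monotonously increasing $f$ and the strictly monotonously increasing $h_k$, forces $F_m(x)$ to diverge as $\max_i \norm{x_i} \to \infty$, which is coercivity on $\R^{dm}$.

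The main obstacle I anticipate is the last link in the coercivity chain: to actually conclude $F_m(x) \to \infty$ one needs $f$ and each $h_k$ to be unbounded above, not merely monotone, so that large costs translate into large Fr\'echet values. This is automatic for the running examples ($f = \sqrt{\cdot}$ or $f = \id$, and $h_k(u) = w_k u^p$) but has to be extracted from, or added to, the standing conventions; a similar remark applies to the continuity of $f$ used implicitly in the continuity step.
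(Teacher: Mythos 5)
Your proof is correct in substance and reaches the same conclusion by a recognizably different route. The paper never proves coercivity of $F_m$ itself; instead it writes $F_m(x) = \min\cbrace{F_{\S{C}}(x) : \S{C} \in \S{P}_m}$, where a configuration $\S{C}$ fixes one warping path per sample series, shows each component function $F_{\S{C}}$ is continuous and coercive as a sum of nonnegative coercive terms, concludes that each $F_{\S{C}}$ attains a minimum, and then interchanges $\min_x$ with $\min_{\S{C}}$ over the finite set of configurations. You instead establish continuity and coercivity of $F_m$ directly, the latter via a lower bound on $c_p\args{x,x^{(k)}}$ that is uniform over all warping paths, using the fact that every warping path visits every row. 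Both are applications of the Weierstrass theorem for coercive continuous functions. Your version buys a single clean application of that theorem and, more importantly, makes explicit the point the paper glosses over: each summand $\norm{x_{i_l}-x^{(k)}_{j_l}}$ depends on only one block of $x\in\R^{dm}$ and is therefore not coercive on its own, so coercivity of the sum genuinely requires the row-covering property of warping paths. The paper's version avoids some bookkeeping with $f$ and the outer minimum, since each $F_{\S{C}}$ is a plain finite sum.

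On the obstacle you flag at the end: you are right, and you should know that it is not a defect of your route relative to the paper's. Continuity and strict monotonicity of $h_k$ do not imply unboundedness (take $h_k(u)=u/(1+u)$, which is continuous, strictly increasing, and bounded, so that $h_k\args{\norm{x}}$ is not coercive), and the paper's proof nevertheless asserts without justification that $h_k\args{\norm{x}_2}$ is coercive; it also silently treats $f$ as the identity. The extra hypothesis you identify --- that $f$ and each $h_k$ be unbounded above (equivalently, that $h_k\circ f$ be coercive on $\R_{\geq 0}$), plus continuity of $f$ for the continuity step --- is needed by both arguments; your proposal is the one that notices it.
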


The attribute set $\S{A}$ covers the case of univariate ($d=1$) and the case of multivariate ($d>1$) time series. The local cost function $d$ on $\S{A}$ is a metric induced by a norm on $\R^d$. Loss functions $h:\R_{\geq 0} \rightarrow \R$ of the form $h(u) =  w\cdot u^p$ are continuous and strictly monotonously increasing for $w > 0$ and $p \geq 1$. Thus, the sufficient conditions of Prop.~\ref{prop:sufficient-conditions-of-existence} cover customary DTW-spaces. We conclude this section with a remark on weighted means. 

\begin{remark}\label{cor:existence-of-weighted-mean}
Proposition \ref{prop:sufficient-conditions-of-existence} holds when we replace the loss functions $h_k$ by the loss functions $h'_k = w_k h_k$ with $w_k \in \R_{\geq 0}$ for all $k \in [N]$. \end{remark}

Note that only loss functions $h'_k = w_k h_k$ with positive weights $w_k > 0$ are strictly monotonously increasing. Hence, assumption (2) of Prop.~\ref{prop:sufficient-conditions-of-existence} is violated for loss functions $h_k'$ with zero-weights $w_k = 0$. Corollary \ref{cor:existence-of-weighted-mean} relaxes the condition of strictly positive weights to non-negative weights. For a proof of Remark \ref{cor:existence-of-weighted-mean} we refer to Section \ref{subsec:proofs}.

\section{Conclusion}
This article presents sufficient conditions for the existence of a sample mean in DTW spaces in restricted and unrestricted form. The sufficient conditions hold for common DTW distances reported in the literature. Key result is the Reduction Theorem stating that time series whose lengths exceed the reduction bound can be reduced to shorter time series without increasing the value of the Fr\'echet function. This result guarantees existence of a sample mean in unrestricted form if sample means exist in restricted form. The proof of the Reduction Theorem is framed into the theory of warping graphs. The existence proofs theoretically justify existing mean-algorithms and related pattern recognition applications in retrospect. The Reduction Theorem sets the stage for studying the unrestricted sample mean problem. Finally, existence of the sample mean sets the stage for constructing exact algorithms and a statistical theory of DTW spaces. The next step towards such a theory consists in studying under which conditions a sample mean is a consistent estimator of a population mean.

\bigskip

\paragraph*{\em\textbf{Acknowledgements.}} B.~Jain was funded by the DFG Sachbeihilfe \texttt{JA 2109/4-1}.

\begin{appendix}
\section{Theory of Warping Graphs}
\newtheorem*{theorem*}{Theorem}
\newtheorem*{proposition*}{Proposition}
\newtheorem*{corollary*}{Corollary}
\newtheorem*{lemma*}{Lemma}
\newtheorem*{remark*}{Remark}

This appendix develops a theory of warping graphs to prove the Reduction Theorem and all other results in this article. The line of argument follows a bottom-up approach. First, we introduce  warping chains to model abstract warping paths as given in Definition \ref{definition:warping-path} and derive their relevant local properties. Then we proceed to warping graphs that model the alignment of two time series by a warping path and derive global properties from local properties. We enhance warping graphs with node labels and define the notion of weight of a warping graph to model the DTW-distance. Finally, we glue warping graphs to model Fr\'echet functions, derive the Reduction Theorem, and prove the statements presented in Section \ref{sec:reduction-theorem+implications}.

\subsection{Warping Chains}

The basic constituents of a Fr\'echet function are time series and optimal warping paths. This section represents the linear order of time series by chains. Then we introduce the notion of warping chain to model abstract warping paths and study its local properties.

\medskip

Let $\S{V}$ be a partially ordered set with partial order $\leq_{\S{V}}$. Suppose that $i,j \in \S{V}$ are two elements with $i \leq_{\S{V}} j$. We write $i <_{\S{V}} j$ to mean $i \leq_{\S{V}} j$ and $i \neq j$. A linear order $\leq_{\S{V}}$ on $\S{V}$ is a partial order such that any two elements in $\S{V}$ are comparable: For all $i, j \in \S{V}$, we have either $i \leq_{\S{V}} j$ or $j \leq_{\S{V}} i$.

\begin{definition}
A \emph{chain} is a linearly ordered set. 
\end{definition}
A chain $\S{V}$ models the order of a time series $x = (x_1, \ldots, x_m)$, where element $i \in \S{V}$ refers to the positions of element $x_i$ in $x$. For the sake of convenience, we assume that the explicit notation of a chain $\S{V} = \cbrace{i_1, \ldots, i_m}$ always lists its elements in linear order $i_1 <_{\S{V}} \cdots <_{\S{V}} i_m$. We call $i_1$ the first and $i_m$ the last element in $\S{V}$. The first and last element of a chain are the boundary elements. Any element of chain $\S{V}$ that is not a boundary element is called an inner element of $\S{V}$.

A subset $\S{V}' \subseteq \S{V}$ is a subchain of $\S{V}$. Note that any subset of a chain is again a chain by transitivity of the linear order. Suppose that $i_p, i_q \in \S{V}$ such that $i_p \leq_{\S{V}} i_q$. Then the subchain $\S{V}' = \cbrace{i \in \S{V} \,:\, i_p \leq_{\S{V}} i \leq_{\S{V}} i_q}$ is said to be contiguous.

Let $\S{V} = \cbrace{i_1, \ldots, i_m}$ be a chain and let $\S{V}^* = \S{V} \cup \cbrace{*}$, where $*$ is a distinguished symbol denoting the void element. The successor $i_l^+$ and predecessor $i_l^-$ of element $i_l \in \S{V}$ are defined by
\begin{align*}
i_l^+ = \begin{cases}
i_{l+1} & 1 \leq l < L \\
* & l = L
\end{cases}
& \qquad \text{ and } \qquad
i_l^- = \begin{cases}
i_{l-1} & 1 < l \leq L \\
* & l = 1
\end{cases}.
\end{align*}

We assume that $\S{W}$ is another chain. The chains $\S{V}$ and $\S{W}$ induce a partial order on the product $\S{U} = \S{V} \times \S{W}$ by
\[
(i,j) \leq_{\S{U}} (r, s) \; \Leftrightarrow \; i \leq_{\S{V}} r \text{ and } j \leq_{\S{W}} s
\]
for all $(i,j), (r, s) \in \S{U}$. 
\begin{definition}
Let $\S{U} = \S{V} \times \S{W}$ be the product of chains $\S{V}$ and $\S{W}$. The \emph{successor map} on $\S{U}$ is a point-to-set map
\[
S_{\S{U}}: \S{U} \rightarrow 2^{\S{U}}, \quad (i,j) \;\mapsto\;
\cbrace{\args{i^+, j}, \args{i, j^+}, \args{i^+, j^+}} \; \cap \; \big(\S{V} \times \S{W}\big),
\]
where $2^{\S{U}}$ denotes the set of all subsets of $\S{U}$. 
\end{definition}
The successor map models the set of feasible warping steps for a given element $(i,j) \in \S{U}$. Intersection of the successor map with $\S{V} \times \S{W}$ ensures that elements with $i^+ = *$ or $j^+=*$ are excluded. The successor map sends $(i,j)$ to the empty set if $i$ and $j$ are the last elements of the respective chains $\S{V}$ and $\S{W}$. The next result shows that the successor map preserves the partial product order $\leq_{\S{U}}$ as well as the linear orders $\leq_{\S{V}}$ and $\leq_{\S{W}}$.

\begin{lemma}\label{lemma:order-preserving}
Let $\S{U} = \S{V} \times \S{W}$ be the product of chains $\S{V}$ and $\S{W}$. Suppose that $e = (i,j) \in \S{U}$ is an element with $S_{\S{U}}(e) \neq \emptyset$. Then the following order preserving properties hold:
\begin{enumerate}
\item $e \leq_{\S{U}} e'$ for all $e'\in S_{\S{U}}(e)$.
\item $i \leq_{\S{V}} r \text{ and } j \leq_{\S{W}} s$ for all $(r,s)\in S_{\S{U}}(i,j)$.
\end{enumerate}
\end{lemma}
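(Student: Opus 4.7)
The plan is to prove statement (2) first by unpacking the definition of the successor map and using the linear order on each chain, then obtain statement (1) as an immediate consequence of the definition of $\leq_{\S{U}}$ as the product order.

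The first observation I would establish is the following local monotonicity of the successor operation on a single chain: if $\S{V} = \cbrace{i_1, \ldots, i_m}$ is a chain and $i = i_l \in \S{V}$ has $i^+ \neq *$, then by construction $i^+ = i_{l+1}$, and from the assumption that the explicit listing of $\S{V}$ respects the linear order we have $i_l <_{\S{V}} i_{l+1}$, hence $i <_{\S{V}} i^+$. In particular $i \leq_{\S{V}} i^+$. The analogous statement holds for $\S{W}$.

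Next I would do a case analysis on the element $(r,s) \in S_{\S{U}}(i,j)$. By definition of the successor map, $(r,s)$ is one of the three points $(i^+, j)$, $(i, j^+)$, $(i^+, j^+)$, and the intersection with $\S{V} \times \S{W}$ guarantees that whenever a coordinate of the form $i^+$ or $j^+$ appears in $(r,s)$, that coordinate is a bona fide element of $\S{V}$ or $\S{W}$, i.e. it is not the void symbol $*$. In each of the three cases the local monotonicity observation above gives $i \leq_{\S{V}} r$ and $j \leq_{\S{W}} s$: in case $(r,s) = (i^+, j)$ we get $i <_{\S{V}} i^+ = r$ and $j = s$; in case $(r,s) = (i, j^+)$ we get $i = r$ and $j <_{\S{W}} j^+ = s$; in case $(r,s) = (i^+, j^+)$ both strict inequalities hold. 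This establishes statement (2).

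Statement (1) then follows directly: by the definition of the product partial order on $\S{U} = \S{V} \times \S{W}$, we have $(i,j) \leq_{\S{U}} (r,s)$ if and only if $i \leq_{\S{V}} r$ and $j \leq_{\S{W}} s$, which is exactly what statement (2) gives for every $(r,s) \in S_{\S{U}}(i,j)$. No step presents a real obstacle; the only mild subtlety is to use the intersection with $\S{V} \times \S{W}$ in the definition of $S_{\S{U}}$ to rule out the void symbol before applying the chain order, which is why the hypothesis $S_{\S{U}}(e) \neq \emptyset$ is needed only to ensure that at least one successor exists but is not required in the argument for an individual $(r,s)$.
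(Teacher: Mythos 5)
Your proof is correct and follows exactly the route the paper intends: the paper's own proof is just the one-line remark that the claim ``directly follows from the definitions of $\leq_{\S{U}}$ and $S_{\S{U}}$,'' and your case analysis on the three possible successors, together with the observation that the intersection with $\S{V}\times\S{W}$ excludes the void symbol, is precisely the unpacking of those definitions. Nothing further is needed.
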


\begin{proof} Directly follows from the definitions of $\leq_{\S{U}}$ and $S_{\S{U}}$.
\end{proof}

\begin{lemma}\label{lemma:warping-chain}
Let $\S{U} = \S{V} \times \S{W}$ be the product of chains $\S{V}$ and $\S{W}$. Let $\S{E} \subseteq \S{U}$ be a subset consisting of $L$ elements $e_1, \ldots, e_L \in \S{E}$ such that $e_{l+1} \in S_{\S{U}}(e_l)$ for all $l \in [L-1]$. Then $\S{E}$ is a chain.
\end{lemma}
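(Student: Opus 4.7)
The plan is to show that the restriction of the product partial order $\leq_{\S{U}}$ to $\S{E}$ is in fact a linear order, i.e.\ that any two elements of $\S{E}$ are comparable. Since a chain is by definition a linearly ordered set, this is exactly what has to be proved.

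First I would use Lemma \ref{lemma:order-preserving}(1) on each consecutive pair: from $e_{l+1} \in S_{\S{U}}(e_l)$ it follows immediately that $e_l \leq_{\S{U}} e_{l+1}$ for every $l \in [L-1]$. By transitivity of the product partial order, this upgrades to $e_l \leq_{\S{U}} e_{l'}$ whenever $l \leq l'$, so the enumeration $e_1, \ldots, e_L$ is $\leq_{\S{U}}$-increasing.

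Next I would verify that the inequalities are strict, so that the enumeration really witnesses comparability of distinct elements of $\S{E}$. Inspecting the definition of $S_{\S{U}}$, every element $(r,s) \in S_{\S{U}}(i,j)$ satisfies either $r = i^+ \neq i$ or $s = j^+ \neq j$, so $(r,s) \neq (i,j)$ and thus $e_l <_{\S{U}} e_{l+1}$. Combined with the previous step this gives $e_l <_{\S{U}} e_{l'}$ whenever $l < l'$; in particular the $e_l$ are pairwise distinct, so the list $e_1, \ldots, e_L$ is an enumeration of the whole set $\S{E}$ without repetition, and any two elements of $\S{E}$ are comparable under $\leq_{\S{U}}$.

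There is no real obstacle here: the argument is just bookkeeping on top of Lemma \ref{lemma:order-preserving} and the transitivity of $\leq_{\S{U}}$. The only point that deserves a little care is the distinctness of the $e_l$, because $\S{E}$ is introduced as a set and the hypothesis only says that the $L$ listed elements lie in $\S{E}$; once strict monotonicity under successors is noted, this concern disappears and $\S{E}$ inherits a linear order, hence is a chain.
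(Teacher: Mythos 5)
Your proposal is correct and follows essentially the same route as the paper: the paper's proof likewise invokes Lemma \ref{lemma:order-preserving} to get $e_l \leq_{\S{U}} e_{l+1}$ and then appeals to transitivity of the product order. Your extra remark on strictness (hence distinctness of the $e_l$) is a harmless refinement the paper leaves implicit.
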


\begin{proof}
The successor map is order preserving with respect to the product order $\leq_{\S{U}}$ according to Lemma \ref{lemma:order-preserving}. The assertion follows, because any order is transitive. 
\end{proof}

The chain $\S{E} \subseteq \S{U}$ in Lemma \ref{lemma:warping-chain} is compatible with the successor map $S_{\S{U}}$. We call such a chain a warping chain.

\begin{definition}
Let $\S{U} = \S{V} \times \S{W}$ be the product of chains $\S{V}$ and $\S{W}$. A \emph{warping chain} in $\S{U}$ is a chain $\S{E} = \cbrace{e_1, \ldots, e_L} \subseteq \S{W}$ such that $e_{l+1} \in S_{\S{U}}(e_l)$ for all $l \in [L-1]$. 
\end{definition}

The next result shows that warping chains preserve the order of the factor chains. 

\begin{proposition}\label{prop:order-preserving-factors}
Let $\S{V}$ and $\S{W}$ be chains. Let $\S{E}$ be a warping chain in $\S{V} \times \S{W}$. Then any pair of elements $(i, j), (r, s) \in \S{E}$ satisfies
\begin{align}\label{eq:prop:order-preserving-factors}
\args{i \leq_{\S{V}} r \; \wedge \; j \leq_{\S{W}} s} \; \vee \; \args{r \leq_{\S{V}} i \; \wedge \; s \leq_{\S{W}} j}.
\end{align}
\end{proposition}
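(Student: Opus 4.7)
The plan is to observe that this proposition is essentially an unfolding of the definition of the product order once we invoke Lemma \ref{lemma:warping-chain}. First I would point out that, by Lemma \ref{lemma:warping-chain}, the warping chain $\S{E}$ is a chain, i.e., linearly ordered as a subset of $\S{U} = \S{V} \times \S{W}$. The linear order it carries is the restriction of the product order $\leq_{\S{U}}$: by Lemma \ref{lemma:order-preserving}(1), consecutive elements satisfy $e_l \leq_{\S{U}} e_{l+1}$, and transitivity of $\leq_{\S{U}}$ propagates this to $e_1 \leq_{\S{U}} \cdots \leq_{\S{U}} e_L$. Hence any two elements of $\S{E}$ are comparable under $\leq_{\S{U}}$.

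Given $(i,j), (r,s) \in \S{E}$, I would then split into the two comparability cases. If $(i,j) \leq_{\S{U}} (r,s)$, the definition of $\leq_{\S{U}}$ immediately gives $i \leq_{\S{V}} r$ and $j \leq_{\S{W}} s$, i.e., the first disjunct of Eq.~\eqref{eq:prop:order-preserving-factors}. Symmetrically, if $(r,s) \leq_{\S{U}} (i,j)$, then $r \leq_{\S{V}} i$ and $s \leq_{\S{W}} j$, which is the second disjunct. One of the two must hold, which establishes the claim.

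Since the proof is almost tautological given the preceding lemmas, there is no real obstacle; the only conceptual point worth making explicit is the verification that the linear order inherited by $\S{E}$ as a chain is the restriction of the product order $\leq_{\S{U}}$ and not some auxiliary linear refinement. This is precisely what Lemma \ref{lemma:order-preserving} together with transitivity secures, so the argument reduces to a one-line unpacking of $\leq_{\S{U}}$.
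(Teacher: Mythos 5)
Your proposal is correct and follows essentially the same route as the paper: both arguments chain consecutive elements via Lemma~\ref{lemma:order-preserving}, use transitivity of $\leq_{\S{U}}$ to conclude that any two elements of $\S{E}$ are comparable in the product order, and then read off the disjunction from the definition of $\leq_{\S{U}}$. The only cosmetic difference is that the paper fixes indices $p \leq q$ with $e_p = (i,j)$, $e_q = (r,s)$ rather than phrasing it as a comparability case split.
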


\begin{proof}
Suppose that $\S{E} = \cbrace{e_1, \ldots, e_L}$. Then there are indices $p, q \in [L]$ such that $e_p = (i,j)$ and $e_q = (r,s)$. Without loss of generality, we assume that $p \leq q$. Let $u = q - p$. Repeatedly applying Lemma \ref{lemma:order-preserving} yields
\[
e_p \leq_{\S{U}} \cdots \leq_{\S{U}} e_{p+u} = e_q,
\]
where $\S{U} = \S{V} \times \S{W}$. Since any order is transitive, we have $e_p \leq_{\S{U}} e_q$. Then the assertion directly follows from the definition of the product order $\leq_{\S{U}}$.
\end{proof}

Equation \eqref{eq:prop:order-preserving-factors} is the order-preserving property (or non-crossing property) of a warping chain. Note that the order preserving property does not hold for all subsets of $\S{V} \times \S{W}$.

\subsection{Warping Graphs}

This section introduces the notion of warping graph that models the alignment of two time series by a warping path and studies its local and global structure.

\medskip

A graph is a pair $G = \args{\S{V}, \S{E}}$ consisting of a finite set $\S{V} \neq \emptyset$ of nodes and a set $\S{E} \subseteq \S{V} \times \S{V}$ of edges. A node $i \in \S{V}$ is incident with an edge $e \in \S{E}$, if there is a node $j\in \S{V}$ such that $e = (i,j)$ or $e = (j,i)$. Similarly, an edge $(i,j) \in \S{E}$ is said to be incident to node $i$ and to node $j$. The neighborhood of node $i \in \S{V}$ is the subset of nodes defined by $\S{N}(i) = \cbrace{j \in \S{V} \,:\, (i, j) \in \S{E} \text{ or } (j,i) \in \S{E}}$. The elements of $\S{N}(i)$ are the neighbors of $i$. The degree $\deg(i) = \abs{\S{N}(i)}$ of node $i$ in $G$ is the number of neighbors of $i$. 

A subgraph of graph $G = \args{\S{V}, \S{E}}$ is a graph $G' = \args{\S{V}', \S{E}'}$ such that $\S{V}' \subseteq \S{V}$ and $\S{E}' \subseteq \S{E}$. We write $G' \subseteq G$ to denote that $G'$ is a subgraph of $G$. A graph $G$ is connected, if for any two nodes $i, j \in \S{V}$ there is a sequence $i = u_1, u_2, \ldots, u_n = j$ of nodes in $G$ such that $u_{k+1} \in \S{N}(u_k)$ for all $k \in [n-1]$. A component $C$ of graph $G$ is a connected subgraph $C \subseteq G$ such that $C \subseteq C'$ implies $C = C'$ for every connected subgraph $C' \subseteq G$.

A graph $G = \args{\S{U}, \S{E}}$ is bipartite, if $\S{U}$ can be partitioned into two disjoint and non-empty subsets $\S{V}$ and $\S{W}$ such that $\S{E} \subseteq \S{V} \times \S{W}$. We write $G = (\S{V}, \S{W}, \S{E})$ to denote a bipartite graph with node partitions $\S{V}$ and $\S{W}$. Note that the order of the node partitions $\S{V}$ and $\S{W}$ in a bipartite graph $G = \args{\S{U}, \S{E}}$ matters. A bipartite chain graph is a bipartite graph whose node partitions are chains.

\begin{definition}
A bipartite chain graph $G = \args{\S{V}, \S{W}, \S{E}}$ with node partitions $\S{V} = \cbrace{i_1, \ldots, i_m}$ and $\S{W} = \cbrace{j_1, \ldots, j_n}$ is a \emph{warping graph} of size $m \times n$ if 
\begin{enumerate}
\item $\args{i_1,j_1}, \args{i_m,j_n} \in \S{E}$ \hfill\emph{(\emph{boundary condition})}
\item $\S{E}$ is a warping chain in $\S{V} \times \S{W}$ \hfill\emph{(\emph{step condition})}, 
\end{enumerate}
\end{definition}
The set of all warping graphs of size $m \times n$ is denoted by $\S{G}_{m,n}$. If $G = \args{\S{V}, \S{W}, \S{E}}$ is a warping graph, we briefly write $S_G$ to denote the successor map $S_{\S{V} \times \S{W}}$ and $\leq_G$ to denote the induced product order $\leq_{\S{V}\times \S{W}}$. The following result is a direct consequence of the boundary and step conditions:

\begin{proposition}\label{prop:isolated-nodes}
Every node in a warping graph has a neighbor. 
\end{proposition}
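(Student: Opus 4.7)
The plan is to show that the sequence of first coordinates of the warping chain $\S{E}$ visits every element of $\S{V}$, so that every $i \in \S{V}$ has at least one neighbor in $\S{W}$; the symmetric argument then handles nodes in $\S{W}$. First I would use the boundary condition together with the order-preserving property from Proposition~\ref{prop:order-preserving-factors} to identify the endpoints of the chain $\S{E} = \cbrace{e_1, \ldots, e_L}$: since $(i_1, j_1)$ is the $\leq_G$-minimum and $(i_m, j_n)$ is the $\leq_G$-maximum of $\S{V} \times \S{W}$, and both belong to $\S{E}$, no other element of $\S{E}$ can precede the former or succeed the latter in the chain order. Hence $e_1 = (i_1, j_1)$ and $e_L = (i_m, j_n)$.

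Writing $e_l = (p_l, q_l)$, the step condition together with the definition of the successor map $S_G$ forces $p_{l+1} \in \cbrace{p_l, p_l^+}$ and $q_{l+1} \in \cbrace{q_l, q_l^+}$ for every $l \in [L-1]$. The induced sequence $p_1, \ldots, p_L$ therefore starts at $i_1$, ends at $i_m$, and advances at most one position along $\S{V}$ at each step. Because $\S{V}$ is enumerated without gaps as $i_1 <_{\S{V}} \cdots <_{\S{V}} i_m$, such a sequence must pass through every element of $\S{V}$; consequently, each $i_k \in \S{V}$ arises as the first coordinate of some edge $e_l \in \S{E}$, which supplies the required neighbor $q_l \in \S{W}$. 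Applying the identical reasoning to the second coordinates covers every node of $\S{W}$.

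The only delicate point I anticipate is the initial identification $e_1 = (i_1, j_1)$ and $e_L = (i_m, j_n)$: one has to combine the non-crossing conclusion of Proposition~\ref{prop:order-preserving-factors} with the fact that any element of $\S{E}$ is $\leq_G$-comparable to both boundary edges, so there is no room in $\S{V} \times \S{W}$ for an edge to lie strictly below $(i_1, j_1)$ or strictly above $(i_m, j_n)$. Once this is nailed down, the remainder is a routine induction over the warping chain's steps.
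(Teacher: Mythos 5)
Your argument is correct and is exactly the reasoning the paper has in mind: it states Proposition~\ref{prop:isolated-nodes} without proof as ``a direct consequence of the boundary and step conditions,'' and your discrete intermediate-value argument on the coordinate sequences of the warping chain $\S{E}$, anchored by identifying $e_1 = (i_1,j_1)$ and $e_L = (i_m,j_n)$ via the product order, is the standard way to spell that consequence out. No gaps.
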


We show that the neighborhood of a node of one partition of a warping graph is a contiguous chain of the other partition.

\begin{proposition}\label{prop:contiguous-neighborhood}
Let $G$ be a warping graph with node partitions $\S{Z}$ and $\S{Z}'$. Suppose that $i \in \S{Z}$ is a node. Then the neighborhood $\S{N}(i)$ of a node in $i \in \S{Z}$ is a contiguous subchain of $\S{Z}'$.
\end{proposition}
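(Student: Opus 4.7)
Write $\S{E} = \cbrace{e_1, \ldots, e_L}$ in linear order, with $e_l = (i_l, j_l)$ and $e_{l+1} \in S_G(e_l)$ for every $l \in [L-1]$. It suffices to treat the case $\S{Z} = \S{V}$ and $\S{Z}' = \S{W}$; the case $\S{Z} = \S{W}$ is symmetric after swapping the roles of the two factors. Fix $i \in \S{V}$ and define the index set
\[
L_i \;=\; \cbrace{l \in [L] \,:\, i_l = i}.
\]
By Proposition \ref{prop:isolated-nodes}, $L_i$ is nonempty. The neighborhood we are after is $\S{N}(i) = \cbrace{j_l \,:\, l \in L_i}$, so it will be enough to show that $L_i$ is a contiguous block of indices and that the corresponding $j$-values are $\leq_{\S{W}}$-consecutive in $\S{W}$.

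The first step — and the place where the key geometric content lives — is to prove that $L_i$ is a contiguous range $\cbrace{p, p+1, \ldots, q}$, where $p = \min L_i$ and $q = \max L_i$. Take any $l$ with $p \leq l \leq q$. Applying the order-preserving property of warping chains (Proposition \ref{prop:order-preserving-factors}) to the pairs $e_p, e_l$ and $e_l, e_q$ yields $i_p \leq_{\S{V}} i_l \leq_{\S{V}} i_q$. Since $i_p = i_q = i$, this forces $i_l = i$, hence $l \in L_i$. This is the one non-trivial step: it formalises the intuition that once the first coordinate leaves the value $i$, it can never return.

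The second step pins down the $j$-coordinates on this block. For every $l \in \cbrace{p, \ldots, q-1}$, the successor condition gives $e_{l+1} \in S_G(e_l) \subseteq \cbrace{(i_l^+, j_l),\, (i_l, j_l^+),\, (i_l^+, j_l^+)}$, and the already established $i_{l+1} = i = i_l$ rules out the two options that advance the first coordinate. Therefore $e_{l+1} = (i_l, j_l^+)$, in particular $j_l^+ \neq *$ and $j_{l+1} = j_l^+$. Iterating this identity yields $j_p <_{\S{W}} j_{p+1} <_{\S{W}} \cdots <_{\S{W}} j_q$ with each $j_{l+1}$ the immediate successor of $j_l$ in $\S{W}$.

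Combining the two steps, $\S{N}(i) = \cbrace{j_p, j_{p+1}, \ldots, j_q\,}$ is exactly the set of elements of $\S{W}$ lying between $j_p$ and $j_q$ in the linear order, which is the definition of a contiguous subchain of $\S{W}$. I expect Step 1 to be the main obstacle: it is the only place where a global property of the warping chain is invoked, and without it one could in principle reach back to $i$ after leaving it. Steps 2 and 3 are purely local bookkeeping based on the explicit form of the successor map.
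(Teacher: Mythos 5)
Your proof is correct, but it follows a genuinely different route from the paper's. The paper argues by contradiction: assuming a gap $j$ strictly between two neighbours $j', j''$ of $i$, it invokes Proposition~\ref{prop:isolated-nodes} to produce an edge $(i',j)$ with $i' \neq i$ and then exhibits a crossing pair of edges contradicting the non-crossing property of Proposition~\ref{prop:order-preserving-factors}. You instead walk directly along the warping chain: the index set $L_i$ is a contiguous block because $e_p \leq_G e_l \leq_G e_q$ forces $i_l = i$ by antisymmetry, and on that block the explicit form of the successor map forces $e_{l+1} = (i_l, j_l^+)$. Your argument is constructive and yields strictly more than the statement asks for --- the edges incident to $i$ occur consecutively in the warping chain and the neighbours are obtained by unit steps in $\S{W}$ --- which is essentially the local picture the paper later re-derives in Lemmas~\ref{lemma:star-graph-decomposition:1} and~\ref{lemma:star-graph-decomposition:2}; the paper's contradiction argument is shorter but only excludes gaps. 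One small citation point: in your Step~1 what you actually need is the \emph{directed} inequality $e_p \leq_G e_l$ for $p \leq l$, which comes from iterating Lemma~\ref{lemma:order-preserving} along the chain (as is done inside the proof of Proposition~\ref{prop:order-preserving-factors}); the statement of Proposition~\ref{prop:order-preserving-factors} itself is only a symmetric disjunction for an unordered pair and does not by itself orient the inequality. This is a matter of attribution, not a gap.
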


\begin{proof}
Suppose that the warping graph is of the form $G = \args{\S{V}, \S{W}, \S{E}}$. Without loss of generality, we assume that $\S{Z} = \S{V}$ and $\S{Z}' = \S{W}$. Then we have $i \in \S{V}$. The assertion trivially holds for $\abs{\S{N}(i)} = 1$. Suppose that $\abs{\S{N}(i)} > 1$. We assume that $\S{N}(i)$ is not contiguous. Then there are elements $j', j'' \in \S{N}(i)$ and $j \in \S{W} \setminus \S{N}(i)$ such that $j' \leq_G j \leq_G j''$. From Prop.~\ref{prop:isolated-nodes} follows that there is a node $i' \in \S{V}\setminus \cbrace{i}$ such that $(i',j) \in \S{E}$. 

Two cases can occur: (1) $i' <_{\S{V}} i$ and (2) $i <_{\S{V}} i'$. It is sufficient to consider the first case $i' <_{\S{V}} i$. The 
proof of the second case is analogue. By construction, there are edges $(i',j)$ and $(i,j')$ such that $i' <_{\S{V}} i$ and $j' <_{\S{W}} j$. These relationships violate the order preserving property of a warping chain given in Eq.~\eqref{eq:prop:order-preserving-factors} of Prop.~\ref{prop:order-preserving-factors}. Hence, $\S{N}(i)$ is contiguous.
\end{proof}

We introduce compact warping graphs that represent warping paths of minimal length.

\begin{definition}
A warping graph $G \in \S{G}_{m,n}$ is \emph{compact} if there is no warping graph $G' \in \S{G}_{m,n}$ such that $G'$ is a proper subgraph of $G$. 
\end{definition}

A warping graph is compact if no edge can be deleted without violating the boundary or step conditions. Figure \ref{fig:ex_compact_wg} shows an example of a non-compact warping graph and its compactification. 

\begin{figure}[t]
\centering
\includegraphics[width=0.6\textwidth]{./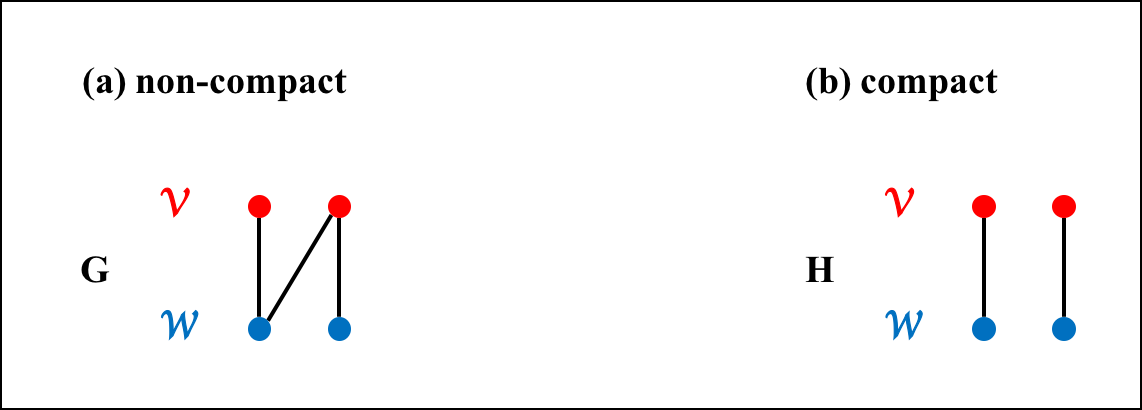}
\caption{Example of a non-compact warping graph $G$ (a) and its compactification $H$ (b).}
\label{fig:ex_compact_wg}
\end{figure}

\begin{proposition}\label{prop:characterization-compactness}
Let $G$ be a warping graph with edge set $\S{E} = \cbrace{e_1, \ldots, e_L}$. Then the following statements are equivalent:
\begin{enumerate}
\item $G$ is compact.
\item Let $2 \leq k < L$. Then $e_{l+k} \notin S_G(e_l)$ for all $l \in [L-k]$.
\end{enumerate}
\end{proposition}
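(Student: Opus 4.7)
The plan is to prove both implications by contraposition, which reduces the compactness/non-compactness dichotomy to a direct statement about whether intermediate edges can be excised from the warping chain while preserving the successor relation.

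For $(1) \Rightarrow (2)$, I would assume (2) fails: there exist $k, l$ with $2 \leq k < L$, $l \in \bracket{L-k}$, and $e_{l+k} \in S_G(e_l)$. Form $\S{E}' = \S{E} \setminus \cbrace{e_{l+1}, \ldots, e_{l+k-1}}$ and argue that $G' = \args{\S{V}, \S{W}, \S{E}'}$ is a proper warping subgraph of $G$ in $\S{G}_{m,n}$. Properness follows from $k \geq 2$. The boundary condition is preserved because $l \geq 1$ and $l+k \leq L$ together ensure that neither $e_1$ nor $e_L$ is removed. The step condition is preserved because the only new consecutive pair in $\S{E}'$ is $(e_l, e_{l+k})$, for which $e_{l+k} \in S_G(e_l)$ holds by hypothesis, while every other consecutive pair was already consecutive in $\S{E}$. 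This contradicts compactness of $G$.

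For $(2) \Rightarrow (1)$, I would assume $G$ is non-compact and produce a witness to the failure of (2). A proper subgraph $G' \in \S{G}_{m,n}$ must have the same node partitions as $G$ (since $\abs{\S{V}'} = m$ and $\abs{\S{W}'} = n$), so $G'$ differs from $G$ only in its edge set $\S{E}' \subsetneq \S{E}$; the boundary condition further forces $e_1, e_L \in \S{E}'$, so some inner edge is absent. Let $l$ be the smallest index with $e_l \notin \S{E}'$ (so $l \geq 2$), and let $j$ be the smallest index greater than $l-1$ with $e_j \in \S{E}'$ (so $j \geq l+1$). Then $e_{l-1}$ and $e_j$ are consecutive in the warping chain $\S{E}'$, so the step condition yields $e_j \in S_G(e_{l-1})$. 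Setting $k = j - (l-1)$, one verifies $k \geq 2$, $k < L$ (from $l \geq 2$ and $j \leq L$), and $l-1 \in \bracket{L-k}$, which witnesses the failure of (2).

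The proof relies on no deeper tool than the definitions of warping chain and successor map. The main organizational obstacle is keeping the index arithmetic straight and confirming the boundary and step conditions after edge removal; no separate verification of node coverage is required, since any warping chain satisfying the boundary condition automatically visits every row and every column (each allowed step increments at least one coordinate by exactly $1$), so Proposition \ref{prop:isolated-nodes} continues to hold for the reduced graph.
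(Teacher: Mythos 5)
Your proof is correct, but it is organized differently from the paper's. The paper first establishes an auxiliary lemma showing that for any warping chain and any $k \geq 3$ the relation $e_{l+k} \in S_G(e_l)$ is impossible (via an induction on a distance function along the factor chains), so that condition (2) collapses to the single case $k = 2$; both implications are then handled by removing, respectively re-inserting, a single inner edge $e_{l+1}$. You bypass that lemma entirely: in the forward direction you excise the whole block $e_{l+1}, \ldots, e_{l+k-1}$ at once and check the boundary and step conditions for the reduced edge set, and in the converse direction you locate the first gap in the edge chain of a proper warping subgraph and read off a witness $k = j-(l-1) \geq 2$ directly. Your treatment of the converse is in fact slightly more careful than the paper's, which passes from ``there is a proper warping subgraph'' to ``a single edge can be removed'' without comment; your gap argument supplies exactly the missing justification. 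What the paper's route buys is the explicit structural fact that (2) is vacuous for $k \geq 3$, so that non-compactness always yields a violation with $k=2$ specifically --- a form that is silently relied upon later (e.g.\ in the proof of Prop.~\ref{prop:reduce-redundant-node}); your route does not deliver that refinement, but it is shorter, uniform in $k$, and needs no induction. Your closing observation that node coverage need not be re-verified is also correct, since Prop.~\ref{prop:isolated-nodes} is a consequence of the boundary and step conditions rather than an independent axiom.
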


\begin{proof}
We first prove the following Lemma:
\begin{lemma*}
Let $\S{V}$ and $\S{W}$ be two chains, let $\S{E}= \cbrace{e_1, \ldots, e_L} \subseteq \S{V} \times \S{W}$ be a warping chain, and let $3 \leq k < L$. Then $e_{l+k} \notin S_{\S{V} \times \S{W}}(e_l)$ for every $l \in [L-k]$. 
\end{lemma*}

\begin{proof}
Let $\S{C} = \cbrace{i_1, \ldots, i_m}$ be a chain and let $i_k, i_l \in \S{C}$ be two elements of $\S{C}$. Then we define the distance 
\[
\Delta_{\S{C}}\args{i_k,i_l} = \abs{l-k}+1.
\]
Suppose that $e_l = (i,j)$ for some $l \in [L-1]$. Let $e'=(r,s) \in S_{\S{V} \times \S{W}}(s_l)$ be an arbitrary successor of $e_l$. Then by definition of the successor map, we have $\Delta_{\S{V}}(i,r), \Delta_{\S{W}}(j,s) \in \cbrace{0,1}$. Let $l \in [L-k]$. Suppose that $e_l = (i,j)$ and $e_{l+k} = (r,s)$. Then by induction, we have 
\[
\Delta_{\S{V}}(i,r) + \Delta_{\S{W}}(j,s) \geq k.
\]
From $k \geq 3$ follows $\Delta_{\S{V}}(i,r) \geq 2$ or $\Delta_{\S{W}}(j,s) \geq 2$. Hence, $\Delta_{\S{V}}(i,r) \notin \cbrace{0,1}$ or $\Delta_{\S{W}}(j,s) \notin \cbrace{0,1}$. This shows the assertion $e_{l+k} \notin S_{\S{V} \times \S{W}}(e_l)$.
\end{proof}

From the above Lemma follows that the case $k > 2$ is impossible for a warping chain. Therefore it is sufficient to consider the case $k = 2$. 

Let $G$ be compact. We assume that there is an $l \in [L-2]$ such that $e_{l+2} \in S_G(e_l)$. By construction, the edge $e_{l+1}$ is an inner element of the chain $\S{E}$. From $e_{l+2} \in S_G(e_l)$ follows that removing $e_{l+1}$ neither violates the boundary conditions nor the step condition. This contradicts compactness of $G$ and shows that a compact warping graph $G$ implies the second statement. 

Next, we show the opposite direction. Suppose that $e_{l+2} \notin S_G(e_l)$ for all $l \in [L-2]$. We assume that $G$ is not compact. Then there is an edge $e_k \in \S{E}$ that can be removed without violating the boundary and step conditions. Not violating the boundary condition implies that $1 < k < L$. Hence, $e_{k-1}$ and $e_{k+1}$ are edges in $\S{E}$. We set $l = k-1$. Then we obtain the contradiction that $e_{l} \in S_G(e_{l+2})$. Hence, $G$ is compact.
\end{proof}

Suppose that $G = \args{\S{V}, \S{W}, \S{E}}$ is a warping graph. By $\S{V} \sqcup \S{W}$ we denote the disjoint union of the node partitions. If $i \in \S{V} \sqcup \S{W}$ is a node of one partition, then its neighborhood $\S{N}(i)$ is a subset of the other node partition. Hence, $\S{N}(i)$ is a chain and has boundary and eventually inner nodes. Let $\S{N}^\circ(i)$ denote the possibly empty subset of inner nodes of chain $\S{N}(i)$. We show that inner nodes of $\S{N}(i)$ always have degree one. 

\begin{lemma}\label{lemma:star-graph-decomposition:1}
Let $G = \args{\S{V}, \S{W}, \S{E}}$ be a warping graph. Suppose that $i \in \S{V} \sqcup \S{W}$ is a node with neighborhood $\S{N}(i)$. Then $\deg(j) = 1$ for all $j \in \S{N}^\circ(i)$. 
\end{lemma}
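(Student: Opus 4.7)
The plan is to obtain $\deg(j)=1$ by contradiction: assuming $j$ has a second neighbor besides $i$ will produce two edges whose endpoints violate the non-crossing property of warping chains (Proposition \ref{prop:order-preserving-factors}).

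First I would fix notation. Without loss of generality, assume $i\in\mathcal{V}$, so $\mathcal{N}(i)\subseteq\mathcal{W}$. By Proposition \ref{prop:contiguous-neighborhood}, $\mathcal{N}(i)$ is a contiguous subchain of $\mathcal{W}$. Let $j\in\mathcal{N}^\circ(i)$ be an inner node. Contiguity together with $j$ being inner provides two neighbors $j',j''\in\mathcal{N}(i)$ with $j'<_{\mathcal{W}} j <_{\mathcal{W}} j''$. Thus the edges $(i,j'),(i,j),(i,j'')$ all lie in $\mathcal{E}$.

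Now suppose for contradiction that $\deg(j)\geq 2$. By Proposition \ref{prop:isolated-nodes} and the bipartite structure, $j$ has a neighbor in $\mathcal{V}$; the extra neighbor is some $i'\in\mathcal{V}\setminus\{i\}$, giving an edge $(i',j)\in\mathcal{E}$. Since $\leq_{\mathcal{V}}$ is a linear order, either $i'<_{\mathcal{V}} i$ or $i<_{\mathcal{V}} i'$. In the first case, compare the edges $(i',j)$ and $(i,j')$: we have $i'<_{\mathcal{V}} i$ while $j'<_{\mathcal{W}} j$, so neither disjunct of the non-crossing condition in Equation \eqref{eq:prop:order-preserving-factors} can hold. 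In the second case, compare $(i',j)$ and $(i,j'')$: here $i<_{\mathcal{V}} i'$ but $j<_{\mathcal{W}} j''$, again violating Equation \eqref{eq:prop:order-preserving-factors}. Either way we contradict Proposition \ref{prop:order-preserving-factors}, so $\deg(j)=1$.

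The argument is really just bookkeeping once the right pair of edges is chosen; the only mild subtlety is making sure to pick the correct companion ($j'$ or $j''$) to the new edge $(i',j)$ depending on which side of $i$ the extra neighbor $i'$ lies on, since the wrong choice gives a consistent pair and no contradiction. Beyond that, the proof uses only the contiguity of $\mathcal{N}(i)$ and the order-preserving property of warping chains, both already established.
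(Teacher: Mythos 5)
Your proof is correct and follows essentially the same route as the paper's: assume an inner node $j$ of $\S{N}(i)$ has a second neighbor $i'$, split on whether $i'<_{\S{V}}i$ or $i<_{\S{V}}i'$, and in each case exhibit a pair of edges violating the non-crossing property of Proposition \ref{prop:order-preserving-factors}. The only cosmetic difference is that the paper invokes contiguity (Proposition \ref{prop:contiguous-neighborhood}) to place the immediate predecessor $j^-$ in $\S{N}(i)$, whereas you only need some $j'<_{\S{W}}j<_{\S{W}}j''$ in $\S{N}(i)$, which already follows from $j$ being an inner node of that chain.
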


\begin{proof} Without loss of generality, we assume that $i \in \S{V}$. Then $\S{N}(i) \subseteq \S{W}$ is a chain. The assertion holds for $\abs{\S{N}(i)} \leq 2$, because in this case $\S{N}(i)$ has no inner node. Suppose that $\abs{\S{N}(i)} > 2$. Let $j \in \S{N}(i)$ be an inner node. We assume that $\deg(j) > 1$. Then there is a node $i' \in \S{V} \setminus \cbrace{i}$ such that $(i', j) \in \S{E}$. Since $\S{V}$ is a chain, we find that either $i' <_{\S{V}} i$ or $i <_{\S{V}} i'$. 

We only consider the first case $i' <_{\S{V}} i$. The proof of the second case is analogue. Observe that $j$ is an inner node of $\S{N}(i)$ and $\S{N}(i)$ is contiguous by Prop.~\ref{prop:contiguous-neighborhood}. Then $\S{N}(i)$ contains the predecessor $j' = j^-$ of node $j$. By construction $e = (i, j')$ and $e' = (i',j)$ are edges of $\S{E}$ such that $i' <_{\S{V}} i$ and $j' <_{\S{W}} j$. Thus, the edges $e$ and $e'$ violate the order preserving property of a warping chain given in Eq.~\eqref{eq:prop:order-preserving-factors} of Prop.~\ref{prop:order-preserving-factors}. This contradicts our assumption that $\S{E}$ is a warping chain. Hence, we have $\deg(j) = 1$. Since the inner node $j$ was chosen arbitrarily, the assertion follows. 
\end{proof}

\begin{lemma}\label{lemma:star-graph-decomposition:2}
Let $G = \args{\S{V}, \S{W}, \S{E}}$ be a warping graph and let $i \in \S{V} \sqcup \S{W}$ be a node with neighborhood $\S{N}(i)$. Suppose that $\abs{\S{N}(i)} \geq 2$ and $j \in \S{N}(i)$ is a boundary node with $deg(j) \geq 2$. Then the following properties hold:
\begin{enumerate}
\item 
If $j$ is the first node in $\S{N}(i)$, then $i^- \in \S{V}$ exists and $(i^-, j) \in \S{E}$.
\item 
If $j$ is the last node in $\S{N}(i)$, then $i^+ \in \S{V}$ exists and $(i^+, j) \in \S{E}$.
\end{enumerate}
\end{lemma}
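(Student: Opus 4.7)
The plan is to prove part (1) in detail and observe that part (2) follows by a symmetric argument. Without loss of generality I take $i \in \S{V}$, so $\S{N}(i) \subseteq \S{W}$ and in particular $j \in \S{W}$. The proof will combine two results already in hand: the order-preserving property of a warping chain (Proposition \ref{prop:order-preserving-factors}) and the contiguity of neighborhoods (Proposition \ref{prop:contiguous-neighborhood}), the latter applied with the roles of the two partitions swapped.

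From $\deg(j) \geq 2$ I first extract some $i' \in \S{V} \setminus \{i\}$ with $(i', j) \in \S{E}$. Since $|\S{N}(i)| \geq 2$ and $j$ is the first element of the chain $\S{N}(i)$, there exists $j'' \in \S{N}(i)$ with $j <_{\S{W}} j''$, giving an edge $(i, j'') \in \S{E}$. Applying Proposition \ref{prop:order-preserving-factors} to the pair $(i, j'')$ and $(i', j)$: the disjunct $i \leq_{\S{V}} i' \wedge j'' \leq_{\S{W}} j$ contradicts $j <_{\S{W}} j''$, so we must be in the other disjunct, yielding $i' \leq_{\S{V}} i$; together with $i' \neq i$ this gives $i' <_{\S{V}} i$. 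Hence $i$ is not the first element of $\S{V}$, and $i^-$ exists.

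To sharpen ``some predecessor of $i$ lies in $\S{N}(j)$'' into the much stronger ``$i^- \in \S{N}(j)$'', I invoke Proposition \ref{prop:contiguous-neighborhood} applied to the node $j$, which gives that $\S{N}(j) \subseteq \S{V}$ is a contiguous subchain. Both $i$ and $i'$ lie in $\S{N}(j)$, so every element of $\S{V}$ between them belongs to $\S{N}(j)$ as well; in particular $i^-$, which satisfies $i' \leq_{\S{V}} i^- <_{\S{V}} i$, lies in $\S{N}(j)$. This is exactly $(i^-, j) \in \S{E}$. Part (2) is the mirror image: pick $j'' \in \S{N}(i)$ with $j'' <_{\S{W}} j$, use Proposition \ref{prop:order-preserving-factors} on $(i, j'')$ and $(i', j)$ to force $i <_{\S{V}} i'$, and then contiguity of $\S{N}(j)$ delivers $(i^+, j) \in \S{E}$.

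I do not foresee a significant obstacle; both appealed-to propositions carry the real combinatorial load. The only point to verify is that Proposition \ref{prop:contiguous-neighborhood} may legitimately be applied to a node of $\S{W}$ rather than $\S{V}$, which is immediate from its generic formulation in terms of arbitrary partitions $\S{Z}, \S{Z}'$.
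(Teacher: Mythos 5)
Your proof is correct and follows essentially the same route as the paper's: both arguments use the order-preserving property of the warping chain (Proposition \ref{prop:order-preserving-factors}) to pin down the relative order of $i$ and $i'$, and then the contiguity of $\S{N}(j)$ (Proposition \ref{prop:contiguous-neighborhood}) to conclude that the immediate predecessor (resp.\ successor) of $i$ lies in $\S{N}(j)$. The only cosmetic difference is that you prove part (1) directly and obtain (2) by symmetry, whereas the paper does the reverse, and you phrase the order argument by eliminating a disjunct rather than by contradiction.
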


\begin{proof} 
We show the second assertion. The proof of the first assertion is analogue. Since $\abs{\S{N}(i)} \geq 2$ and $j$ is the last node of $\S{N}(i)$, we find that $j' = j^- \in \S{N}(i)$ and therefore $(i,j') \in \S{E}$ exists. From $\deg(j) \geq 2$ follows that there is a node $i' \in \S{V}$ such that $(i',j) \in \S{E}$. We assume that $i' \in \S{N}(j)$ satisfies $i' <_{\S{V}} i$ . This implies that $e = (i,j')$ and $e' = (i',j)$ are two edges of $\S{E}$ such that $i' <_{\S{V}} i$ and $j' < _{\S{W}} j$. Thus, the edges $e$ and $e'$ violate the order preserving property of a warping chain given in Eq.~\eqref{eq:prop:order-preserving-factors} of Prop.~\ref{prop:order-preserving-factors}. This contradicts our assumption that $i' <_{\S{V}} i$. Therefore, we have $i <_{\S{V}} i'$. This in turn shows that $i^+ \in \S{V}$ exists. From $i, i' \in \S{N}(j)$ and $i <_{\S{V}} i'$ follows $i^+ \in \S{N}(j)$, because $\S{N}(j)$ is a contiguous subchain of $\S{V}$ by Prop.~\ref{prop:contiguous-neighborhood}. This shows $(i^+, j) \in \S{E}$ and completes the proof.
\end{proof}

A bipartite graph $G = (\S{V}, \S{W}, \S{E})$ is complete if $\S{E} = \S{V} \times \S{W}$. Let $r \in \N$. A complete bipartite graph $G = (\S{V}, \S{W}, \S{E})$ is a star graph of the form $K_{1,r}$, if $\abs{\S{V}} = 1$ and $\abs{\S{W}} = r$. Similarly, $G$ is a star graph of the form $K_{r,1}$, if $\abs{\S{V}} = r$ and $\abs{\S{W}} = 1$. By definition, a star graph has at least two nodes. A star forest is a graph whose components are star graphs.

\begin{proposition}\label{prop:star-graph-decomposition}
A compact warping graph is a star forest.
\end{proposition}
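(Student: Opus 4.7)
The plan is to combine Lemmas \ref{lemma:star-graph-decomposition:1} and \ref{lemma:star-graph-decomposition:2} with the compactness characterization in Proposition \ref{prop:characterization-compactness}. The central claim I would prove first is: in a compact warping graph $G=(\S{V},\S{W},\S{E})$, whenever a node $i$ satisfies $\abs{\S{N}(i)} \geq 2$, every node in $\S{N}(i)$ has degree exactly one. Lemma \ref{lemma:star-graph-decomposition:1} already handles the inner nodes of $\S{N}(i)$, so the work lies in excluding the case that a boundary node of $\S{N}(i)$ has degree $\geq 2$.

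To exclude this, I would argue by contradiction. Suppose $i \in \S{V}$, $\abs{\S{N}(i)} \geq 2$, and the last node $j$ of $\S{N}(i)$ satisfies $\deg(j) \geq 2$; the remaining sub-cases ($j$ being the first node of $\S{N}(i)$, or $i \in \S{W}$) are symmetric. Lemma \ref{lemma:star-graph-decomposition:2} yields $i^+ \in \S{V}$ with $(i^+,j) \in \S{E}$. By contiguity of $\S{N}(i)$ (Proposition \ref{prop:contiguous-neighborhood}), the predecessor $j^-$ of $j$ in $\S{W}$ lies in $\S{N}(i)$, so $(i,j^-) \in \S{E}$. The three edges $(i,j^-)$, $(i,j)$, $(i^+,j)$ are pairwise $\leq_G$-comparable and, since $j^-,j$ are adjacent in $\S{W}$ and $i,i^+$ are adjacent in $\S{V}$, no element of $\S{V} \times \S{W}$ -- in particular no edge of $\S{E}$ -- can lie strictly between consecutive pairs of them. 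Hence these three edges occupy three consecutive positions $e_l, e_{l+1}, e_{l+2}$ of the warping chain $\S{E}$, and $(i^+,j) \in S_G(i,j^-)$ violates Proposition \ref{prop:characterization-compactness} with $k=2$, establishing the claim.

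The conclusion then follows routinely. Let $C$ be a connected component of $G$. By Proposition \ref{prop:isolated-nodes}, $C$ has no isolated node. If every node of $C$ has degree one, then $C$ is a single edge, i.e., $K_{1,1}$. Otherwise some $i \in C$ satisfies $\deg(i) \geq 2$, and the claim forces every node of $\S{N}(i)$ to have degree one; consequently $C = \cbrace{i} \cup \S{N}(i)$ is a star graph of the form $K_{1,r}$ or $K_{r,1}$ with $r = \abs{\S{N}(i)}$, depending on whether $i \in \S{V}$ or $i \in \S{W}$. In all cases $C$ is a star, so $G$ is a star forest.

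The only delicate point I anticipate is verifying that $(i,j^-)$, $(i,j)$, $(i^+,j)$ are genuinely three \emph{consecutive} edges of $\S{E}$, not merely three $\leq_G$-comparable ones; this reduces to a short verification using the adjacency of $j^-, j$ in $\S{W}$ and of $i, i^+$ in $\S{V}$, after which Proposition \ref{prop:characterization-compactness} produces the desired contradiction mechanically.
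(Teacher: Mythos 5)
Your proof is correct and follows essentially the same route as the paper's: both reduce the statement to showing that a neighbourhood $\S{N}(i)$ with $\abs{\S{N}(i)} \geq 2$ cannot contain a node of degree at least two, using Lemma \ref{lemma:star-graph-decomposition:1} for inner nodes and Lemma \ref{lemma:star-graph-decomposition:2} to produce the three edges $(i,j^-)$, $(i,j)$, $(i^+,j)$ whose simultaneous presence contradicts compactness. The only cosmetic difference is that you route the contradiction through Proposition \ref{prop:characterization-compactness} after verifying that these three edges are consecutive in the warping chain, whereas the paper deletes $(i,j)$ directly and observes that the result is still a warping graph; your consecutiveness check is sound and, if anything, slightly more careful than the paper's.
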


\begin{proof}
Let $G = \args{\S{V}, \S{W}, \S{E}}$ be a compact warping graph. Suppose that $C = \args{\S{V}', \S{W}', \S{E}'}$ is a component of $G$. From Prop.~\ref{prop:isolated-nodes} follows that $C$ has at least two nodes connected by an edge. 

We assume that $C$ is not a star. Then $C$ has two nodes $i, j \in \S{V}' \sqcup \S{W}'$ with degree larger than one. Without loss of generality, we assume that $i \in \S{V}'$. Then $\S{N}(i) \subseteq \S{W}'$ has at least two elements. Suppose that all nodes from $\S{N}(i)$ have degree one. Since component $C$ is bipartite, we find that $C$ is isomorphic to the star $K_{1,r}$, where $r = \deg(i) > 1$. This contradicts our assumption that $C$ is not a star. Hence, there is a node $j \in \S{N}(i) \subseteq \S{W}'$ with $\deg(j) > 1$.

From Lemma \ref{lemma:star-graph-decomposition:1} follows that node $j$ is a boundary node of $\S{N}(i)$. We show the assertion for the case that $j$ is the last node in $\S{N}(i)$. The proof for the case that $j$ is the first node in $\S{N}(i)$ is analogue. Since $j$ is the last node in $\S{N}(i)$ and $\abs{\S{N}(i)} \geq 2$, we have $j^- \in \S{N}(i)$ and therefore $(i,j^-) \in \S{E}$. Applying Lemma \ref{lemma:star-graph-decomposition:2} yields that $i^+ \in \S{V}$ exists and $(i^+, j) \in \S{E}$. 

By construction, we have $\args{i, j^-}, \args{i, j}, \args{i^+, j} \in \S{E}$. This shows that $(i,j)$ is not a boundary edge in $\S{E}$. Since $\args{i^+, j} \in S_{\S{V} \times \S{W}}\args{i, j^-}$, we can remove $(i,j)$ without violating the step condition. Then the subgraph $G' = \args{\S{V}, \S{W}, \S{E}\setminus \cbrace{(i,j)}}$ of $G$ is a warping graph. This contradicts our assumption that $G$ is compact. Hence, $C$ is a star. 
\end{proof}

An immediate consequence of the proof of Prop.~\ref{prop:star-graph-decomposition} is the following corollary. 
\begin{corollary}\label{cor:prop:star-graph-decomposition}
Let $G = \args{\S{V}, \S{W}, \S{E}}$ be a compact warping graph in $\S{G}_{m,n}$. Then every component of $G$ is a star with at least one node in $\S{V}$ and one node in $\S{W}$. 
\end{corollary}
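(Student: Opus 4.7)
The plan is to extract this corollary directly from Proposition \ref{prop:star-graph-decomposition} together with Proposition \ref{prop:isolated-nodes} and the bipartite structure of warping graphs. There is no substantive new content to prove; the task is to verify that each star component meets both node partitions.

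First, I would apply Proposition \ref{prop:star-graph-decomposition} to conclude that $G$ is a star forest, so every component $C$ of $G$ is a star graph. By the definition of star graph preceding Prop.~\ref{prop:star-graph-decomposition}, $C$ has at least two nodes and contains at least one edge.

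Second, I would argue that $C$ contains at least one node in $\S{V}$ and at least one node in $\S{W}$. The graph $G = (\S{V},\S{W},\S{E})$ is bipartite, hence every edge $(i,j)\in \S{E}$ satisfies $i\in \S{V}$ and $j\in \S{W}$. Since $C$ inherits the bipartition from $G$ and contains at least one edge, that edge contributes one endpoint to each of $\S{V}$ and $\S{W}$, giving the desired conclusion.

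I do not anticipate any obstacle: the claim is essentially a rereading of the proof of Prop.~\ref{prop:star-graph-decomposition}, using only that warping graphs are bipartite with nonempty node partitions and that no node is isolated (Prop.~\ref{prop:isolated-nodes}). The one sanity check worth noting is that a star component of the form $K_{1,r}$ or $K_{r,1}$ cannot degenerate to a single node, so its existence automatically forces occupancy of both partitions.
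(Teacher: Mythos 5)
Your proposal is correct and matches the paper's intent: the paper gives no separate proof, stating only that the corollary is an immediate consequence of the proof of Prop.~\ref{prop:star-graph-decomposition}, whose first line already notes (via Prop.~\ref{prop:isolated-nodes}) that every component contains an edge, which by bipartiteness has one endpoint in $\S{V}$ and one in $\S{W}$. Your route through the definition of a star graph plus the bipartition $\S{E} \subseteq \S{V} \times \S{W}$ is the same argument.
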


\begin{proposition}\label{prop:2-star-graph-distribution}
Let $G \in \S{G}_{m,n}$ be a compact warping graph with $m > n$. Then we have:
\begin{enumerate}
\item $G$ has at most $n-1$ components of the form $K_{1,1}$. 
\item $G$ has a component of the form $K_{r,1}$ with $r > 1$. 
\end{enumerate}
\end{proposition}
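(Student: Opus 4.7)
The plan is to use the structural information from Corollary~\ref{cor:prop:star-graph-decomposition}, which tells us that in a compact warping graph $G = (\mathcal{V}, \mathcal{W}, \mathcal{E})$ every component is a star having at least one node in $\mathcal{V}$ and at least one node in $\mathcal{W}$. Hence every component is isomorphic to exactly one of the three types $K_{1,1}$, $K_{1,r}$ with $r \geq 2$, or $K_{s,1}$ with $s \geq 2$. The whole argument is then purely a counting argument over components.

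Introduce the following counters: let $a$ be the number of $K_{1,1}$-components of $G$; let $B$ be the number of $K_{1,r}$-components (with $r \geq 2$) and $R$ be the total number of $\mathcal{W}$-nodes they contribute, so $R \geq 2B$; and let $C$ be the number of $K_{s,1}$-components (with $s \geq 2$) and $S$ be the total number of $\mathcal{V}$-nodes they contribute, so $S \geq 2C$. Since the components partition $\mathcal{V} \sqcup \mathcal{W}$, I get the two identities
\begin{align*}
m \;=\; a + B + S, \qquad n \;=\; a + R + C.
\end{align*}

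For the second statement, I argue by contradiction. Suppose $G$ has no component of the form $K_{r,1}$ with $r > 1$, that is $C = 0$ and therefore $S = 0$. Subtracting the two identities gives $m - n = B - R$. Using $R \geq 2B$, this forces $m - n \leq -B \leq 0$, contradicting the hypothesis $m > n$. Hence $C \geq 1$, which establishes part~2.

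For the first statement, I simply exploit the identity $n = a + R + C$ together with $R \geq 0$ and the inequality $C \geq 1$ just established. This yields $a \leq n - R - C \leq n - 1$, which is exactly part~1. The only non-routine step is recognising the right bookkeeping via the three component types and deriving part~1 as a consequence of part~2; both facts then fall out immediately, so I do not anticipate any genuine obstacle beyond invoking Corollary~\ref{cor:prop:star-graph-decomposition} correctly.
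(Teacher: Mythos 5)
Your proof is correct, but it proceeds quite differently from the paper's. You set up a global double-counting argument: classify every component (via Corollary~\ref{cor:prop:star-graph-decomposition}) as $K_{1,1}$, $K_{1,r}$ with $r\geq 2$, or $K_{s,1}$ with $s\geq 2$, write down the two identities $m = a + B + S$ and $n = a + R + C$, and read both claims off from $R \geq 2B$, $S \geq 2C$ and $m>n$. The paper instead argues locally: for part~1 it assumes $n$ components of type $K_{1,1}$, picks a leftover node of $\S{V}$ (which exists since $m>n$), and derives a contradiction with the maximality of one of the $K_{1,1}$ components; for part~2 it removes the $K_{1,1}$ components and applies the pigeonhole principle to the remaining $m-q > n-q \geq 1$ nodes to find a $\S{W}$-node of degree at least two, whose component must then be a $K_{r,1}$. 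Note also that your logical dependency is reversed relative to the paper's: you prove part~2 first and deduce part~1 from $C\geq 1$, whereas the paper proves part~1 independently and uses it ($q<n$) in the proof of part~2. Your bookkeeping is sound --- the component partition of $\S{V}\sqcup\S{W}$ is justified because Proposition~\ref{prop:isolated-nodes} rules out isolated nodes --- and arguably gives a cleaner, more quantitative picture (e.g.\ it shows at once that the number of $K_{1,1}$ components is at most $n - R - C$); what it costs is that part~1 is no longer available on its own without first establishing part~2.
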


\begin{proof}
Let $G = \args{\S{V}, \S{W}, \S{E}}$ with $\abs{\S{V}} = m$ and $\abs{\S{W}} = n$. To show the first assertion, we assume that $G$ has $n' > n-1$ components of the form $K_{1,1}$. This is only possible for $n' = n$, because every component of $G$ has at least one node in $\S{W}$ by Corollary \ref{cor:prop:star-graph-decomposition}. 

Let $C_1, \ldots, C_n$ be $n$ components of $G$ of the form $K_{1,1}$. Suppose that $\S{C}_k = \args{\cbrace{i_k}, \cbrace{j_k},  \cbrace{\args{i_k, j_k}}}$ for all $k \in [n]$. The union of the first node partitions over the $n$ components $C_k$ gives $\S{V}' = \cbrace{i_1, \ldots, i_n}$. From $m > n$ follows $\S{V}' \subsetneq \S{V}$. Then there is a node $i \in \S{V}\setminus\S{V}'$. From Prop.~\ref{prop:star-graph-decomposition} follows that there is a component $C$ of $G$ is a star of the form $K_{r,s}$ that contains node $i$. Since $s \geq 1$ by definition of a star, component $C$ has a node $j \in \S{W}$. Then there is a $k \in [n]$ such that $j = j_k$ is a node in component $\S{C}_k$. Since $i \neq i_k$ by construction, the graph $H = \args{\cbrace{i, i_k}, \cbrace{j_k}, \cbrace{\args{i,j_k}, \args{i_k,j_k}}}$ is a connected subgraph of $G$ that includes component $C_k$ as a proper subgraph. This contradicts our assumption that $C_k$ is a maximal connected subgraph of $G$. Consequently, $G$ cannot have more than $n-1$ components of the form $K_{1,1}$.

Next, we show the second assertion. Suppose that $C_1, \ldots, C_q$ are all components of $G$ that are of the form $K_{1,1}$. Let $\S{V}' = \cbrace{i_1, \ldots, i_q} \subseteq \S{V}$ and $\S{W}' = \cbrace{j_1, \ldots, j_q} \subseteq \S{W}$ be the subsets covered by the $q$ components $C_k$. From the first part of this proof follows that $q < n$ and by assumption, we have $n < m$. Then $\S{V}'' = \S{V}\setminus\S{V}'$ and $\S{W}'' = \S{W}\setminus\S{W}'$ are non-empty. By $m'' = \abs{\S{V}''} = m-q$ and $n'' = \abs{\S{W}''} = n-q$ we denote the respective number of nodes not contained in any of the $q$ components $C_k$. From $q < n < m$ follows that $1 \leq n'' < m''$. The pigeonhole principle states that there is at least one node $j \in \S{W}''$ that is connected to at least two nodes $i, i' \in \S{V}''$. Let $C$ be the component of $G$ containing the three nodes $i$, $i'$, and $j$. From Prop.~\ref{prop:star-graph-decomposition} follows that $C$ is a star of the form $K_{r,s}$. We find that $r \geq 2$, because $C$ contains at least the two nodes $i$ and $i'$ from $\S{V}'' \subset \S{V}$. Then $s = 1$, because $C$ is a star. This shows the second assertion.
\end{proof}

\begin{definition}
A node $i$ of a compact warping graph $G$ is \emph{redundant} if $\deg(j) \geq 2$ for all $j \in \S{N}(i)$.
\end{definition}

Let $i$ be a node in $G$. Then $G-\cbrace{i}$ is the subgraph of $G$ obtained by deleting node $i$ and its incident edges. The next result shows that deleting a redundant node of a compact warping graph is again a compact warping graph. 

\begin{proposition}\label{prop:reduce-redundant-node}
Let $i$ be a redundant node of a compact warping graph $G$. Then $G-\cbrace{i}$ is a compact warping graph. 
\end{proposition}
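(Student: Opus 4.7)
Without loss of generality assume $i\in\S{V}$ where $G=(\S{V},\S{W},\S{E})$. My plan is first to pin down the local structure around the redundant node $i$, and then to verify the warping-graph axioms and compactness for $G'=G-\cbrace{i}$ by direct case analysis.

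I would begin by combining redundancy with the star-forest description of compact warping graphs (Prop.~\ref{prop:star-graph-decomposition}) to show that $\abs{\S{N}(i)}=1$. By Lemma~\ref{lemma:star-graph-decomposition:1} every inner node of $\S{N}(i)$ has degree one, so redundancy excludes inner neighbors and hence $\abs{\S{N}(i)}\le 2$. The case $\abs{\S{N}(i)}=2$ would put $i$ at the center of a $K_{1,2}$ component, whose leaves must have degree one, contradicting redundancy. Thus $\S{N}(i)=\cbrace{j}$ for a unique $j\in\S{W}$ with $\deg(j)\ge 2$, i.e., $i$ is a leaf of a star centered at $j$ with at least two leaves; in particular $\abs{\S{V}}\ge 2$, so $G'$ still has a non-empty $\S{V}$-partition.

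Next, I would verify that $G'$ is a warping graph. The bipartite chain structure is immediate. For the boundary condition, the only subtle case is $i$ being a boundary node of its partition; if $i=i_1$, then $j=j_1$ by the boundary of $G$, and contiguousness of $\S{N}(j_1)$ (Prop.~\ref{prop:contiguous-neighborhood}) together with $\deg(j_1)\ge 2$ yields $i_2\in\S{N}(j_1)$, so $(i_2,j_1)$ serves as the new boundary edge; the case $i=i_m$ is analogous. For the step condition, let $e_k=(i,j)$ be the unique edge at $i$ in the warping chain $e_1,\ldots,e_L$. Using $\S{N}(i)=\cbrace{j}$, the predecessor $e_{k-1}$ (when it exists) lies in $\cbrace{(i^-,j),(i^-,j^-)}$ and the successor $e_{k+1}$ (when it exists) lies in $\cbrace{(i^+,j),(i^+,j^+)}$. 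The combination $e_{k-1}=(i^-,j^-)$ with $e_{k+1}=(i^+,j^+)$ is impossible, because then neither $(i^-,j)$ nor $(i^+,j)$ could be an edge (each would have to sit strictly between consecutive chain elements), leaving $\deg(j)=1$ and contradicting redundancy. In each of the three remaining combinations, the $\S{V}'$-successor of $i^-$ is $i^+$, so a direct check gives $e_{k+1}\in S_{G'}(e_{k-1})$, and the edge set of $G'$ remains a warping chain.

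For compactness of $G'$ I would apply Prop.~\ref{prop:characterization-compactness} and verify $e'_{l+2}\notin S_{G'}(e'_l)$ for every admissible $l$. Since $S_{G'}$ differs from $S_G$ only for edges whose first coordinate is $i^-$, pairs supported away from $i^-$ inherit the condition directly from compactness of $G$. For pairs involving $i^-$ or straddling $e_k$, a case analysis on the three surviving configurations closes the argument: the constraint $\S{N}(i)=\cbrace{j}$ combined with star-forest rigidity (the co-leaves of $i$ at $j$ have degree one) repeatedly rules out the candidates that would violate $e'_{l+2}\notin S_{G'}(e'_l)$. The main obstacle is precisely this bookkeeping, since deletion of $i$ perturbs $S_{G'}$ exactly at the chain positions adjacent to $e_k$; the decisive lever is that leaves of the star at $j$ have degree one and therefore cannot be adjacent to $j^-$ or $j^+$, which closes off the otherwise-dangerous alternatives.
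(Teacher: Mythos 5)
Your plan is correct and follows essentially the same route as the paper's proof: reduce to $\S{N}(i)=\cbrace{j}$ with $\deg(j)\geq 2$ via the star-forest structure, then verify the boundary condition, step condition, and compactness by a local case analysis around the unique edge $(i,j)$. The only cosmetic difference is in the compactness step, where you verify the criterion of Prop.~\ref{prop:characterization-compactness} directly (correctly identifying that the dangerous straddling pairs are excluded because co-leaves of $i$ at the star center $j$ would need degree at least two), whereas the paper argues by lifting a hypothetical violation in $G-\cbrace{i}$ back to one in $G$.
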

 
\begin{proof}
Without loss of generality, we assume that $i \in \S{V}$. Let $G = \args{\S{V}, \S{W}, \S{E}}$ with warping chain $\S{E} = \cbrace{e_1, \ldots, e_L}$. Suppose that $G' = G-\cbrace{i} = \args{\S{V}', \S{W}, \S{E}'}$, where $\S{V}' = \S{V} \setminus \cbrace{i}$ and $\S{E}'$ is the chain obtained from $\S{E}$ by removing all edges incident to node $i$. 

We first show that $\S{N}(i) \subseteq \S{W}$ consists of a singleton. Let $C$ be the component of $G$ that contains node $i$. Then $C$ also includes the neighborhood $\S{N}(i)$. Since $G$ is compact, we can apply Prop.~\ref{prop:star-graph-decomposition} and find that component $C$ is a star of the form $K_{r,1}$ or $K_{1,r}$, where $r \geq 1$. Observe that $r = \deg(j) \geq 2$ for every neighbor $j \in \S{N}(i)$, because node $i$ is redundant. Hence, $C$ is a star of the form $K_{r,1}$ and therefore $\S{N}(i) = \cbrace{j}$. 

From $\deg(j) \geq 2$ follows that there is a node $i' \in \S{V} \setminus \cbrace{i}$ such that $(i',j) \in \S{E}$. We distinguish between two cases: (1) $i' <_{\S{V}} i$ and (2) $i <_{\S{V}} i'$. We only consider the first case $i' <_{\S{V}} i$. The proof of the second case is analogue. From Prop.~\ref{prop:contiguous-neighborhood} follows that $\S{N}(j)$ is a contiguous subchain of $\S{W}$ with $i', i \in \S{N}(j)$. Then $i^- \in \S{N}(j)$. We distinguish between two cases: (1) $e_L = (i,j)$, and (2) $e_l = (i,j)$ for some $1 < l < L$. Note that the case $e_1 = (i,j)$ cannot occur due to existence of $i^- \in \S{V}$.

\medskip 

\noindent
\emph{Case 1:} From $e_L = (i,j)$ follows that the edge set of $G'$ is of the form $\S{E}' = \cbrace{e_2, \ldots, e_{L-1}}$. In addition, $i$ is the last node in $\S{V}$ and therefore $i^-$ is the last node in $\S{V}'$. Furthermore, we find that $j \in \S{W}$ is the last node in $\S{W}$. We show that $\S{E}'$ is a warping chain. The first boundary condition is satisfied by $e_1 \in \S{E}'$. From $i^- \in \S{N}(j)$ follows that $(i^-, j) \in \S{E}$ is an edge that satisfies the second boundary condition connecting the last nodes of $\S{V}'$ and $\S{W}$. Finally, from $e_{l+1} \in S_G(e_l)$ for all $l \in [L-1]$ follows that the step condition remains valid in $G'$. Therefore, the edge set $\S{E}'$ is a warping chain of length $L' = L-1$. It remains to show that $G'$ is compact. For this, we assume that $G'$ is not compact. Then from Prop.~\ref{prop:characterization-compactness} follows that there is an index $l \in [L'-2]$ such that $e_{l+2} \in S_{G'}(e_l)$. This implies that $e_{l+2} \in S_{G}(e_l)$ contradicting the assumption that $G$ is compact. Hence, $G'$ is a compact warping graph. 

\medskip 

\noindent
\emph{Case 2:} There is an index $1 < l < L$ such that $e_l = (i,j)$. Hence, $\S{E}$ has at least three edges and the edge set of $G'$ is of the form $\S{E}' = \cbrace{e_1, \ldots,e_{l-1}, e_{l+1}, \ldots, e_L}$. To show that $\S{E}'$ is a warping chain, we assume that $e_{l-1} = (i',j')$ and $e_{l+1} = (i'',j'')$. From $\S{N}(i) = \cbrace{j}$ and the step condition follows that $i' = i^-$ and $i'' = i^+$. This shows that $i$ is neither the first nor last node in $\S{V}$. Hence, $e_1$ and $e_L$ satisfy the boundary conditions in $\S{E}'$.

We show that $\S{E}'$ satisfies the step condition. Since $\S{E}$ is a warping chain, we have $e_{k+1} = S_G(e_k)$ for all $k \in [L-1]$.
Since $S_{G'} = S_G$ on $\S{E}'$, it is sufficient to show that $e_{l+1}\in S_{G'}(e_l)$. According to the previous parts of the proof, we have $(i^-,j) \in \S{E}$. The step condition together with $i' = i^-$ imply that $e_{l-1} = (i',j') = (i^-, j') = (i^-,j)$ and therefore $j' = j$. Again, from the step condition follows that either $j'' = j$ or $j'' = j^+$. Observe that $i'^{+} = i''$ in $\S{V}'$. Then we have $(i'', j) \in S_{G'}(i',j)$ and $(i'', j^+) \in S_{G'}(i',j)$. This shows that $\S{E}'$ satisfies the step condition in either of both cases $j'' = j$ or $j'' = j^+$. 

It remains to show that $G'$ is compact. For this, we assume that $G'$ is not compact. Suppose that $\S{I} = [L]\setminus \cbrace{l}$ is the index set of $\S{E}'$. Then from Prop.~\ref{prop:characterization-compactness} follows that there is an index $k \in \S{I}\setminus\cbrace{L-1,L}$ such that $e_{k+2} \in S_{G'}(e_k)$. This implies that $e_{k+2} \in S_{G}(e_k)$ contradicting the assumption that $G$ is compact. Hence, $G'$ is a compact warping graph. 
\end{proof}

\subsection{Glued Warping Graphs}\label{subsec:glued-graphs}

This section glues warping graphs to model Fr\'echet function. Then the graph-theoretic foundation of the Reduction Theorem is presented.

\medskip

A graph $G = \args{\S{U}, \S{E}}$ is a centered $N$-partite graph if the set $\S{U}$ can be partitioned into $N+1$ disjoint and non-empty subsets $\S{V}, \S{W}_1 \ldots, \S{W}_N$ such that 
\[
\S{E} \subseteq \bigcup_{k=1}^N \S{V} \times \S{W}_k.
\]

\begin{definition}
Let $G_1, \ldots, G_N$ be compact warping graphs of the form $G_k = \args{\S{V}, \S{W}_k, \S{E}_k}$ for all $k \in [N]$. The \emph{glued graph} with \emph{splice} $\S{V}$ and \emph{particles} $G_1, \ldots, G_N$ is a centered $N$-partite graph $G = \args{\S{V}, \S{W}_1, \ldots, \S{W}_N, \S{E}}$ with edge set $\S{E} = \S{E}_1 \sqcup \cdots \sqcup \S{E}_N$. 
\end{definition}

Note that a particle of a glued graph is always a compact warping graph. The definition of a glued graph assumes that all $N$ particles $G_1, \ldots, G_N$ share a common node partition $\S{V}$ and that any two particles $G_k$ and $G_l$ have disjoint node partitions $\S{W}_k$ and $\S{W}_l$, respectively. Then the glued graph with splice $\S{V}$ is obtained by taking the disjoint union of the particles $G_1, \ldots, G_N$, but by identifying the nodes from $\S{V}$. A special case of a glued graph is any compact warping graph $G = \args{\S{V}, \S{W}, \S{E}}$ with the first partition $\S{V}$ as its splice. 

\commentout{
Suppose that $G$ is a glued graph with splice $\S{V}$. By construction, $\S{V}$ is a chain. The predecessor of the first splice node and the successor of the last splice node is the void node, denoted by $*$. To avoid case distinctions, we define the neighborhood of the void node as the empty set, that is $\S{N}(*) = \emptyset$. 
}

\begin{definition}
Let $G$ be a glued graph with splice $\S{V}$ and particles $G_1, \ldots, G_N$. Node $i \in \S{V}$ is \emph{redundant} in $G$, if it is redundant in $G_k$ for every $k \in [N]$. 
\end{definition}

\begin{proposition}\label{prop:reduced-glued-graph}
Let $G$ be a glued graph with splice $\S{V}$ and particles $G_1, \ldots, G_N$. Suppose that $i \in \S{V}$ is redundant. Then $G-\cbrace{i}$ is a glued graph with splice $\S{V}\setminus\cbrace{i}$ and particles $G_1-\cbrace{i}, \ldots, G_N-\cbrace{i}$.
\end{proposition}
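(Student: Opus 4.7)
The plan is to chain the definitions and invoke Proposition \ref{prop:reduce-redundant-node} particle-by-particle. First I would unfold what $i$ being redundant in the glued graph $G$ means: by definition, $i$ is redundant in each particle $G_k$ for $k \in [N]$. Each particle $G_k = (\S{V}, \S{W}_k, \S{E}_k)$ is a compact warping graph, so Proposition \ref{prop:reduce-redundant-node} applies and yields that $G_k - \{i\} = (\S{V}\setminus\{i\}, \S{W}_k, \S{E}'_k)$ is again a compact warping graph, where $\S{E}'_k$ is obtained from $\S{E}_k$ by deleting all edges incident to $i$.

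Next I would verify that these reduced particles may legitimately be glued. The splice of the proposed glued graph is $\S{V}\setminus\{i\}$, which is common to every $G_k - \{i\}$ by construction. The other node partitions $\S{W}_1, \ldots, \S{W}_N$ were pairwise disjoint in the original glued graph and remain so since removing $i$ does not touch them. Hence the centered $N$-partite structure with splice $\S{V}\setminus\{i\}$ and particles $G_1-\{i\}, \ldots, G_N-\{i\}$ is well defined.

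Finally I would check that the edge set matches. Removing node $i$ from $G$ discards exactly the edges of $G$ incident to $i$; since $\S{E} = \S{E}_1 \sqcup \cdots \sqcup \S{E}_N$ and every edge of $G$ incident to $i$ lies in some unique $\S{E}_k$ (because $i$ belongs only to the splice), the resulting edge set decomposes as $\S{E}'_1 \sqcup \cdots \sqcup \S{E}'_N$. The disjointness of this union follows from the disjointness of the $\S{E}_k$'s. Thus $G - \{i\}$ coincides with the glued graph with splice $\S{V}\setminus\{i\}$ and particles $G_1-\{i\}, \ldots, G_N-\{i\}$.

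The step that needs the most care is not the gluing bookkeeping but the appeal to Proposition \ref{prop:reduce-redundant-node}: one has to ensure that the notion of redundancy used there (a node in a compact warping graph all of whose neighbors have degree at least two) is exactly the notion inherited by $i$ in each particle from its redundancy in the glued graph. Since the definition of redundancy for glued graphs was framed so that this inheritance is automatic, the verification reduces to this observation, and the rest of the proof is a clean application of the particle-wise result.
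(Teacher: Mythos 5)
Your proposal is correct and follows essentially the same route as the paper's proof: apply Proposition \ref{prop:reduce-redundant-node} particle-by-particle using the fact that redundancy in the glued graph means redundancy in every particle, then verify that the reduced edge set decomposes as the disjoint union $\S{E}'_1 \sqcup \cdots \sqcup \S{E}'_N$. No gaps.
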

 
\begin{proof}
Let $k \in [N]$. Then particle $G_k = \args{\S{V}, \S{W}_k, \S{E}_k}$ is a compact warping graph by definition of a glued graph. Since splice node $i \in \S{V}$ is redundant in $G$, it is redundant in $G_k$. Then from Prop.~\ref{prop:reduce-redundant-node} follows that $G_k' = G_k - \cbrace{i} = \args{\S{V}', \S{W}_k, \S{E}'}$ is a compact warping graph with $\S{V}' = \S{V}\setminus\cbrace{i}$ and $\S{E}_k' = \S{E}_k\setminus \S{E}_k(i)$, where $\S{E}_k(i) \subseteq \S{E}_k$ is the subset of edges in $G_k$ incident to node $i$.

The graph $G' = G-\cbrace{i} = \args{\S{V}', \S{W}_1, \ldots, \S{W}_N, \S{E}'}$ has an edge set of the form $\S{E}' = \S{E}\setminus\S{E}(i)$, where $\S{E}(i) \subseteq \S{E}$ is the subset of edges in $G$ incident to node $i$. Since $\S{E} = \S{E}_1 \sqcup \cdots \sqcup \S{E}_N$, we have 
\[
\S{E}(i) = \S{E}_1(i) \sqcup \cdots \sqcup \S{E}_N(i) = \S{E}_1'\sqcup \cdots \sqcup \S{E}_N'.
\]
This shows that $G-\cbrace{i}$ is a glued graph of particles $G_1', \ldots, G_N'$ along splice $\S{V}'$.
\end{proof}

Suppose that $G$ is a glued graph with splice $\S{V}$ and particles $G_1, \ldots, G_N$. A particle is said to be \emph{trivial} if it is a star of the form $K_{m,1}$. By $\S{I}_G \subseteq [N]$ we denote the subset of all indices $k \in [N]$ for which particle $G_k$ is non-trivial. We call $\S{I}_G$ the \emph{core index set} (core) of $G$.

\begin{definition}
Let $G$ be a glued graph with splice $\S{V}$ and $N$ particles $G_k = \args{\S{V}, \S{W}_k, \S{E}_k}$ for all $k \in [N]$. Suppose that $\S{I}_G$ is the core index set of $G$. Then
\[
\rho(G) = \begin{cases}
\displaystyle \sum_{k \in \S{I}_G} \abs{\S{W}_k} - \;2\args{\abs{\S{I}_G}-1} & \S{I}_G \neq \emptyset \\
1 & \S{I}_G = \emptyset
\end{cases}
\]
is the \emph{reduction bound} of $G$. 
\end{definition}

Now we present the graph-theoretic foundation of the Reduction Theorem.
\begin{theorem}\label{prop:existence-redundant-node}
Let $G$ be a glued graph with splice $\S{V}$ such that $\rho(G) < \abs{\S{V}}$. Then $\S{V}$ has a redundant node.
\end{theorem}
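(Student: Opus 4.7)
The plan is to argue by contradiction: I would assume no node of $\S{V}$ is redundant in $G$ and derive that $\rho(G) \geq |\S{V}|$, contradicting the hypothesis. First I dispose of the case $\S{I}_G = \emptyset$. Every particle is then a trivial star $K_{m,1}$ with $m = |\S{V}|$, and the hypothesis gives $m > \rho(G) = 1$, hence $m \geq 2$. The unique $\S{W}_k$-neighbor of any $\S{V}$-node then has degree $m \geq 2$, so every $\S{V}$-node is redundant in every $G_k$, a direct contradiction.

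For $\S{I}_G \neq \emptyset$, I would invoke the star-forest decomposition (Prop.~\ref{prop:star-graph-decomposition}) of each non-trivial particle $G_k$ into $\alpha_k$ thin stars $K_{1,s_p}$ and $\beta_k$ thick stars $K_{r_q,1}$ with $s_p \geq 1$ and $r_q \geq 2$. The set $T_k \subseteq \S{V}$ of $\S{V}$-nodes non-redundant in $G_k$ is precisely the collection of thin-star centers, so $|T_k| = \alpha_k$; the complement $M_k = \S{V} \setminus T_k$ is a disjoint union of $\beta_k$ contiguous subchains of $\S{V}$, each of length $\geq 2$, since each thick star occupies a run of consecutive rows along the underlying warping path. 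The counting identities $n_k = \alpha_k + \beta_k + e_k$ with $e_k = \sum_p (s_p - 1) \geq 0$ and $m \geq \alpha_k + 2\beta_k$ link the combinatorics to $n_k$ and $m$. A preliminary observation is that $\beta_k \geq 1$ for every $k \in \S{I}_G$: otherwise $G_k$ is all-thin with $n_k \geq m$, forcing $\rho(G) \geq m$ and contradicting the hypothesis. Since trivial particles make every $\S{V}$-node redundant (as $m \geq 2$), the contradiction assumption forces $\bigcup_{k \in \S{I}_G} T_k = \S{V}$, whence $\sum_k \alpha_k \geq m$.

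Combining $\sum \alpha_k \geq m$ with $\sum n_k = \sum \alpha_k + \sum \beta_k + \sum e_k$ and the hypothesis $\sum n_k < m + 2(|\S{I}_G| - 1)$ will yield the key inequality
\[
\sum_{k \in \S{I}_G} \beta_k + \sum_{k \in \S{I}_G} e_k < 2(|\S{I}_G| - 1).
\]
For $|\S{I}_G| \leq 2$ this is an immediate contradiction: each $\beta_k \geq 1$ forces the left-hand side to be $\geq |\S{I}_G| \geq 2(|\S{I}_G| - 1)$. For $|\S{I}_G| \geq 3$ I would apply Helly's theorem for intervals on $\R$ to the family $\{M_k\}_{k \in \S{I}_G}$. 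In the clean subcase $\beta_k = 1$ for all $k$, each $M_k$ is a single interval and pairwise non-emptiness $M_k \cap M_l \neq \emptyset$ reduces to $\alpha_k + \alpha_l < m$; this follows from $n_k + n_l \leq m + 1$, obtained by summing the hypothesis bound over the remaining $|\S{I}_G| - 2$ particles (each contributing $n_j \geq 2$). Helly then delivers $\bigcap_k M_k \neq \emptyset$, contradicting $\bigcup T_k = \S{V}$.

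The main obstacle is the remaining subcase, in which some $\beta_k \geq 2$ so that $M_k$ is a genuine union of intervals and Helly's theorem does not apply directly. My plan is to select, for each $k$, one interval $I_k \subseteq M_k$ such that the family $\{I_k\}$ is pairwise intersecting, a common point then being guaranteed by Helly. Feasibility of such a selection should follow from the sharper bound $\sum (\beta_k - 1) + \sum e_k < |\S{I}_G| - 2$ (obtained from the key inequality after subtracting $|\S{I}_G|$) together with a careful analysis of the endpoint positions of the $M_k$-intervals. This selection step is the technically most delicate part of the proof and will likely require a case analysis on the particles with $\beta_k \geq 2$.
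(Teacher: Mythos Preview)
Your treatment of the cases $\S{I}_G = \emptyset$ and $|\S{I}_G| \le 2$ is correct and matches the paper's reasoning in spirit: both arguments amount to counting how many splice nodes each particle can ``support'' and checking that the total falls short of $m$. For $|\S{I}_G| \ge 3$, however, your proposal has a genuine gap. You reduce to showing $\bigcap_k M_k \neq \emptyset$ and invoke Helly's theorem when every $\beta_k = 1$, but you leave the case $\beta_k \ge 2$ unfinished, offering only a vague plan to ``select one interval $I_k \subseteq M_k$'' with pairwise intersections. You give no argument that such a selection exists, and the sharper inequality $\sum(\beta_k - 1) + \sum e_k < |\S{I}_G| - 2$ does not by itself yield one; the Helly route does not obviously close here.

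The paper avoids this difficulty entirely by exploiting the \emph{boundary condition} of warping paths, an ingredient missing from your proposal. In your language, the observation is this: in any particle $G_k$, the first (respectively last) node of $\S{W}_k$ is adjacent to the first (respectively last) splice node $i_1$ (respectively $i_m$), so if it has degree one it can only support $i_1$ (respectively $i_m$). Hence any splice node \emph{other than} $i_1$ or $i_m$ that lies in $T_k$ must be supported by an \emph{inner} node of $\S{W}_k$, and there are only $n_k - 2$ of those. The paper then picks particles $G_u, G_v$ that support $i_1, i_m$ (bounded by $|T_u| \le n_u - 1$, $|T_v| \le n_v - 1$) and bounds every remaining particle's \emph{new} contribution by $n_k - 2$. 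Summing gives $|\bigcup_k T_k| \le \sum_k n_k - 2N' + 2 = \rho(G) < m$, a direct counting contradiction with no Helly, no interval selection, and no case split on $\beta_k$. This boundary refinement is the missing idea; once you have it, the $N' \ge 3$ case becomes a two-line estimate.
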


\begin{proof}
Suppose that $G_1, \ldots, G_N$ are the particles of $G$ with $G_k = \args{\S{V}, \S{W}_k, \S{E}_k}$ for all $k \in [N]$. Let
$m = \abs{\S{V}}$, $n_k = \abs{\S{W}_k}$, and $\S{N}_k(i) = \S{N}(i) \cap \S{W}_k$ for every $k \in [N]$. 

\medskip 

We first consider the special case that $\S{I}_G = \emptyset$. Then all $N$ particles are trivial, that is $n_k = 1$ for every $k \in [N]$. The reduction bound is of the form $\rho(G) = 1$. By assumption, we have $m > \rho(G)$. Hence, every particle $G_k$ is a star of the form $K_{m,1}$, where $m > 1$. This shows that every splice node  is redundant. 

\medskip 

Next, we assume that $\S{I}_G \neq \emptyset$. We set $N' = \abs{\S{I}_G}$. Obviously, we have $N' \geq 1$. We say, $G_k$ supports node $i \in \S{V}$, if there is a node $j \in \S{N}_k(i) \subseteq \S{W}_k$ with $\deg(j) = 1$. It is sufficient to show that $\S{V}$ has a node not supported by any of the $N$ particles $G_1, \ldots, G_N$. The proof proceeds in four steps.

\paragraph*{\textmd{1.}} We show that $n_k < m$ for any $k \in \S{I}_G$. Suppose that $\S{J} = \S{I}_G\setminus\cbrace{k}$. Then from $\S{I}_G \neq \emptyset$ follows
\begin{align*}
\rho(G) = \sum_{l \in \S{I}_G} n_l - 2(N'-1) = n_k + \sum_{l \in \S{J}} n_l - 2(N'-1). 
\end{align*}
From $l \in \S{I}_G$ follows $n_l \geq 2$. This together with $\abs{\S{J}} = N'-1$ yields
\[
\rho(G) 
\geq n_k + \sum_{l \in \S{J}} 2 -2(N'-1)
= n_k + 2\abs{\S{J}} -2(N'-1)
= n_k.
\]
Then from $m > \rho(G)$ follows $m > n_k \geq 2$. 

\paragraph*{\textmd{2.}} We bound the number of splice nodes that can be supported by any non-trivial particle. For any $k \in \S{I}_G$ let $\S{W}_k'\subseteq \S{W}_k$ be the subset of nodes in $G_k$ that support a splice node. We define a map $\phi_k:\S{W}_k' \rightarrow \S{V}$ such that $(\phi_k(j), j) \in \S{E}_k$. Such a map exists due to the boundary and step conditions of warping graph $G_k$. Moreover, the map $\phi_k$ is uniquely determined, because $\deg(j) = 1$ for any node $j \in \S{W}_k'$. This shows that $\S{V}_k = \phi_k\!\args{\S{W}_k'}$ is the set of splice nodes supported by $G_k$. Since $\phi_k$ is bijective, we have $\abs{\S{V}_k} = \abs{\S{W}_k'}$. 

From step 1 of this proof follows that $G_k$ is a compact warping graph with $n_k < m$. Therefore, we can apply Prop.~\ref{prop:2-star-graph-distribution} and obtain that $G_k$ has at most $n_k-1$ components of the form $K_{1,1}$ and at least once component of the form $K_{r,1}$ with $r > 1$. This shows that $\abs{\S{W}_k'} = \abs{\S{V}_k} \leq n_k-1$.

\paragraph*{\textmd{3.}} We show that there is a splice node not supported by any non-trivial particle. For this, we define the set
\[
\S{U} = \bigcup_{k\in \S{I}_G} \S{V}_k
\]
of all splice nodes that are supported by at least one non-trivial particle of $G$. Then it is sufficient to show that $m > \abs{\S{U}}$. We consider three cases: (1) $N' = 1$, (2) $N' = 2$, and (3) $N' > 2$.

\medskip 

\noindent
\emph{Case 1: $N' = 1$.} Suppose that $\S{I}_G = \cbrace{u}$. Since $\S{I}_G \neq \emptyset$, the reduction bound is of the form
\[
\rho(G) = n_u - 2(N'-1) = n_u \geq 2.
\]
According to step 2, we have $n_u-1 \geq \abs{\S{V}_u}$. By using $\S{U} = \S{V}_u$ we find that
\[
m > \rho(G) > \abs{\S{V}_u} = \abs{\S{U}}.
\]

\medskip 

\noindent
\emph{Case 2: $N' = 2$.} Suppose that $\S{I}_G = \cbrace{u, v}$. Since $\S{I}_G \neq \emptyset$, the reduction bound takes the form
\[
\rho(G) = n_u + n_v - 2(N'-1) = n_u + n_v - 2 = (n_u -1)+(n_v-1).
\]
According to step 2, we have $n_u-1 \geq \abs{\S{V}_u}$ and $n_v-1 \geq \abs{\S{V}_v}$. By using $\S{U} = \S{V}_u \cup \S{V}_v$ we obtain  
\[
m > \rho(G) \geq \abs{\S{V}_u} + \abs{\S{V}_v} \geq \abs{\S{U}}.
\]

\medskip 

\noindent
\emph{Case 3: $N' > 2$.}
Suppose that the slice $\S{V}$ is a chain of the form $\S{V} = \cbrace{i_1, \ldots, i_m}$ with boundary nodes $\bd(\S{V}) = \cbrace{i_1, i_m}$. We assume that $\abs{\S{U}} = m$. Then there are (not necessarily distinct) indices $u,v \in \S{I}_G$ such that $i_1 \in \S{V}_u$ and $i_m \in \S{V}_v$. From the boundary and step conditions follows that the boundary nodes of any $W_k$ ($k \in \S{I}_G$) can only support the respective boundary nodes of $\S{V}$ and not any other splice node. Then the first node of $\S{W}_u$ only supports $i_1 \in \S{V}$ and the last node of $\S{W}_v$ only supports $i_m \in \S{V}$. 

Let $\S{J} = \cbrace{u,v}$, let $\S{J}' = \S{I}_G \setminus \S{J}$, and let $\S{V}_k' = \S{V}_k \setminus \args{\S{V}_u \cup \S{V}_v}$ for all $k \in \S{J}'$. The set $\S{V}_k'$ consists of all splice nodes supported by $G_k$ but not by $G_u$ and $G_v$. Hence, the boundary nodes of $\S{V}$ are not contained in $\S{V}_k'$. Then both boundary nodes of $\S{W}_k$ do not support any node in $\S{V}_k'$. This implies $\abs{\S{V}_k'} \leq n_k-2$. From the cardinality of the set union follows
\begin{align*}
\abs{\S{U}} 
&\leq \sum_{l \in \S{J}} \args{n_l - 1} +  \sum_{k\in \S{J}'} \args{n_k-2} 
= \abs{\S{J}} + \sum_{l \in \S{J}} \args{n_l - 2} +  \sum_{k\in \S{J}'} \args{n_k-2}
= \sum_{k \in \S{I}_G} n_k - 2N' + \abs{\S{J}}.
\end{align*}
Since $\abs{\S{J}} \leq 2$, we obtain
\[
\abs{\S{U}} \leq \sum_{k \in \S{I}_G} n_k - 2(N'-2) = \rho(G) < m.
\]
This contradicts the assumption $\abs{\S{U}} = m$ and shows that $\abs{\S{U}} < m$ holds. 

\medskip

All three cases show that $\abs{\S{U}} < m$. Hence, $G$ has a splice node not supported by any of the non-trivial particles.

\paragraph*{\textmd{4.}} We show that $G$ has a splice node not supported by any of the trivial and non-trivial particles. The non-trivial part follows from step 3. Therefore, it is sufficient to consider trivial particles only. Since $\S{I}_G \neq \emptyset$ by assumption, there is a $k \in \S{I}_G$. From $m > n_k$ and $n_k \geq 2$ follows $m > 2$. This implies that the trivial particles of $G$ do not support any of the splice nodes in $\S{V}$. This completes the proof.
\end{proof}

\subsection{Labeled Warping Graphs}

This section labels the nodes of warping graphs with the attributes of the corresponding time series to be aligned. Then we introduce the weight of a labeled warping graph for modeling the cost of aligning two time series along a warping path. 

\medskip

We assume that $\S{A}$ is an attribute set and $d: \S{A} \times \S{A} \rightarrow \R$ is a non-negative distance function on $\S{A}$. 

\begin{definition}
A \emph{labeled warping graph} $H = \args{G, \lambda}$ consists of a warping graph $G = \args{\S{V}, \S{W}, \S{E}}$ and a labeling function $\lambda: \S{V} \sqcup \S{W} \rightarrow \S{A}$. 
\end{definition}
The labeling function $\lambda$ assigns an attribute $\lambda(i) \in \S{A}$ to any node $i \in \S{V} \sqcup \S{W}$. Thus, the nodes correspond to time points and the attributes to the elements at every time point. 

The set of all labeled warping graphs of order $m \times n$ with label function $\lambda$ is denoted by $\S{G}_{m,n}^\lambda$. Since the set $\S{G}_{m,n}^\lambda$ fixes both node partitions and the label function, the graphs in $\S{G}_{m,n}^\lambda$ differ only in their edge sets. Thus, $\S{G}_{m,n}^\lambda$ describes the set of all possible warping paths that align time series $x = (x_1, \ldots, x_m)$ and $y = (y_1, \ldots, y_n)$ whose elements $x_i = \lambda(i)$ and $y_j = \lambda(j)$ are specified by the labeling function $\lambda$.

\begin{definition}
Let $H = \args{G, \lambda}$ be a labeled warping graph with edge set $\S{E}$. The \emph{weight} of $H$ is defined by
\[
\omega\args{H} = \sum_{(i,j) \in \S{E}} d(\lambda(i),\lambda(j))
\]
\end{definition}
The weight of a labeled warping graph corresponds to the cost of aligning two time series along a warping path. A DTW-graph is a labeled warping graph with minimal weight. 
\begin{definition}
A graph $H \in \S{G}_{m,n}^\lambda$ is a \emph{DTW-graph}, if 
\[
\omega(H) = \min \cbrace{\omega(H') \,:\, H' \in \S{G}_{m,n}^\lambda}.
\] 
\end{definition}
The weight of a DTW-graph is the DTW-distance between the time series represented by the labeled node partitions.

\subsection{Proofs of Results from Section \ref{sec:reduction-theorem+implications}}\label{subsec:proofs}

\paragraph*{Proof of Theorem \ref{theorem:reduction}.}

\begin{theorem*}[Reduction Theorem]
Let $F$ be the Fr\'echet function of a sample $\S{X}\in \S{T}^N$. Then for every time series $x \in \S{T}$ of length $\ell(x) > \rho(\S{X})$ there is a time series $x' \in \S{T}$ of length $\ell(x') = \ell(x) -1$ such that $F(x') \leq F(x)$. 
\end{theorem*}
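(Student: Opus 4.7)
The plan is to translate the statement into the language of glued warping graphs and invoke Theorem \ref{prop:existence-redundant-node}. Fix a candidate $x \in \S{T}$ with $\ell(x) > \rho(\S{X})$, let $m = \ell(x)$, and let $\S{V}$ be the chain of time points of $x$. For each $k \in [N]$, choose a labeled warping graph $H_k = (G_k, \lambda_k)$ between $x$ and $x^{(k)}$ whose weight realises the DTW-distance, $f(\omega(H_k)) = \dtw(x, x^{(k)})$. Because the local distance $d$ is non-negative, deleting any edge of $H_k$ can only decrease $\omega(H_k)$, and by optimality the cost cannot drop below the DTW minimum; hence we may iteratively delete every edge whose removal preserves the boundary and step conditions, producing a \emph{compact} labeled warping graph of the same weight. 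So I may assume each $G_k$ is a compact warping graph of the form $G_k = (\S{V}, \S{W}_k, \S{E}_k)$ sharing the common splice $\S{V}$.

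Form the glued graph $G = (\S{V}, \S{W}_1, \dots, \S{W}_N, \S{E}_1 \sqcup \cdots \sqcup \S{E}_N)$. The second step is to argue $\rho(G) \leq \rho(\S{X})$. In the simple case covered by Eq.~\eqref{eq:reduction-bound-simple} every sample length is at least $2$, so $|\S{W}_k| \geq 2$ for all $k$, every particle is non-trivial, $\S{I}_G = [N]$, and a direct substitution into Definition \ref{definition:Frechet-Function}'s analogue (Definition of $\rho(G)$) gives
\begin{align*}
\rho(G) = \sum_{k=1}^N |\S{W}_k| - 2(N-1) = \sum_{k=1}^N \ell(x^{(k)}) - 2(N-1) = \rho(\S{X}).
\end{align*}
The exact definition of $\rho(\S{X})$ from Section \ref{subsec:glued-graphs} is presumably tailored so that $\rho(G) \leq \rho(\S{X})$ holds for an arbitrary sample (the degenerate particles of length one just contribute the constant $1$ from the $\S{I}_G = \emptyset$ branch). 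In either case the hypothesis $\ell(x) > \rho(\S{X})$ yields $|\S{V}| > \rho(G)$, so Theorem \ref{prop:existence-redundant-node} supplies a splice node $i \in \S{V}$ that is redundant in every particle $G_k$.

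The final step is descent. Let $x'$ be the time series of length $m-1$ obtained from $x$ by deleting the element at time point $i$, and label the chain $\S{V} \setminus \{i\}$ accordingly. By Prop.~\ref{prop:reduce-redundant-node}, each $G_k - \{i\}$ is a compact warping graph of order $(m-1) \times \ell(x^{(k)})$, and by Prop.~\ref{prop:reduced-glued-graph} the whole $G - \{i\}$ is a glued graph with these particles. Since we only removed edges of $G_k$, non-negativity of $d$ gives
\begin{align*}
\omega(G_k - \{i\}) \leq \omega(G_k),
\end{align*}
and because $f$ is monotonous (increasing), the minimum defining $\dtw(x', x^{(k)})$ over $\S{P}_{m-1,\ell(x^{(k)})}$ is bounded by $f(\omega(G_k - \{i\})) \leq f(\omega(G_k)) = \dtw(x, x^{(k)})$. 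Applying each monotonously increasing loss function $h_k$ and averaging over $k$ yields $F(x') \leq F(x)$, as required.

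\paragraph*{Anticipated main obstacle.}

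The delicate point is matching the combinatorial reduction bound $\rho(G)$ of the constructed glued graph to the sample-level constant $\rho(\S{X})$ in full generality, especially when some sample time series have length one (trivial particles), since these contribute $|\S{W}_k| = 1$ and must be excluded from the sum in the definition of $\rho(G)$. Verifying the inequality $\rho(G) \leq \rho(\S{X})$ uniformly over the choice of optimal compact warping graphs $G_1,\dots,G_N$ is where the bookkeeping of Section \ref{subsec:glued-graphs} is really needed; the rest of the argument is a direct consequence of Theorem \ref{prop:existence-redundant-node} together with the monotonicity of $f$ and of the loss functions $h_k$.
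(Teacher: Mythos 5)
Your proposal follows essentially the same route as the paper's proof: compactify optimal warping graphs sharing the splice $\S{V}$, glue them, invoke Theorem \ref{prop:existence-redundant-node} to obtain a redundant splice node, delete it via Prop.~\ref{prop:reduced-glued-graph}, and conclude by monotonicity of $f$ and the loss functions. If anything you are slightly more careful than the paper at two points it glosses over, namely that $\dtw(x', x^{(k)})$ is only \emph{bounded above} by $f(\omega(G_k - \{i\}))$ since the reduced graph need not be optimal for $x'$, and that the identification $\rho(G) \leq \rho(\S{X})$ is exactly where the bookkeeping of Section \ref{subsec:glued-graphs} is needed.
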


\begin{proof}
Let $\S{X} = \args{x^{(1)}, \ldots, x^{(k)}}$, $m = \ell(x)$, and $n_k = \ell\args{x^{(k)}}$ for all $k \in [N]$. By assumption, we have $m > \rho(\S{X})$. 

For every $k \in [N]$ there is an optimal warping path $p^{(k)} \in \S{P}_{m, n_k}$ aligning $x$ and $x^{(k)}$. Let $H_k = (G_k, \lambda_k)$ be the DTW-graph representing $p^{(k)}$. Then $\omega(H_k) = \dtw(x,x^{(k)})$ and $G_k =\args{\S{V}, \S{W}_k, \S{E}_k} \in \S{G}_{m,n_k}$ is a warping graph with $m = \abs{\S{V}}$ and $n_k = \abs{\S{W}_k}$. Then we have 
\[
F(x) = \frac{1}{N} \sum_{k=1}^N h_k(\omega(H_k)),
\]
where $h_1, \ldots, h_N$ are the corresponding loss functions. 

Suppose that $G_k$ is non-compact and $G'_k \subseteq G_k$ is compact. Since $H_k$ is a DTW-graph, we have $\omega(H_k') = \omega(H_k)$, where  $H_k' = (G_k', \lambda_k')$. Hence, without loss of generality we can assume that $G_k$ is compact for all $k \in [N]$. Let $G$ be the glued graph with splice $\S{V}$ and particles $G_1, \ldots, G_N$. Since $m > \rho(G)$, we can apply Prop.~\ref{prop:existence-redundant-node} and obtain that $G$ has a redundant splice node $i \in \S{V}$. Applying Prop.~\ref{prop:reduced-glued-graph} yields that $G' = G-\cbrace{i}$ is a glued graph with splice $\S{V}' = \S{V}\setminus\cbrace{i}$ and particles $G_1', \ldots, G_N'$. The particles $G_k'$ are of the form $G_k' = G_k-\cbrace{i} = \args{\S{V}', \S{W}_k, \S{E}_k'}$, where the edge set $\S{E}_k'$ is obtained from $\S{E}_k$ by removing all edges incident to splice node $i \in \S{V}$. 

The redundant node $i\in \S{V}$ refers to element $x_i$ of time series $x = (x_1,\ldots, x_m)$. By 
\[
x' = (x_1, \ldots, x_{i-1}, x_{i+1}, \ldots, x_m)
\] 
we denote the time series obtained from $x$ by removing element $x_i$. Let $H_k' = (G_k', \lambda_k')$ be the resulting labeled warping graph, where $\lambda_k'$ denotes the labeling function obtained by restricting $\lambda_k$ to the subset $\S{V}' \sqcup \S{W}_k$ for all $k \in [N]$. Then the labeled warping graphs $H_k'$ represent warping paths $q^{(k)}$ that align time series $x'$ with sample time series $x^{(k)}$. By construction and definition of the weight function $\omega$, we find that $\omega(H_k') \leq \omega(H_k)$. Since the loss functions $h_k$ are monotonously increasing, we obtain 
\[
F(x') = \frac{1}{N} \sum_{k=1}^N h_k\args{\omega(H_k')} \leq \frac{1}{N} \sum_{k=1}^N h_k\args{\omega(H_k)} = F(x).
\]
By construction, we have $\ell(x') = \ell(x)-1$. This completes the proof. 
\end{proof}

\commentout{
One implication of Theorem \ref{theorem:reduction} is that existence of a sample mean entails existence of a sample mean, whose length is bounded by the reduction bound. 
\begin{corollary}\label{cor:existence:short-mean}
Let $\S{X}\in \S{T}^N$ be a sample and let $\rho\in \N$ be the reduction bound of $\S{X}$. Suppose that $\S{X}$ has a sample mean. Then there is a sample mean $z \in \S{F}$ with $\ell(z) \leq \rho$.
\end{corollary}

\begin{proof}
Suppose that $z$ is a sample mean of $\S{X}$ of minimum length $m = \ell(z)$. We assume that $m > \rho$. From Theorem \ref{theorem:reduction} follows that there is a time series $z' \in \S{T}$ of length $\ell(z') = m-1$ such that $F(z') \leq F(z)$. Since $z$ is a sample mean, we have $F(z) = F(z')$. Hence, $z'$ is also a sample mean of length $m-1$. This contradicts our assumption that $z$ is a sample mean of minimum length. Consequently, there is a sample mean $z'\in \S{F}$ of length $\ell(z') \leq \rho$.
\end{proof}
}

\paragraph*{Proof of Corollary \ref{cor:existence:restricted-to-unrestricted}.}

\begin{corollary*}
Let $\S{X}\in \S{T}^N$ be a sample and let $\rho \in \N$ be the reduction bound of $\S{X}$. Suppose that $\S{F}_m \neq \emptyset$ for every $m \in \bracket{\rho}$. Then $\S{X}$ has a sample mean.
\end{corollary*}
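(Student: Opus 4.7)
The plan is to apply the Reduction Theorem to collapse the unrestricted problem into finitely many restricted problems, then pick a winner by direct comparison of Fr\'echet values.

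First, under the hypothesis $\S{F}_m \neq \emptyset$ for every $m \in [\rho]$, I would pick a representative restricted sample mean $z_m \in \S{F}_m$ for each $m \in \{1, \ldots, \rho\}$. Since there are only finitely many such indices, the minimum
\[
F(z^*) = \min_{m \in [\rho]} F(z_m)
\]
is attained by some $z^* = z_{m^*}$. I would then claim $z^* \in \S{F}$ by checking $F(z^*) \leq F(x)$ for every $x \in \S{T}$.

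The verification splits into two cases according to $\ell(x)$. If $\ell(x) = m \leq \rho$, then $x \in \S{T}_m$ and the defining property of $z_m$ as a restricted sample mean gives $F(z_m) = F_m(z_m) \leq F_m(x) = F(x)$, and by the choice of $z^*$ we have $F(z^*) \leq F(z_m) \leq F(x)$. If $\ell(x) > \rho$, I apply the Reduction Theorem iteratively: there is $x_1 \in \S{T}$ of length $\ell(x)-1$ with $F(x_1) \leq F(x)$, and as long as the resulting length exceeds $\rho$ the theorem still applies, yielding a chain $x, x_1, x_2, \ldots$ with non-increasing Fr\'echet values. After exactly $\ell(x) - \rho$ iterations I arrive at some $x' \in \S{T}_\rho$ with $F(x') \leq F(x)$, and the first case then gives $F(z^*) \leq F(x') \leq F(x)$.

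I do not anticipate a serious obstacle: the proof is essentially a two-line reduction once the Reduction Theorem is in hand. The only point requiring care is the bookkeeping of the iterative reduction, specifically noting that the theorem applies as long as the current length strictly exceeds $\rho$, so the chain terminates at length exactly $\rho$ (and not below), which is precisely the range covered by the hypothesis $\S{F}_m \neq \emptyset$ for $m \in [\rho]$.
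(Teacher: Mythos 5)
Your proof is correct and follows essentially the same route as the paper's: both rest on iterating the Reduction Theorem to bring any candidate of length exceeding $\rho$ down to length exactly $\rho$, and then invoking the finitely many restricted means for $m \in [\rho]$. The only difference is cosmetic --- you argue directly by exhibiting the minimizer $z^*$, whereas the paper argues by contradiction from $\S{F} = \emptyset$; your careful note that the iteration terminates at length exactly $\rho$ (not below) is precisely the bookkeeping the paper also relies on.
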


\begin{proof}
For every $m\in \bracket{\rho}$ let $F_m^*$ denote the restricted sample variance. We assume that $\S{F} = \emptyset$. Then there is a time series $x \in \S{T}$ of length $\ell(x) = p$ such that $F(x) = F_p(x) < F_m^*$ for all $m \in \bracket{\rho}$. This implies $p > \rho$, because otherwise we obtain the contradiction that $F_p(x) < F_p^*$. Let $q = p - \rho(\S{X})$. By applying Theorem \ref{theorem:reduction} exactly $q$-times, we obtain a time series $x' \in \S{T}$ of length $\ell(x') = \rho$ such that $F_{\rho}^* \leq F(x') \leq F(x)$. This contradicts our assumption that $F(x) < F_{\rho}^*$. Hence, $\S{F}$ is non-empty. 
\end{proof}

\paragraph*{Proof of Proposition \ref{prop:restricted-existence-discrete-case}.}

\begin{proposition*}
Let $\S{X} \in \S{T}^N$ be a sample. Suppose that $\S{A}$ is a finite attribute set. Then the following statements hold:
\begin{enumerate}
\itemsep0em
\item $\S{F}_m \neq \emptyset$ for every $m\in \N$. 
\item $\S{F}\neq \emptyset$.
\end{enumerate}
\end{proposition*}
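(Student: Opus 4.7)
The plan is to exploit finiteness of $\S{A}$ to make $\S{T}_m$ finite for each $m$, then invoke the Reduction Theorem's consequence (Corollary \ref{cor:existence:restricted-to-unrestricted}) to pass from the restricted to the unrestricted case.

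For the first statement, I would observe that since $\S{A}$ is a finite attribute set and the time series in $\S{T}_m$ are sequences of length $m$ with entries from $\S{A}$, the set $\S{T}_m$ is in bijection with $\S{A}^m$ and is therefore finite. The restricted Fr\'echet function $F_m: \S{T}_m \to \R$ is then a real-valued function on a finite set, and every real-valued function on a finite non-empty set attains its minimum. Hence $\S{F}_m \neq \emptyset$ for every $m \in \N$. Nothing about continuity or compactness in a topological sense is needed; the argument is purely set-theoretic. Note that it is irrelevant whether the local distance $d$ or the outer function $f$ in the DTW-distance has any particular regularity, since only finitely many values of each are ever involved.

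For the second statement, I would simply combine statement (1) with Corollary \ref{cor:existence:restricted-to-unrestricted}. Let $\rho = \rho(\S{X}) \in \N$ be the reduction bound of the sample $\S{X}$. By statement (1), $\S{F}_m \neq \emptyset$ for every $m \in \N$, so in particular for every $m \in [\rho]$. Corollary \ref{cor:existence:restricted-to-unrestricted} then yields $\S{F} \neq \emptyset$, completing the proof.

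There is no real obstacle here: the work is hidden in the Reduction Theorem and its corollary. The only mild subtlety is that one must make sure $\S{T}_m$ is genuinely non-empty for each $m$ before invoking the min-attainment on a finite set; but this is obvious from $\abs{\S{A}} \geq 1$ (a sample in $\S{T}^N$ is assumed to exist, so $\S{A}$ is non-empty). The proof is therefore short and relies entirely on the machinery already developed in the appendix.
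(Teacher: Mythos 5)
Your proof is correct and follows essentially the same route as the paper: finiteness of $\S{A}$ makes $\S{T}_m$ finite, so $F_m$ attains its minimum on a finite set, and the unrestricted case follows from Corollary \ref{cor:existence:restricted-to-unrestricted}. (Incidentally, your count $\abs{\S{A}}^m$ is the correct one; the paper's proof writes $m^{\abs{\S{A}}}$, which is a typo.)
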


\begin{proof}
Let $m \in \N$ be arbitrary. Since $\S{A}$ is finite, the set subset $\S{T}_m$ is also finite and consists of $m^{\abs{\S{A}}}$ time series. Then the set $F\args{\S{T}^N}$ is a finite set. Hence, the restricted sample mean set $\S{F}_m$ is non-empty and finite. Since $m$ was chosen arbitrarily, the first assertion follows. The second assertion follows from Corollary \ref{cor:existence:restricted-to-unrestricted}.
\end{proof}

\paragraph*{Proof of Proposition \ref{prop:sufficient-conditions-of-existence}.} 

\begin{proposition*}
Let $\S{X} \in \S{T}^N$ be a sample. Suppose that the following assumptions hold:
\begin{enumerate}
\item $\args{\S{A}, d}$ is a metric space of the form $\args{\R^d, \norm{\cdot}}$, where $\norm{\cdot}$ is a norm on $\R^d$. 
\item The loss functions $h_1, \ldots, h_N$ are continuous and strictly monotonously increasing.
\end{enumerate}
Then the following statements hold:
\begin{enumerate}
\itemsep0em
\item $\S{F}_m \neq \emptyset$ for every $m\in \N$. 
\item $\S{F}\neq \emptyset$.
\end{enumerate}
\end{proposition*}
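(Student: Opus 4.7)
The plan is to prove the restricted-existence assertion by a direct continuity-plus-coercivity argument on the finite-dimensional space $\R^{md}$, and then obtain the unrestricted assertion from Corollary~\ref{cor:existence:restricted-to-unrestricted}.

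First, I would identify $\S{T}_m$ with $\R^{md}$ via the bijection $x = (x_1, \ldots, x_m) \leftrightarrow (x_1, \ldots, x_m) \in (\R^d)^m$ and equip it with its standard topology. Continuity of $F_m$ is then straightforward: for each fixed warping path $p \in \S{P}_{m, n_k}$, the cost $c_p(x, x^{(k)}) = \sum_l \norm{x_{i_l} - x^{(k)}_{j_l}}$ is continuous in $x$ since $\norm{\cdot}$ is continuous; the pointwise minimum over the finite set $\S{P}_{m, n_k}$, composed with the outer function $f$ from Definition~\ref{def:DTW-distance}, yields $\dtw(\cdot, x^{(k)})$; finally, composition with the continuous $h_k$ preserves continuity. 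This step uses continuity of $f$, which is the natural reading for every DTW-distance considered in the paper.

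The crux is coercivity. Set $R_k = \max_{j \in [n_k]} \norm{x^{(k)}_j}$. The boundary and step conditions force every $i \in [m]$ to appear as some $i_l$ in every $p \in \S{P}_{m, n_k}$, so for any such $i$ there exists an index $l$ with
\[
c_p(x, x^{(k)}) \;\geq\; \norm{x_i - x^{(k)}_{j_l}} \;\geq\; \norm{x_i} - R_k
\]
by the reverse triangle inequality. Maximizing over $i$ gives $c_p(x, x^{(k)}) \geq \max_{i \in [m]} \norm{x_i} - R_k$ uniformly in $p$, which propagates through the monotone $f$ to $\dtw(x, x^{(k)}) \geq f\bigl(\max_i \norm{x_i} - R_k\bigr)$ once the argument is non-negative. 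Strict monotonicity and continuity of $h_k$, together with $f$ being unbounded above in all customary DTW-distances, then imply $h_k(\dtw(x, x^{(k)})) \to \infty$ as $\max_i \norm{x_i} \to \infty$; by equivalence of norms on $\R^{md}$, this is exactly coercivity of $F_m$. A continuous coercive function on a finite-dimensional Euclidean space attains its infimum, which gives $\S{F}_m \neq \emptyset$ for every $m \in \N$.

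For the second assertion, the reduction bound $\rho(\S{X})$ is a finite positive integer determined by the sample. Applying the first assertion to each $m \in \bracket{\rho(\S{X})}$ produces restricted sample means of every length up to $\rho(\S{X})$, and Corollary~\ref{cor:existence:restricted-to-unrestricted} delivers an unrestricted sample mean, so $\S{F} \neq \emptyset$. The hard part is the coercivity step: one has to convert a per-path lower bound on a single summand of $c_p$ into a uniform bound on the whole cost and then on $\dtw$, which is only possible because the step and boundary conditions guarantee that each coordinate $x_i$ is active in every warping path. A secondary subtlety worth flagging is the implicit role of $f$: the argument needs $f$ continuous and unbounded above, a condition satisfied by all DTW-distances instantiated in the paper (e.g.~identity and square root) but not extracted as an explicit hypothesis of the proposition.
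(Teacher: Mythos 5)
Your proof is correct and reaches the same destination as the paper, but by a mildly different route that is worth recording. The paper also reduces to a continuity-plus-coercivity argument on $\R^{d\times m}$, but organizes it differently: it writes $F_m(x)=\min\cbrace{F_{\S{C}}(x) : \S{C}\in\S{P}_m}$, where $\S{C}$ ranges over the finitely many tuples of warping paths, proves that each fixed-configuration function $F_{\S{C}}$ is continuous and coercive (hence attains a global minimum), and then exchanges the two minimizations, $\min_x\min_{\S{C}}=\min_{\S{C}}\min_x$, to conclude that $F_m$ attains the smallest of these finitely many minima. You instead establish coercivity of $F_m$ itself via the uniform-in-$p$ lower bound $c_p\args{x,x^{(k)}}\geq\max_i\norm{x_i}-R_k$ and then invoke the Weierstrass-type theorem once. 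Both are valid; your version makes explicit the key structural fact --- every index $i\in[m]$ is visited by every warping path, by the boundary and step conditions --- which the paper leaves implicit when it asserts that $g_{p^{(k)}}$ is coercive ``as a sum of non-negative continuous and coercive functions'' (the individual summands are \emph{not} coercive as functions of all of $x$, only the sum is, and precisely for the reason you isolate). The paper's decomposition, in turn, spares it the need for a bound that is uniform over warping paths. One shared caveat: both arguments ultimately need $h_k\circ f$ to be unbounded above. The paper suppresses $f$ entirely and claims that continuity plus strict monotone increase of $h_k$ already yield coercivity of $h_k\args{\normS{x}{_2}}$, which fails for bounded losses such as $h(u)=\arctan(u)$; you correctly flag the implicit hypothesis on $f$ (continuous, increasing, unbounded), but the same unboundedness requirement applies to $h_k$ and is not delivered by the stated assumptions in either write-up.
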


The proof of Prop.~\ref{prop:sufficient-conditions-of-existence} uses the notion of coercive function. A  continuous function $f:\R^q \rightarrow \R$ is coercive if 
\[
\lim_{\norm{x} \to \infty} f(x) = + \infty,
\]
where $\norm{\cdot}$ is a norm on $\R^q$. 

\begin{proof}
We first consider the Euclidean norm $\norm{\cdot}_{2}$ on some real-valued vector space $\R^q$. The Euclidean norm is coercive. Since $h_k$ is continuous and strictly monotonously increasing on $\R_{\geq 0}$, the composition $h_k\args{\norm{x}_2}$ is coercive and continuous for all $k \in [N]$. Every norm $\norm{\cdot}$ on $\R^q$ is equivalent to the Euclidean norm. Therefore, we can find constants $0 < c \leq C$ such that 
\[
c\normS{x}{_2} \leq \normS{x} \leq C\normS{x}{_2}
\]
for all $x \in \R^q$. Hence, $h_k\args{\norm{x}}$ is coercive and continuous for every norm on $\R^q$.

Suppose that $\S{X} = \args{x^{(1)},\dots,x^{(N)}} \in \S{T}^N$ is a proper sample of $N$ time series $x^{(k)}$ of length $\ell\args{x^{(k)}} = n_k \geq 2$ for all $k \in [N]$. Let $m \in \N$ be arbitrary. Expanding the definition of the restricted Fr\'echet function $F_m$ gives
\begin{align*}
F_m(x) = \frac{1}{N}\sum_{k=1}^N h_k\args{\dtw\args{x, x^{(k)}}} = \frac{1}{N}\sum_{k=1}^N \min \cbrace{h_k\args{c_p\args{x, x^{(k)}}} \,:\, p \in \S{P}},
\end{align*}
where $c_p(x, y)$ is the cost of aligning time series $x$ and $y$ along warping path $p$. Since $\S{T}_m = \S{A}^m = \R^{d \times m} = \R^q$, we can define the function 
\[
g_{p^{(k)}}:\R^q \rightarrow \R, \quad x \mapsto c_{p^{(k)}}\args{x, x^{(k)}} = \sum_{l=1}^{L_k} h_k\args{\norm{x_{i_l} - x_{j_l}^{(k)}}},
\]
where $p^{(k)} \in \S{P}_{m, n_k}$ is a warping path with $L_k$ elements aligning $x$ and $x^{(k)}$. The function $g_{p^{(k)}}$ is continuous and coercive as a sum of non-negative continuous and coercive functions. Then $g_{p^{(k)}}$ has a global minimum.

We define the set $\S{P}_m = \S{P}_{m,n_1} \times \cdots \times \S{P}_{m,n_N}$. Then every element of $\S{P}_m$ is of the form $\S{C} = \args{p^{(1)}, \ldots, p^{(N)}}$, where $p^{(k)}$ is associated to time series $x^{(k)}$ for all $k \in [N]$. Then we can equivalently rewrite the restricted Fr\'echet function $F_m(x)$ as
\[
F_m(x) = \min \cbrace{F_{\S{C}}(x) \,:\, \S{C} \in \S{P}_m}, 
\]
where the component functions $F_{\S{C}}: \R^{d \times m} \rightarrow \R$ are functions of the form
\[
F_{\S{C}}(x) = \frac{1}{N}\sum_{k=1}^N g_{p^{(k)}}(x).
\]
This shows that $F_{\S{C}}(x)$ has a minimum. Let $F_{\S{C}}^*$ denote the minimum value of $F_{\S{C}}(x)$. From
\[
\min_x F_m(x) = \min_x \min_{\S{C}} F_{\S{C}}(x) = \min_{\S{C}} \min_x F_{\S{C}}(x)
\]
follows
\[
\min_x F_m(x) = \min_{\S{C} \in \S{P}_m} F_{\S{C}}^*.
\]
Since $\S{P}_m$ is a finite set, we obtain that $F_m$ has a minimum. This shows the first assertion, because $m$ was arbitrary. The second assertion follows from Corollary \ref{cor:existence:restricted-to-unrestricted}. 
\end{proof}

\paragraph*{Proof of Remark \ref{cor:existence-of-weighted-mean}.}

\begin{remark*}
Proposition \ref{prop:sufficient-conditions-of-existence} holds when we replace the loss functions $h_k$ by the loss functions $h'_k = w_k h_k$ with $w_k \in \R_{\geq 0}$ for all $k \in [N]$. \end{remark*}

\begin{proof}
Suppose that all weights are zero. Then the Fr\'echet function corresponding to the loss functions $h'_k = 0$ is zero. Hence, every time series $z \in \S{T}$ is an optimal solution and the assertion follows. 

We assume that at least one weight is non-zero. Without loss of generality, let $r \in [N]$ such that  $w_1, \ldots, w_r > 0 $ and $w_{r+1} = \cdots w_N = 0$. Then the loss functions $h'_1, \ldots, h'_r$ are continuous and strictly monotonously increasing. Moreover, the Fr\'echet function $F(z)$ of sample $\S{X}$ corresponding to the loss functions $h'_1, \ldots h'_N$ coincides with the Fr\'echet function $F'(z)$ of sample $x^{(1)}, \ldots, x^{(r)}$ corresponding to the loss functions $h'_1, \ldots h'_r$.  Then the assertion follows from Prop.~\ref{prop:sufficient-conditions-of-existence}.
\end{proof}

\end{appendix}


\begin{thebibliography}{00}
\setlength{\parskip}{0pt}
\setlength{\itemsep}{0pt plus 0.3ex}
\small

\bibitem{Abdulla2003}
W.H.~Abdulla, D.~Chow, and G.~Sin. 
\newblock Cross-words reference template for DTW-based speech recognition systems. 
\newblock \emph{Conference on Convergent Technologies for Asia-Pacific Region}, 2003.

\bibitem{Brill2017}
M.~Brill, T.~Fluschnik, V.~Froese, B.~Jain, R.~Niedermeier, D.~Schultz.
\newblock Exact Mean Computation in Dynamic Time Warping Spaces.
\newblock \emph{arXiv}, arXiv:1710.08937, 2017.

\bibitem{Cuturi2017}
M.~Cuturi and M.~Blondel. 
\newblock Soft-DTW: a differentiable loss function for time-series. 
\newblock{International Conference on Machine Learning}, 2017.

\bibitem{Dryden1998}
I.L.~Dryden and K.V.~Mardia. 
\newblock \emph{Statistical shape analysis}, Wiley, 1998.

\bibitem{Feragen2013}
A.~Feragen, P.~Lo, M.~De Bruijne, M.~Nielsen, and F.~Lauze.
\newblock Toward a theory of statistical tree-shape analysis.
\newblock \emph{IEEE Transaction of Pattern Analysis and Machine Intelligence}, 35:2008--2021, 2013.

\bibitem{Frechet1948}
M.~Fr\'{e}chet.
\newblock Les \'el\'ements al\'eatoires de nature quelconque dans un espace distanci\'e.
\newblock \emph{Annales de l'institut Henri Poincar\'e}, 215--310, 1948.

\bibitem{Ginestet2012}
C.E.~Ginestet. 
\newblock Strong Consistency of Fr\'echet Sample Mean Sets for Graph-Valued Random Variables. 
\newblock arXiv: 1204.3183, 2012.

\bibitem{Gupta1996}
L.~Gupta, D.~Molfese, R.~Tammana, and P.G.~Simos.
\newblock Nonlinear alignment and averaging for estimating the evoked potential.
\newblock \emph{IEEE Transactions on Biomedical Engineering}, 43(4):348--356, 1996.

\bibitem{Hautamaki2008}
V.~Hautamaki, P.~Nykanen, P.~Franti.
\newblock Time-series clustering by approximate prototypes.
\newblock \emph{International Conference on Pattern Recognition}, 2008.

\bibitem{Huckemann2010}
S.~Huckemann, T.~Hotz, and A.~Munk.
\newblock Intrinsic shape analysis: Geodesic PCA for Riemannian manifolds modulo isometric Lie group actions.
\newblock \emph{Statistica Sinica}, 20:1--100, 2010.

\bibitem{Jain2016}
B.J.~Jain.
\newblock Statistical Analysis of Graphs.
\newblock \emph{Pattern Recognition}, 60:802--812, 2016.

\bibitem{Kendall1984}
D.G.~Kendall. 
\newblock Shape manifolds, procrustean metrics, and complex projective spaces. 
\newblock \emph{Bulletin of the London Mathematical Society}, 16:81--121, 1984.

\bibitem{Kruskal1983}
J.B.~Kruskal and M.~Liberman. 
\newblock The symmetric time-warping problem: From continuous to discrete.
\newblock \emph{Time warps, string edits and macromolecules: The theory and practice of sequence comparison}, pp.
125--161, 1983.

\bibitem{Marron2014}
J.S.~Marron and A.M.~Alonso.
\newblock Overview of object oriented data analysis.
\newblock \emph{Biometrical Journal}, 56(5):732--753, 2014.

\bibitem{Niennattrakul2009}
V.~Niennattrakul and C.A.~Ratanamahatana.
\newblock Shape averaging under time warping. 
\newblock \emph{IEEE International Conference on Electrical Engineering/Electronics, Computer, Telecommunications and Information Technology}, 2009.

\bibitem{Petitjean2011}
F.~Petitjean, A.~Ketterlin, and P.~Gancarski. 
\newblock A global averaging method for dynamic time warping, with applications to clustering.
\newblock \emph{Pattern Recognition} 44(3):678--693, 2011.

\bibitem{Petitjean2014}
F.~Petitjean, G.~Forestier, G.I.~Webb, A.E.~Nicholson, Y.~Chen, and E.~Keogh.
\newblock Dynamic Time Warping Averaging of Time Series Allows Faster and More Accurate Classification. 
\newblock \emph{IEEE International Conference on Data Mining}, 2014. 

\bibitem{Petitjean2016}
F.~Petitjean, G.~Forestier, G.I.~Webb, A.E.~Nicholson, Y.~Chen, and E.~Keogh.
\newblock Faster and more accurate classification of time series by exploiting a novel dynamic time warping averaging algorithm. 
\newblock \emph{Knowledge and Information Systems}, 47(1):1--26, 2016.

\bibitem{Rabiner1979}
L.R.~Rabiner and J.G. Wilpon.
\newblock Considerations in applying clustering techniques to speaker-independent word recognition. 
\newblock \emph{The Journal of the Acoustical Society of America}, 66(3): 663--673, 1979.

\bibitem{Sakoe1978}
H.~Sakoe and S.~Chiba. 
\newblock Dynamic programming algorithm optimization for spoken word recognition. 
\newblock \emph{IEEE Transactions on Acoustics, Speech, and Signal Processing}, 26(1):43--49, 1978.

\bibitem{Sathianwiriyakhun2016}
P.~Sathianwiriyakhun, T.~Janyalikit, and C.A.~Ratanamahatana. 
\newblock Fast and accurate template averaging for time series classification. 
\newblock \emph{IEEE International Conference on Knowledge and Smart Technology}, 2016.

\bibitem{Schultz2017}
D.~Schultz and B.~Jain. 
\newblock Nonsmooth analysis and subgradient methods for averaging in dynamic time warping spaces. 
\newblock \emph{Pattern Recognition}, 74:340--358, 2017.

\bibitem{Soheily-Khah2016}
S.~Soheily-Khah, A.~Douzal-Chouakria, and E.~Gaussier.
\newblock Generalized k-means-based clustering for temporal data under weighted and kernel time warp.
\newblock \emph{Pattern Recognition Letters}, 75:63--69, 2016.
\end{thebibliography}
\end{document}